\newtheorem{thm}{Theorem}[section]
\newtheorem{lem}{Lemma}[section]
\newtheorem{prop}{Proposition}[section]
\newtheorem{cor}{Corollary}[section]
\newtheorem{defn}{Definition}[section]
\newtheorem{rmk}{Remark}[section]
\newtheorem{ass}{Assumption}[section]
\newcommand{\RR}{\mathbb{R}}      % for Real numbers
\newcommand{\CC}{\mathbb{C}}      % for Real numbers
\newcommand{\NN}{\mathbb{N}}      % for Real numbers
\newcommand{\ZZ}{\mathbb{Z}}      % for Integers
\newcommand{\vecc}{\boldsymbol}
\begin{document}

\title{Understanding Recurrent Neural Networks Using Nonequilibrium Response Theory}

\author{\name Soon Hoe Lim 
 \email soon.hoe.lim@su.se \\
       \addr Nordita\\
       KTH Royal Institute of Technology and Stockholm University \\
       Stockholm 106 91, Sweden 
       }

\editor{Sayan Mukherjee}
      
\maketitle

\begin{abstract}
Recurrent neural networks (RNNs) are brain-inspired models widely used in machine learning for analyzing sequential data. The present work is a contribution towards a deeper understanding of how RNNs process input signals using the response theory from nonequilibrium statistical mechanics. For a class of continuous-time  stochastic RNNs (SRNNs) driven by an input signal,  we derive a Volterra type series representation for their output. This representation is interpretable and disentangles the input signal from the SRNN architecture. The kernels of the series are certain recursively defined correlation functions with respect to the unperturbed  dynamics that completely determine the output. Exploiting connections of this representation and its implications to rough paths theory, we identify a universal feature -- the {\it response feature}, which turns out to be the signature of tensor product of the input signal and a natural support basis. In particular, we show that SRNNs, with only the weights in the readout layer optimized and the weights in the hidden layer kept fixed and not optimized, can be viewed as kernel machines operating on a reproducing kernel Hilbert space associated with the response feature.\\

\noindent {\bf Keywords:} Recurrent Neural Networks, Nonequilibrium Response Theory, Volterra Series, Path Signature, Kernel Machines

\end{abstract}

\tableofcontents

\vspace{1cm}

\section{Introduction}
\label{sect_intro}

%intro, motivation, goals of paper, organization

Sequential data arise in a wide range of settings, from time series analysis to natural language processing. In the absence of a mathematical model, it is important to
extract useful information from the data to learn  the data generating system. 

Recurrent neural networks (RNNs) \citep{Hopfield3088,mcclelland1986parallel,elman1990finding} constitute a class of brain-inspired  models that are specially designed for and widely used for learning sequential data, in fields ranging from the physical sciences to finance.  RNNs are networks of neurons with feedback connections and are arguably  biologically more plausible than other adaptive models.  In particular, RNNs can use their hidden state
(memory) to process variable length sequences of inputs. They are universal approximators of dynamical systems \citep{funahashi1993approximation,schafer2006recurrent,hanson20a} and can themselves be viewed as a class of  open dynamical systems \citep{sherstinsky2020fundamentals}.

Despite their recent innovations  and tremendous empirical success in reservoir computing \citep{herbert2001echo,maass2002real,tanaka2019recent}, deep learning \citep{sutskever2013training,hochreiter1997long,Goodfellow-et-al-2016} and neurobiology \citep{barak2017recurrent}, few studies have focused on the theoretical basis underlying the working mechanism of RNNs. The lack of rigorous analysis limits the usefulness of RNNs in addressing scientific problems and potentially hinders systematic design of the next generation of networks. Therefore, a deep understanding of the mechanism is pivotal to shed light on the properties of large and adaptive architectures, and to revolutionize our understanding of these systems.

%A deep understanding of the mechanism is pivotal , both in the context of artificial and biological RNNs. 
In particular, two natural yet fundamental questions that one may ask are:\newpage
\begin{itemize}
\item[(Q1)] {\it How does the output produced by RNNs respond to a driving input signal over time?} 
%and changes in the parameters defining them?} 
\item[(Q2)] {\it Is there a universal mechanism underlying their response?}
\end{itemize}

One of the main goals of the present work is to address the above questions, using the nonlinear response theory from nonequilibrium statistical mechanics as a starting point, for a stochastic version of continuous-time RNNs \citep{pineda1987generalization,beer1995dynamics,zhang2014comprehensive}, abbreviated SRNNs, in which the hidden states are injected with a Gaussian white noise. Our approach is cross-disciplinary and adds refreshing perspectives to the existing theory of RNNs.

This paper is organized as follows. In Section \ref{sect_SRNN} we introduce our SRNN model, discuss the related work, and summarize our main contributions.  Section \ref{summary} contains some preliminaries and core ideas of the paper. There  we  derive one of the main results of the paper in an informal manner to aid understanding and to gain intution. We present a mathematical formulation of the main results and other results in Section \ref{sect_main}. We conclude the paper in Section \ref{concl}.  We postpone the technical details, proofs and further remarks to {\bf SM}.

\section{Stochastic Recurrent Neural Networks (SRNNs)} \label{sect_SRNN}
Throughout the paper, we fix a filtered probability space $(\Omega, \mathcal{F}, (\mathcal{F}_t)_{t \geq 0}, \mathbb{P})$, $\mathbb{E}$ denotes expectation with respect to $\mathbb{P}$ and $T>0$.  $C(E,F)$ denotes the Banach space of continuous mappings from $E$ to $F$, where $E$ and $F$ are Banach spaces.  $C_b(\RR^n)$ denotes the space of all bounded continuous functions on $\RR^n$.
$\NN := \{0,1,2,\dots\}$, $\ZZ_{+} := \{1,2,\dots\}$ and $\RR_{+}:= [0,\infty)$.  The superscript $^T$ denotes transposition and $^*$ denotes adjoint.

%We refer to Supplementary Material ({\bf SM}) for a complete list of notations (such as those for functional spaces) used here.

\subsection{Model}
We consider the following model for our SRNNs. By an {\it activation function}, we mean a real-valued function that is non-constant, Lipschitz continuous and bounded. Examples of activation function include sigmoid functions such as  hyperbolic tangent, commonly used in practice. 

\begin{defn} (Continuous-time SRNNs) \label{CRSRNN}
Let $t \in [0,T]$ and  $\vecc{u} \in  C([0,T],\RR^m)$ be a deterministic input signal. A continuous-time SRNN is described by the following state-space model:
\begin{align}
d\vecc{h}_t &= \vecc{\phi}(\vecc{h}_{t}, t) dt +   \vecc{\sigma} d\vecc{W}_t, \label{e1} \\
\vecc{y}_t &= \vecc{f}(\vecc{h}_t). \label{e2}
\end{align}
In the above, Eq. \eqref{e1} is a stochastic differential equation (SDE) for the hidden states $\vecc{h} = (\vecc{h}_t)_{t \in [0,T]}$, with the drift coefficient  $\vecc{\phi}: \RR^n \times [0,T] \to \RR^n$, noise coefficient $\vecc{\sigma} \in \RR^{n \times r}$, and  $\vecc{W} = (\vecc{W}_t)_{t \geq 0}$ is an $r$-dimensional Wiener process defined on $(\Omega, \mathcal{F},(\mathcal{F}_t)_{t\geq 0}, \mathbb{P})$, whereas Eq. \eqref{e2} defines an observable with $\vecc{f}: \RR^n \to \RR^p$ an activation function. 

We consider an input-affine\footnote{We refer to Theorem C.1 in  \citep{kidger2020neural} for a rigorous justification of considering input-affine continuous-time RNN models. See also Subsection \ref{discretize}, as well as Section IV in \citep{bengio1994learning} and the footnote on the first page of \citep{pascanu2013difficulty} for  discrete-time models.}  version of the SRNNs, in which:
\begin{align}
\vecc{\phi}(\vecc{h}_{t}, t)  = - \vecc{\Gamma} \vecc{h}_t +  \vecc{a}(\vecc{W}\vecc{h}_t + \vecc{b}) + \vecc{C} \vecc{u}_t, \label{e3}
\end{align}
where $\vecc{\Gamma} \in \RR^{n \times n}$ is positive stable, $\vecc{a}: \RR^n \to \RR^n$  is an activation function,  $\vecc{W} \in \RR^{n \times n}$ and $\vecc{b} \in \RR^n$ are constants, and  $\vecc{C} \in \RR^{n \times m}$ is a constant matrix that transforms the input signal.

% Assume $\vecc{u}$ is continuous and (deterministic here) has finite $p$-variation.
\end{defn}

%It is controlled by a  input signal $\vecc{u} = (\vecc{u}_t)_{t \in [0,T]}$ (more about  assumption on input  later).

From now on, we refer to SRNN as the system defined by \eqref{e1}-\eqref{e3}. The hidden states of a SRNN describe a nonautonomous stochastic dynamical system processing an input signal (c.f.       \citep{ganguli2008memory,dambre2012information,tino2020dynamical}). The constants $\vecc{\Gamma}, \vecc{W}, \vecc{b}, \vecc{C}, \vecc{\sigma}$ and the parameters (if any) in $\vecc{f}$ are the (learnable) parameters or weights defining the (architecture of) SRNN. For $T>0$, associated with the SRNN  is the output functional $F_T:C([0,T],\RR^m) \to \RR^p$ defined as the expectation (ensemble average) of the observable $\vecc{f}$:
\begin{equation}
F_T[\vecc{u}] := \mathbb{E} \vecc{f}(\vecc{h}_T),
\end{equation}
which will be of interest to us.

\subsection{Related Work} \label{discretize}

Our work is in line with the recently promoted approach of ``formulate first, then discretize" in machine learning.  Such approach is popularized in \citep{weinan2017proposal}, inspiring subsequent work \citep{haber2017stable,chen2018neural,rubanova2019latent,benning2019deep,ma2019machine}. Following the approach, here our SRNN model is formulated in the continuous time.

There are several benefits of adopting this approach. At the level of formulation, sampling from these RNNs gives the discrete-time RNNs, including randomized RNNs \citep{herbert2001echo, grigoryeva2018echo,gallicchio2020deep} and fully trained RNNs \citep{bengio2013advances,Goodfellow-et-al-2016}, commonly encountered in applications. More importantly, the continuous-time SDE formulation gives us a guided principle and  flexibility in designing RNN architectures,  in particular those that are capable of adapting to the nature of data (e.g., those irregularly sampled \citep{de2019gru,kidger2020neural,morrill2020neural}) on hand,  going beyond existing architectures. Recent work such as \citep{chang2019antisymmetricrnn,chen2019symplectic,niu2019recurrent,erichson2020lipschitz,rusch2020coupled}  exploits these benefits and designs novel recurrent architectures with desirable stability properties by appropriately discretizing ordinary differential equations. Moreover, in situations where the input data are generated by continuous-time dynamical systems, it is desirable to consider learning models which are also continuous in time. From theoretical analysis point of view, a rich set of tools and techniques  from the continuous-time theory can be borrowed to simplify analysis and to gain useful insights.

The noise injection can  be viewed as a regularization scheme or introducing noise in input data in the context of our SRNNs.  Generally, noise injection can be viewed as a stochastic learning strategy used to improve robustness of the learning model against data perturbations. We refer to, the demonstration of these benefits in, for instance,  \citep{jim1996analysis} for RNNs, \citep{sun2018stochastic} for deep residual networks and \citep{Liu2020} for Neural SDEs. On the other hand, both the continuous-time setting and noise injection are natural assumptions in modelling biological neural networks \citep{cessac2007neuron,touboul2008nonlinear,cessac2019linear}.  Our study here belongs to the paradigm of ``formulate first" (in continuous-time) and  in fact covers both artificial and biological RNNs.

We now discuss in detail an example of how sampling from SRNNs gives rise to a class of discrete-time RNNs. Consider the following SRNN:
\begin{align}
d\vecc{h}_t &= -\gamma \vecc{h}_t dt + \vecc{a}(\vecc{W}\vecc{h}_t  + \vecc{b}) dt + \vecc{u}_t dt + \sigma d\vecc{W}_t, \label{e_eg1} \\
\vecc{y}_t &= \vecc{f}(\vecc{h}_t), \label{e_eg2}
\end{align}
where $\gamma, \sigma > 0$ are constants.
The Euler-Mayurama approximations $(\hat{\vecc{h}})_{t = 0,1,\dots,T}$, with a uniform step size $\Delta t$, to the solution of the SDE \eqref{e_eg1}  are given by \citep{kloeden2013numerical}:
\begin{equation}
    \hat{\vecc{h}}_{t+1} = \alpha \hat{\vecc{h}}_t + \beta (\vecc{a}(\vecc{W} \hat{\vecc{h}}_t + \vecc{b}) +  \vecc{u}_t) + \theta \vecc{\xi}_t,   
\end{equation}
for $t = 0,1,\dots,T-1$, where $\hat{\vecc{h}}_0 = \vecc{h}_0$, $\alpha = 1-\gamma \Delta t$, $\beta = \Delta t$, $\theta = \sqrt{\Delta t} \sigma$ and the $\vecc{\xi}_t$ are i.i.d. standard Gaussian random variables.  In particular, setting $\gamma = \Delta t = 1$ gives:
\begin{equation}
    \hat{\vecc{h}}_{t+1} =    \vecc{a}(\vecc{W} \hat{\vecc{h}}_t + \vecc{b}) +  \vecc{u}_t + \sigma \vecc{\xi}_t.  
\end{equation}
Now, by taking $\hat{\vecc{h}}_t = (\vecc{x}_t, \vecc{z}_t)$, $\vecc{a}(\vecc{W} \hat{\vecc{h}}_t + \vecc{b}) = (\tanh(\vecc{W}'\vecc{x}_t + \vecc{U}' \vecc{z}_t + \vecc{b}'),\vecc{0})$ and $\vecc{u}_t = (\vecc{0}, \vecc{y}_t)$ for some matrices $\vecc{W}'$ and $\vecc{U}'$, some vector $\vecc{b}'$ and input data $(\vecc{y}_t)_{t=0,1,\dots,T-1}$, we have $\vecc{z}_t = \vecc{y}_t$ and 
\begin{equation}
    \vecc{x}_{t+1} = tanh(\vecc{W}'\vecc{x}_t + \vecc{U}' \vecc{y}_t + \vecc{b}') + \sigma \vecc{\xi}_t,
\end{equation}
which is precisely the update equation of a standard discrete-time RNN \citep{bengio1994learning} whose hidden states $\vecc{x}$ are injected by the Gaussian white noise $\sigma \vecc{\xi}$. Note that the above derivation also shows that discrete-time RNNs can be transformed into ones whose update equations are linear in input  by simply introducing additional state variables. In general, different numerical approximations (e.g., those with possibly adaptive step sizes \citep{fang2020adaptive}) to the SRNN hidden states give rise to  RNN  architectures with different properties. Motivated by the above considerations, we are going to introduce a class of noisy RNNs which are obtained by discretizing SDEs and study these benefits, both theoretically and experimentally, for our RNNs in a forthcoming work.

Lastly, we discuss some related work that touches upon  connections between (discrete-time) RNNs and kernel methods. Although connections between non-recurrent neural networks such as two-layer neural networks with random weights and kernel methods are well studied, there are few rigorous studies  connecting  RNNs  and kernel methods. We mention two recent studies here. In the context of reservoir computing, \citep{tino2020dynamical}  analyzes the similarity between reservoir computers and kernel machines and shed some insights on  which network topology gives rise to rich dynamical feature representations.  However, the analysis was  done for only linear non-noisy RNNs.  In the context of deep learning, it is  worth mentioning that \citep{alemohammad2020recurrent} kernelizes RNNs by introducing the Recurrent Neural Tangent Kernel (RNTK) to study the behavior of overparametrized RNNs during their training by gradient descent. The RNTK provides a rigorous connection between the inference performed by infinite width $(n=\infty)$ RNNs and that performed by kernel methods, suggesting that the performance of large RNNs can be replicated by kernel methods for properly chosen kernels. In particular, the derivation of RNTK at initialization is based on the correspondence between randomly initialized infinite width neural networks and Gaussian Processes \citep{neal1996priors}. Understanding the precise learning behavior of finite width RNNs  remains an open problem in the field.

\subsection{Main Contributions}
Our main contributions in this paper are in the following two directions:
\begin{itemize}
\item[(1)] We establish  the relationship between the output functional  of SRNNs and deterministic driving input signal using the response theory. In particular, we derive two series representations for the output functional of SRNNs and their deep version. The first one is a Volterra series representation with the kernels expressed as certain correlation (in time) functions that solely depend on the unperturbed dynamics of SRNNs (see Theorem \ref{thm1}). The second one is in terms of a series of iterated integrals of a transformed input signal (see Theorem \ref{thm2}).  These representations are interpretable and allow us to gain insights into the working mechanism of SRNNs.
\item[(2)] Building on our understanding in (1), we identify a universal feature, called the {\it response feature}, potentially useful for learning temporal series. This feature turns out to be the signature of tensor product of the input signal and a  vector whose components are orthogonal polynomials (see Theorem \ref{sigrep}). We then show that SRNNs, with only the weights in the readout layer optimized and the weights in the hidden layer kept fixed and not optimized, are essentially kernel machines operating in a reproducing kernel Hilbert space associated with the response feature (see Theorem \ref{rep}). This result characterizes precisely the idea that SRNNs with hidden-layer weights that are fixed and not optimized can be viewed as kernel methods. 
\end{itemize}

In short, we focus on studying representations for output functionals of SRNNs and relating SRNNs to certain kernel machines in this work.
To achieve (1) we develop and make rigorous nonlinear response theory for the SRNNs \eqref{e1}-\eqref{e3},  driven by a small deterministic input signal (see {\bf SM}). This  makes rigorous the results of existing works on response theory in the physics literature and also extend the recent rigorous work of \citep{chen2020mathematical} beyond the linear response regime, which may be of independent interest.

For simplicity we have chosen to work with the SRNNs under a set of rather restricted but reasonable assumptions, i.e., Assumption \ref{imp_ass}, in this paper. Also, one could possibly work in the more general setting where the driving input  signal is a rough path \citep{lyons2002system}. Relaxing the assumptions here and working in the more general setting come at a higher cost of technicality and risk burying intuitions, which we avoid in the present work.

\section{Nonequilibrium Response Theory of SRNNs}
\label{summary}

\subsection{Preliminaries and Notation}

%coefficients in the hidden state equations are analytic (see Theorem ? for justification).

%$\| \cdot \|$ means...\\
%Einstein's notation

In this subsection we briefly recall preliminaries on Markov processes \citep{karatzas1998brownian,pavliotis2014stochastic} and introduce some of our notations. 

Let $t \in [0,T]$, $\gamma(t) := |\vecc{C} \vecc{u}_t| > 0$  and $\vecc{U}_t := \vecc{C}\vecc{u}_t/|\vecc{C} \vecc{u}_t|$ be a normalized  input signal (i.e., $|\vecc{U}_t| = 1$). In the SRNN \eqref{e1}-\eqref{e3},  we consider the  signal $\vecc{C} \vecc{u}  = (\vecc{C} \vecc{u}_t)_{t \in [0,T]}$ to be a perturbation with small amplitude $\gamma(t)$ driving the SDE:
\begin{align}
d\vecc{h}_t &=  \vecc{\phi}(\vecc{h}_t,t) dt  +   \vecc{\sigma} d\vecc{W}_t.
\end{align}
The unperturbed SDE is the system with $\vecc{C} \vecc{u}$ set to zero: 
\begin{align}
d\overline{\vecc{h}}_t &=  \overline{\vecc{\phi}}(\overline{\vecc{h}}_t) dt +   \vecc{\sigma} d\vecc{W}_t.
\end{align}
In the above, $\overline{\vecc{\phi}}(\vecc{h}_t) = -\vecc{\Gamma} \vecc{h}_t +  \vecc{a}(\vecc{W}\vecc{h}_t  + \vecc{b})$ and $\vecc{\phi}(\vecc{h}_t,t) = \overline{\vecc{\phi}}(\vecc{h}_t) + \gamma(t)  \vecc{U}_t$.  The process $\vecc{h}$ is a perturbation of the time-homogeneous Markov process $\overline{\vecc{h}}$, which is not necessarily stationary.

%One can show that the solutions to these equation exist, are unique and nonexplosive under appropriate assumptions (see Supplementary Materials).

%In the following, we will fix $T>0$ and consider the dynamics of $\vecc{h}$ up to time $T$. 
%write a precise statement for these ass?

The diffusion process $\vecc{h}$ and $\overline{\vecc{h}}$ are associated with a family of infinitesimal generators $(\mathcal{L}_t)_{t \geq 0}$ and $\mathcal{L}^0$ respectively, which are second-order elliptic operators defined by:
\begin{align}
\mathcal{L}_t f &=  \phi^i(\vecc{h},t) \frac{\partial}{\partial h^i} f + \frac{1}{2} \Sigma^{ij}  \frac{\partial^2}{\partial h^i \partial h^j} f, \label{7} \\
\mathcal{L}^0 f &= \bar{\phi}^i(\overline{\vecc{h}}) \frac{\partial}{\partial \overline{h}^i} f + \frac{1}{2} \Sigma^{ij} \frac{\partial^2}{\partial \overline{h}^i \partial \overline{h}^j} f, \label{8}
\end{align}
for any observable $f \in C_b(\RR^n)$, where  $\vecc{\Sigma} := \vecc{\sigma}\vecc{\sigma}^T > 0$.
%Here we suppress dependence of $\vecc{a}$ on the parameters $\vecc{W}$ and $\vecc{b}$. 
We define the transition operator $(P_{s,t})_{s \in [0,t]}$ associated with $\vecc{h}$ as:
\begin{equation}
P_{s,t} f(\vecc{h}) = \mathbb{E}[f(\vecc{h}_t) | \vecc{h}_s = \vecc{h}] \label{srnn2},
\end{equation}
for  $f \in C_b(\RR^n)$, and similarly for the transition operator $(P^0_{s,t})_{s \in [0,t]}$ (which is a Markov semigroup) associated with $\overline{\vecc{h}}$. 

Moreover, one can define the  $L^2$-adjoint of the above generators and transition operators on the space of probability measures. We denote the adjoint generator associated to $\vecc{h}$ and $\overline{\vecc{h}}$ by $\mathcal{A}_{t}$ and $\mathcal{A}^0$ respectively, and the  adjoint transition operator  associated to $\vecc{h}$ and $\overline{\vecc{h}}$ by $(P^*_{s,t})_{s \in [0,t]}$ and $((P^0_{s,t})^*)_{s \in [0,t]}$ respectively. We assume that the initial measure and the law of the processes have a density with respect to Lebesgue measure. Denoting the initial density as $\rho(\vecc{h},t=0) = \rho_{init}(\vecc{h})$,   $\rho(\vecc{h},t) = P_{0,t}^* \rho_{init}(\vecc{h})$ satisfies a forward Kolmogorov equation (FKE) associated with $\mathcal{A}_t$.  

We take the natural assumption that both perturbed and unperturbed process have the same initial distribution $\rho_{init}$, which is generally not the invariant distribution $\rho_\infty$ of the unperturbed dynamics. %Throughout the paper, we work with the SRNN \eqref{e1}-\eqref{e3} under Assumption \ref{imp_ass}  in {\bf SM}.  

%Moreover, due to our assumptions, the Hormander's theorem is satisfied and Weyl's lemma implies that they have a smooth transition density. 

\subsection{Key Ideas and Formal Derivations}
\label{formal}

This subsection serves to provide a {\it formal} derivation of one of the core ideas of the paper in an explicit manner to aid understanding.   

First, we are going to derive a representation for the output functional  of  SRNN in terms of driving input signal.  Our approach is based on the response theory originated from nonequilibrium statistical mechanics. For a brief overview of this theory, we refer to Chapter 9 in \citep{pavliotis2014stochastic} (see also 
\citep{kubo1957statistical,peterson1967formal,hanggi1978stochastic,baiesi2013update}). In the following, we assume that any infinite series is well-defined and any interchange between  summations and integrals is justified.  
 
 %\citep{hammerTino03,uzes1978mechanical,ruelle2009review}. 
 
 %Later: \citep{chetrite2008fluctuation}
%\citep{baiesi2009fluctuations,basu2015nonequilibrium}. Formal approach beyond linear regime in 
%\citep{andrieux2007fluctuation,barbier2020microreversibility}.  Applications in \citep{linresp_focusissue,harlim2017parameter,majda2019linear}
%\citep{dechant2020fluctuation}. Rigorous work in \citep{dembo2010markovian}.
%Some work on exploiting stat mech related concepts are  \citep{chaudhari2018stochastic}. 

%We present a formal derivation and the heuristics here. It will spell out all the essential ideas of this paper in an explicit  way to aid understanding. 

%We will make rigorous all these in the next subsection. It also serves as a ``layman'' summary of the main results obtained in the next sections.

%For now, assume that the input signal is deterministic. Will comment on the random case later.

Fix a $T>0$. Let $\epsilon := \sup_{t \in [0,T]} |\gamma(t)| > 0$ be sufficiently small and\
\begin{equation} \tilde{\vecc{U}}_t := \frac{\vecc{C} \vecc{u}_t}{\sup_{t \in [0,T]}|\vecc{C} \vecc{u}_t|}.
\end{equation}

To begin with, note that   the FKE for the probability density $\rho(\vecc{h},t)$ is:
\begin{equation}
\frac{\partial \rho}{\partial t} = \mathcal{A}_t^\epsilon \rho, \ \ \ \rho(\vecc{h},0)=\rho_{init}(\vecc{h}),
\end{equation}
where $\mathcal{A}_t^\epsilon = \mathcal{A}^0 + \epsilon \mathcal{A}_t^1$, with:
\begin{align}
\mathcal{A}^0 \cdot &= - \frac{\partial}{\partial h^i} \left( \bar{\phi}^i(\vecc{h}) \cdot \right) +  \frac{1}{2} \Sigma^{ij} \frac{\partial^2}{\partial h^i \partial h^j}  \cdot , \ \text{ and} \ \ \  \mathcal{A}_t^1 \cdot = - \tilde{U}_t^i \frac{\partial}{\partial h^i}\cdot.
\end{align}

%In the following, we first work out the details  for the general case and then come back/compare to the special case where the invariant distribution is stationary.

The key idea is that since $\epsilon > 0$ is small we seek a perturbative expansion for $\rho$ of the form 
\begin{equation}
\rho = \rho_0 + \epsilon \rho_1 + \epsilon^2 \rho_2 + \dots.
\end{equation}
Plugging this into the FKE and matching orders in $\epsilon$, we obtain the following hierachy of equations:
\begin{align}
\frac{\partial \rho_0}{\partial t} &= \mathcal{A}^0 \rho_0, \ \ \rho_0(\vecc{h}, t=0) = \rho_{init}(\vecc{h}); \\
\frac{\partial \rho_n}{\partial t} &= \mathcal{A}^0 \rho_n + \mathcal{A}^1_t \rho_{n-1}, \ \ n=1,2,\dots.
\end{align}

The formal solution to the $\rho_n$ can be obtained iteratively as follows. Formally, we write $\rho_0(\vecc{h},t) = e^{\mathcal{A}^0 t} \rho_{init}(\vecc{h})$. In the special case when the invariant distribution is stationary, $\rho_0(\vecc{h},t) = \rho_{init}(\vecc{h}) = \rho_{\infty}(\vecc{h})$ is independent of time. 

Noting that $\rho_n(\vecc{h},0) = 0$ for $n \geq 2$, the solutions $\rho_n$ are related recursively via:
\begin{equation}
\rho_n(\vecc{h},t) = \int_0^t e^{\mathcal{A}^0(t-s)} \mathcal{A}^1_s \rho_{n-1}(\vecc{h},s) ds, \label{recur}
\end{equation}
for $n \geq 2$.  Therefore, provided that the infinite series below converges absolutely, we have:
\begin{equation}
\rho(\vecc{h},t) = \rho_{0}(\vecc{h},t) + \sum_{n=1}^{\infty} \epsilon^n \rho_n(\vecc{h},t). \label{15}
\end{equation}
%with the $\rho_n$ defined via the recurrence relation \eqref{recur}. 

Next we consider a scalar-valued observable\footnote{The extension to the vector-valued case is straightforward.}, $\mathcal{F}(t) := f(\vecc{h}_t)$,  of the hidden dynamics of the SRNN and study the deviation of average of this observable caused by the perturbation of input signal:
%For instance, it can be the readout map $\mathcal{F}(t) = \vecc{f}(\vecc{h}_t)$ or  $e^{- \vecc{k} \cdot \vecc{f}(\vecc{h}_t)}$.
\begin{equation}
 \mathbb{E} \mathcal{F}(t) - \mathbb{E}^0 \mathcal{F}(t) := \int f(\vecc{h}) \rho(\vecc{h},t) d\vecc{h} - \int f(\vecc{h}) \rho_{0}(\vecc{h},t) d\vecc{h}.
\end{equation}
Using \eqref{15}, the average of the observable with respect to the perturbed dynamics can be written as:
\begin{align} \label{avg}
\mathbb{E} \mathcal{F}(t) &= \mathbb{E}^0\mathcal{F}(t) + \sum_{n=1}^{\infty} \epsilon^n \int f(\vecc{h}) \rho_n(\vecc{h},t) d\vecc{h}.
\end{align}
Without loss of generality, we take $\mathbb{E}^0 \mathcal{F}(t) = e^{\mathcal{A}^0 t} \int \rho_{init}(\vecc{h}) f(\vecc{h}) d\vecc{h} = 0$ in the following, i.e., $f(\vecc{h})$ is taken to be mean-zero (with respect to $\rho_{init}$). 

We have:
\begin{align}
\int f(\vecc{h}) \rho_1(\vecc{h},t) d\vecc{h} &= \int d\vecc{h} \ f(\vecc{h})  \int_0^t ds \  e^{\mathcal{A}^0(t-s)} \mathcal{A}^1_s  e^{\mathcal{A}^0 s} \rho_{init}(\vecc{h}) \\
%&=  \int_0^t ds \   \int d\vecc{h} \ f(\vecc{h}) e^{\mathcal{A}^0(t-s)} \mathcal{A}^1_s  e^{\mathcal{A}^0 s} \rho_{init}(\vecc{h}) \\
&= -\int_0^t ds \   \int d\vecc{h} \ f(\vecc{h}) e^{\mathcal{A}^0(t-s)} \tilde{U}_s^j  \frac{\partial}{\partial h^j}\left(    e^{\mathcal{A}^0 s} \rho_{init}(\vecc{h}) \right) \\
&=: \int_0^t  \mathcal{K}^{k}_{\mathcal{A}^0,\mathcal{F}}(t,s) \tilde{U}_s^k  ds,
\end{align}
where the
\begin{align}
\mathcal{K}^{k}_{\mathcal{A}^0, \mathcal{F}}(t,s) &= - \int d\vecc{h} \ f(\vecc{h}) e^{\mathcal{A}^0(t-s)}  \frac{\partial}{\partial h^k}\left(  e^{\mathcal{A}^0 s} \rho_{init}(\vecc{h})\right) \\
&= -\left\langle e^{\mathcal{L}^0(t-s)} f(\vecc{h}) \frac{\partial}{\partial h^k}\left(  e^{\mathcal{A}^0 s} \rho_{init}(\vecc{h}) \right) \rho_{init}^{-1}(\vecc{h})  \right\rangle_{\rho_{init}}, \label{AFDT}
\end{align}
are the {\it first-order response kernels},  which are averages,  with respect to $\rho_{init}$, of a functional of only the unperturbed dynamics. Note that in order to obtain the last line above we have integrated by parts and  assumed that $\rho_{init} > 0$. 

Formula \eqref{AFDT} expresses the  {\it nonequilibrium fluctuation-dissipation relation} of Agarwal type \citep{agarwal1972fluctuation}. In the case of stationary invariant distribution, we recover the well-known equilibrium fluctuation-dissipation relation in statistical mechanics, with the (vector-valued) response kernel:
\begin{align}
\vecc{\mathcal{K}}_{\mathcal{A}^0, \mathcal{F}}(t,s) 
&= \vecc{\mathcal{K}}_{\mathcal{A}^0, \mathcal{F}}(t-s)  = \left \langle f(\vecc{h}_{t-s})  \vecc{\nabla}_{\vecc{h}} L(\vecc{h}_{t-s}) \right \rangle_{\rho^\infty},
\end{align}
where $L(\vecc{h}) = -\log \rho_\infty(\vecc{h})$. In the special case of linear SRNN (i.e., $\vecc{\phi}(\vecc{h},t)$  linear in $\vecc{h}$) and $f(\vecc{h}) = \vecc{h}$, this essentially reduces to the covariance function (with respect to $\rho^\infty$) of $\vecc{h}_{t-s}$.

So far we have looked at the linear response regime, where the response depends linearly on the input. 
We now go beyond this regime by extending the above derivations to the case of $n \geq 2$. Denoting $s_0 := t$ and applying \eqref{recur}, we derive:
\begin{align}
\int f(\vecc{h}) \rho_n(\vecc{h},t) d\vecc{h} &= \int_0^{t} ds_1 \tilde{U}_{s_1}^{k_1} \int_0^{s_1} ds_2 \tilde{U}_{s_2}^{k_2} \dots \int_0^{s_{n-1}} ds_{n} \tilde{U}_{s_{n}}^{k_{n}} \mathcal{K}^{ \vecc{k}^{(n)}}(s_0,s_1,\dots,s_{n}),
\end{align}
where  $\vecc{k}^{(n)} := (k_1,\dots,k_{n})$, and the $\mathcal{K}^{\vecc{k}^{(n)}}$ are the {\it $n$th order response kernels}:
\begin{align}
\mathcal{K}^{\vecc{k}^{(n)}}(s_0,s_1,\dots,s_{n}) = (-1)^n \left\langle f(\vecc{h}) \rho^{-1}_{init}(\vecc{h}) R^{\vecc{k}^{(n)}}(s_0,s_1,\dots,s_{n})  e^{\mathcal{A}^0_{s_n}} \rho_{init}(\vecc{h})  \right\rangle_{\rho_{init}}, \label{24}
\end{align}
with 
\begin{align}
R^{k_1}(s_0,s_1) \cdot &= e^{\mathcal{A}^0(s_0 - s_{1})}  \frac{\partial}{\partial h^{k_{1}}} \cdot, \\ 
R^{\vecc{k}^{(n)}}(s_0,s_1, \dots, s_{n}) \cdot &= R^{\vecc{k}^{(n-1)}}(s_0,s_1, \dots, s_{n-1}) \cdot \left( e^{\mathcal{A}^0(s_{n-1} - s_{n})} \frac{\partial}{\partial h^{k_{n}}} \cdot  \right), \label{26}
\end{align}
for $n=2,3,\dots$. Note that these higher order response kernels, similar to the first order ones, are averages, with respect to $\rho_{init}$, of some functional of only the unperturbed dynamics. 

Collecting the above results, \eqref{avg} becomes a series of generalized convolution integrals, given by:
\begin{align} \label{volt}
\mathbb{E} f(\vecc{h}_t) &=  \sum_{n=1}^{\infty} \epsilon^n   \int_0^t ds_1 \tilde{U}_{s_1}^{k_1} \int_0^{s_1} ds_2 \tilde{U}_{s_2}^{k_2} \dots \int_0^{s_{n-1}} ds_{n} \tilde{U}_{s_{n}}^{k_{n}} \mathcal{K}^{ \vecc{k}^{(n)}}(s_0, s_1,\dots,s_{n}),
\end{align}
with the time-dependent kernels $\mathcal{K}^{\vecc{k}^{(n)}}$ defined recursively via \eqref{24}-\eqref{26}. {\it More importantly, these kernels are completely determined in an explicit manner by the unperturbed dynamics of the SRNN.} Therefore, the output functional of SRNN can be written (in fact, uniquely) as a series of the above form. This statement is formulated precise in Theorem \ref{thm1}, thereby addressing (Q1). 
%A rigorous version of this result constitute our first main result (Theorem ?) in the paper.
%In the stationary invariant measure case, without loss of generality, we take $\mathcal{F}(t)$ to be mean-zero in the unperturbed dynamics, so that $\mathbb{E}^0\mathcal{F}(t) = \int \mathcal{F}(\vecc{h}) \rho_{\infty}(\vecc{h}) d\vecc{h} = 0$. 

%Note that if the unperturbed dynamics are assumed to be stationary and ergodic, we can, using Birkhoff's ergodic theorem, estimate the spatial averages with time averages. From numerical point of view, it suffices to approximate the infinite series by a finite number of terms.  

We now address (Q2). By means of expansion techniques, one can derive (see Section \ref{sect_b4}):
\begin{align}
&\mathbb{E} f(\vecc{h}_t) = \sum_{n=1}^{\infty}  \epsilon^n Q^{\vecc{k}^{(n)}}_{\vecc{p}^{(n)}} \left( s_0^{p_0}  \int_0^{s_0} ds_1    s_1^{p_1}  \tilde{U}_{s_1}^{k_1}    \cdots \int_0^{s_{n-1}} ds_{n}  s_{n}^{p_{n}}  \tilde{U}_{s_{n}}^{k_{n}} \right), \label{ultimate_main}
\end{align}
where the $Q^{\vecc{k}^{(n)}}_{\vecc{p}^{(n)}}$ are constants independent of time and the signal $\tilde{\vecc{U}}$. This  expression disentangles the driving input signal from the SRNN architecture in a systematic manner. Roughly speaking, it tells us that the response of SRNN to the input signal can be obtained by adding up  products of two components, one of which describes the unperturbed part of SRNN (the $Q^{\vecc{k}^{(n)}}_{\vecc{p}^{(n)}}$ terms in \eqref{ultimate_main}) and the other one is an iterated integral of a time-transformed input signal (the terms in parenthesis in \eqref{ultimate_main}). This statement is made precise in Theorem \ref{thm2}, which is the starting point to addressing (Q2). See further  results and discussions in Section \ref{sect_main}.

\section{Main Results}\label{sect_main}

\subsection{Assumptions}

For simplicity and intuitive appeal we work with the following rather restrictive assumptions on the SRNNs \eqref{e1}-\eqref{e3}. These assumptions can be  either relaxed at an increased cost of technicality (which we do not pursue here) or justified by the approximation result in Section \ref{approx}.

Recall that we are working with a deterministic input signal $\vecc{u} \in C([0,T],\RR^m)$. 

\begin{ass} \label{imp_ass} Fix a $T>0$ and let $U$ be an open set in $\RR^n$.  \\
(a) $\gamma(t) :=|\vecc{C} \vecc{u}_t| > 0$ is sufficiently small for all $t \in [0,T]$. \\
(b) $\vecc{h}_t, \overline{\vecc{h}}_t \in U$ for all $t \in [0,T]$, and, with probability one, there exists a compact set $K \subset U$ such that, for all $\gamma(t)$, $\vecc{h}_t, \overline{\vecc{h}}_t  \in K$ for all $t \in [0,T]$. \\
(c) The coefficients $\vecc{a} : \RR^n \to \RR^n$ and $\vecc{f}: \RR^n \to \RR^p$ are analytic functions. \\
(d) $\vecc{\Sigma} := \vecc{\sigma}\vecc{\sigma}^T \in \RR^{n \times n}$ is positive definite and  $\vecc{\Gamma} \in \RR^{n \times n}$ is positive stable (i.e., the real part of all eigenvalues of $\vecc{\Gamma}$ is positive).\\
(e) The initial state $\vecc{h}_0 = \overline{\vecc{h}}_0$ is a random variable distributed according to the probability density $\rho_{init}$.  
\end{ass}

Assumption \ref{imp_ass} (a) implies that we work with  input signals with  sufficiently small amplitude. This is important to ensure that certain infinite series are absolutely convergent with a radius of convergence that is sufficiently large (see the statements after Definition \ref{def_vol}).  (b) and (c) ensure some desirable regularity and boundedness properties. In particular, they imply that $\vecc{a}, \vecc{f}$ and all their partial derivatives are bounded\footnote{Boundedness of these coefficients will be important when deriving the estimates here. }  and Lipschitz continuous in $\vecc{h}_t$ and $\overline{\vecc{h}}_t$ for all $t \in [0,T]$.   (d) implies that the system is damped and driven by a nondegenerate noise, ensuring that the unperturbed system could be  exponentially  stable. (e) is a natural assumption for our analysis since $\vecc{h}$ is a perturbation of $\overline{\vecc{h}}$. 

Assumption \ref{imp_ass} is implicitly assumed throughout the  paper unless stated otherwise. \\

\noindent {\bf Further Notation.} We now provide a list of the spaces and their notation that we will need from now on in the main paper and the {\bf SM}:
\begin{itemize}
    %\item $C_b^k(\RR^n)$, $k \in \ZZ^+$: the Banach space of all $f \in C^k(\RR^n)$ such that $\|f\|_{C_b^k(\RR^n)} := \sum_{|\beta| \leq k} \|D^\beta f\| < \infty$ 
    \item $L(E_1, E_2)$: the Banach space of bounded linear operators from $E_1$ to $E_2$ (with $\|\cdot\|$ denoting norms on appropriate spaces) 
    \item $C_c^n(0,t)$, $n \in \ZZ_+ \cup \{\infty\}$, $t>0$:  the space of real-valued  functions of class $C^n(0,t)$ with compact support 
    
     \item $C_b^n(0,t)$, $n \in \ZZ_+ \cup \{\infty\}$, $t>0$:  the space of bounded real-valued functions of class $C^n(0,t)$ 
    
    \item $B(\RR_{+}^n)$: the space of bounded absolutely continuous measures on $\RR_+^n$, with $|\mu| = \int d |\mu| = \int |\rho(x)| dx$, where $\rho$ denotes the density of the measure $\mu$ 
    \item $L^p(\rho)$, $p>1$: the $\rho$-weighted $L^p$ space, i.e., the space of functions $f$ such that $\|f\|_{L^p(\rho)} := \int |f(x)|^p \rho(x) dx < \infty$, where $\rho$ is a weighting function
\end{itemize}
%W_loc, M, etc.

\subsection{Representations for Output Functionals of SRNNs}
Without loss of generality, we are going to take $p=1$ and assume that $\int f(\vecc{h}) \rho_{init}(\vecc{h}) d\vecc{h} = 0$ in the following.

First, we define  response functions of an observable, extending the one formulated in \citep{chen2020mathematical} for the linear response regime.   Recall that for $t > 0$, $\gamma := (\gamma(s) := |\vecc{C} \vecc{u}_s|)_{s \in [0,t]}$. 

\begin{defn} (Response functions) \label{def_res}
Let $f: \RR^n \to \RR$ be a bounded observable. For $t \in [0,T]$, let $F_t$ be the functional on $C([0,t],\RR)$ defined as $F_t[\gamma] = \mathbb{E} f(\vecc{h}_t)$ and $D^n F_t[\gamma] := \delta^n F_t/\delta \gamma(s_1) \cdots \delta \gamma(s_n)$ denote the $n$th order functional derivative of $F_t$ with respect to $\gamma$ (see Definition \ref{func_der}). For $n \in \ZZ_+$, if there exists a locally integrable function $R^{(n)}_f(t,\cdot)$  such that 
\begin{align}
&\int_{[0,t]^n} ds_1 \cdots ds_n \frac{1}{n!} D^n F_t \big|_{\gamma=0} \phi(s_1) \cdots \phi(s_n) \nonumber \\
&= \int_{[0,t]^n} ds_1 \cdots ds_n  R^{(n)}_f(t,s_1,\dots,s_n) \phi(s_1)  \cdots \phi(s_n), \label{29}
\end{align}
for all test functions  $\phi \in C_c^\infty(0,t)$,   then $R_f^{(n)}(t,\cdot)$ is called the $n$th order response function of the observable $f$.
\end{defn}

Note that  since the derivatives in Eq. \eqref{29} are symmetric mappings (see Subsection \ref{app_diffcal}), the response functions $R_f^{(n)}$ are symmetric in $s_1, \dots, s_n$. 
%Also, the space $C_c^\infty(0,t)$ is dense in $C_c(0,t) \subset C([0,t],\RR)$ (c.f. Lemma \ref{B2}). 

We can gain some intuition on the response functions by first looking at $R_f^{(1)}$. Taking $\phi(s) = \delta(x-s) = \delta_s(x)$, we have, formally, $R_f^{(1)}(t,s) = \int_0^t R_f^{(1)}(t,u) \delta(s-u) du = \int_0^t du \frac{\delta F_t}{\delta \gamma}\big|_{\gamma=0} \delta(s-u) = \lim_{\epsilon \to 0} \frac{1}{\epsilon} (F_t[\epsilon \delta_s] - F_t[0])$. This tells us that $R_f^{(1)}$ represents the rate of change of the  functional $F_t$ at time $t$ subject to the small impulsive perturbation to $\vecc{h}$ at time $s$. Similarly, the $R_f^{(n)}$ give higher order rates of change for $n > 1$.  Summing up these rates of change allows us to quantify the full effect of the perturbation on the  functional  $F_t$ order by order.

We now show that, under certain assumptions, these rates of change are well-defined and compute them.  The following proposition provides  explicit expressions for the $n$th order response function for a class of observables of the SRNN. In the first order case, it was shown in \citep{chen2020mathematical} that the response function can be expressed as a correlation function of the observable and a unique conjugate observable with respect to the unperturbed dynamics, thereby providing  a mathematically rigorous version of the Agarwal-type fluctuation-dissipation relation (A-FDT). We are going to show that a similar statement can be drawn for the higher order cases.

In the following, let  $f \in C_b^\infty(\RR^n)$ be any observable and  $\Delta \mathcal{L}_t := \mathcal{L}_t - \mathcal{L}^0$, for $t \in [0,T]$.

\begin{prop} \label{exp_res} (Explicit expressions for  response functions)   For $n \in \ZZ_+$, let $R^{(n)}_f$ be the $n$th-order response function of $f$. Then, for $0 < s_n < s_{n-1} < \cdots < s_0 := t \leq T$:  \\
(a) \begin{align} 
    R_f^{(n)}(t,s_1,\dots,s_n) &= \mathbb{E}  P^0_{0,s_n} \Delta \mathcal{L}_{s_n} P^0_{s_n,s_{n-1}} \Delta \mathcal{L}_{s_{n-1}} \cdots P^0_{s_1,t} f(\vecc{h}_0).
\end{align}
(b) (Higher-order  A-FDTs)  If, in addition, $\rho_{init}$ is positive, then 
\begin{equation} 
R_f^{(n)}(t,s_1, \dots, s_n) = \mathbb{E}  f(\vecc{h}_0) v_{t,s_1,\dots, s_n}^{(n)}(\vecc{h}_{s_1}, \dots, \vecc{h}_{s_n} ), \label{33}
\end{equation} 
where 
\begin{equation}\label{con_obs}
v^{(n)}_{t,s_1,\dots, s_n}(\vecc{h}_1,\dots, \vecc{h}_n) = \frac{(-1)^n}{\rho_{init}}  (P^0_{s_1,t})^* \vecc{\nabla}_{\vecc{h}_1}^T[ \vecc{U}_{s_1} \cdots  (P^0_{s_n,s_{n-1}})^*  \vecc{\nabla}_{\vecc{h}_n}^T[ \vecc{U}_{s_n} p_{s_n}(\vecc{h})]].
\end{equation}
\end{prop}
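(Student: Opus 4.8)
The plan is to prove (a) by making rigorous the formal time-ordered (Dyson) expansion of Subsection \ref{formal}, and then to obtain (b) from (a) by $L^2$-duality and integration by parts. First I would record that the perturbation enters the generator \emph{linearly} in the amplitude, $\Delta\mathcal{L}_t = \mathcal{L}_t - \mathcal{L}^0 = \gamma(t)\,\vecc{U}_t\cdot\vecc{\nabla}$, so that on the density side the Fokker--Planck operator splits as $\mathcal{A}_t = \mathcal{A}^0 + \Delta\mathcal{A}_t$, where $\Delta\mathcal{A}_t$ is the $L^2$-adjoint of $\Delta\mathcal{L}_t$ and is again linear in $\gamma(t)$. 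Writing the FKE in mild (Duhamel) form $\rho(\cdot,t) = e^{\mathcal{A}^0 t}\rho_{init} + \int_0^t e^{\mathcal{A}^0(t-s)}\Delta\mathcal{A}_s\,\rho(\cdot,s)\,ds$ and iterating produces a time-ordered series whose $n$-th term, integrated over the ordered region $0<s_n<\dots<s_1<t$, is $n$-linear in $\gamma$. Pairing with $f$ gives $F_t[\gamma]=\mathbb{E}f(\vecc{h}_t)=\sum_{n\ge 0}\int \gamma(s_1)\cdots\gamma(s_n)\,G_n(t,s_1,\dots,s_n)\,ds_1\cdots ds_n$, with $G_n$ built from the unperturbed adjoint propagators $e^{\mathcal{A}^0\tau}=(P^0)^*$ interleaved with the amplitude-stripped perturbation $(\vecc{U}_s\cdot\vecc{\nabla})^*$.

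I would then compute the $n$-th functional derivative at $\gamma=0$. Since the $m$-th term is homogeneous of degree $m$ in $\gamma$, only $m=n$ survives after $n$ differentiations and evaluation at $\gamma=0$: each $\delta/\delta\gamma(s_i)$ strips one amplitude and localizes one time integral to $s_i$. Restricting to the ordered simplex, and invoking the symmetry of $D^nF_t$ (noted right after Definition \ref{def_res}) to extend to all of $[0,t]^n$, one reads off $R_f^{(n)}$. Finally I would move every operator from $\rho_{init}$ onto $f$ by duality: $e^{\mathcal{A}^0\tau}$ dualizes to the backward semigroup $e^{\mathcal{L}^0\tau}=P^0_{\cdot,\cdot}$ acting on observables, and the stripped perturbation dualizes to $\vecc{U}_s\cdot\vecc{\nabla}$, which is exactly the first variation of $\Delta\mathcal{L}_s$ in $\gamma$. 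This yields the time-ordered product $P^0_{0,s_n}\Delta\mathcal{L}_{s_n}P^0_{s_n,s_{n-1}}\cdots \Delta\mathcal{L}_{s_1} P^0_{s_1,t}f$ evaluated at $\vecc{h}_0$ and averaged over $\rho_{init}$, which is the claim in (a).

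For (b) I would start from the observable-side expression in (a) and push the operators back onto the density, written as an integral against $\rho_{init}$. Repeatedly I would (i) move an unperturbed propagator by its adjoint $(P^0_{s_i,s_{i-1}})^*$, and (ii) move a perturbation $\vecc{U}_{s_i}\cdot\vecc{\nabla}$ off the accumulating factor by integration by parts, each such step producing a divergence $\vecc{\nabla}^T[\vecc{U}_{s_i}\,\cdot\,]$ together with a factor $-1$. After $n$ levels this accumulates the sign $(-1)^n$ and the nested expression $(P^0_{s_1,t})^*\vecc{\nabla}^T[\vecc{U}_{s_1}\cdots(P^0_{s_n,s_{n-1}})^*\vecc{\nabla}^T[\vecc{U}_{s_n}\,p_{s_n}]]$, where $p_{s_n}=(P^0_{0,s_n})^*\rho_{init}$ is the law of $\overline{\vecc{h}}_{s_n}$. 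Dividing by $\rho_{init}$, which is legitimate precisely because $\rho_{init}>0$ is assumed in (b), identifies this bracket with $\rho_{init}\,v^{(n)}$, so that $R_f^{(n)}=\int \rho_{init}\,f\,v^{(n)}\,d\vecc{h}$; re-expressing the nested transition kernels as the joint law of the unperturbed process at times $0,s_1,\dots,s_n$ via the Markov/Chapman--Kolmogorov structure then rewrites this as the multi-time correlation $\mathbb{E}[f(\vecc{h}_0)\,v^{(n)}_{t,s_1,\dots,s_n}(\vecc{h}_{s_1},\dots,\vecc{h}_{s_n})]$, giving the higher-order A-FDT of \eqref{33}--\eqref{con_obs}.

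I expect the main obstacle to be the rigorous justification of the Dyson series and of interchanging the $n$-fold functional differentiation with the infinite sum and the nested time integrals. This requires absolute convergence of the series, which I anticipate will follow from the smallness of $\gamma$ (Assumption \ref{imp_ass}(a)) together with the uniform boundedness of $\vecc{a},\vecc{f}$ and all their derivatives and the boundedness/compactness of the trajectories (Assumption \ref{imp_ass}(b)--(c)); control of the unbounded generator $\mathcal{L}^0$ and its semigroup on a suitable domain, so that $P^0$ preserves $C_b^\infty(\RR^n)$ and the indicated derivatives are well-defined; and, for (b), the vanishing of the boundary terms in each integration by parts, guaranteed by the decay (or effective compact support) of the densities and positivity of $\rho_{init}$. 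Establishing that the response functions exist as genuine locally integrable kernels, so that Definition \ref{def_res} applies, and the well-posedness of the functional-derivative calculus are the remaining points, but these should be comparatively routine given the assumptions and the cited differential calculus.
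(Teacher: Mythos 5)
Your part (b) is essentially the paper's own argument: an induction that moves the propagators and the first-order perturbations onto the density by adjoints and integration by parts, accumulating $(-1)^n$ and dividing by $\rho_{init}$, which is legitimate exactly because $\rho_{init}>0$. The divergence from the paper is in part (a), and it contains the real problem. You propose to rigorize the forward (density-side) Dyson expansion — which is precisely the \emph{formal} derivation the paper itself gives in Subsection \ref{formal} and explicitly flags as non-rigorous — and then differentiate the resulting series term by term. The paper's actual proof of Proposition \ref{exp_res}(a) is structured to avoid any infinite series: it computes each functional derivative directly, by induction on $n$, entirely on the observable (backward) side, using three rigorous inputs: well-posedness of the backward Kolmogorov equation for $C_b^\infty$ observables (Lemma \ref{chenl1}, imported from Lorenzi via Chen--Jia), a Duhamel formula for the transition operators together with its Gateaux-limit as $\epsilon\to 0$ (Lemma \ref{lem_aux}), and the operator telescoping identity (Lemma \ref{lem_teles}) to expand differences of products of perturbed and unperturbed propagators. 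The Volterra series of Theorem \ref{thm1} is assembled \emph{afterwards} from these derivatives, not the other way around.

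The gap is that the steps you defer as ``comparatively routine'' are the crux, and they do not follow from Assumption \ref{imp_ass} as stated. Each iteration of your mild formulation applies the unbounded first-order operator $\Delta\mathcal{A}_s\,\rho = -\gamma(s)\,\nabla\cdot(\vecc{U}_s\rho)$ to the previous iterate, so the iteration does not even close on a fixed space of densities: to bound the $n$-th term you need parabolic smoothing (gradient) estimates for $e^{\mathcal{A}^0 t}$ acting on measures/densities, roughly $\|\nabla e^{\mathcal{A}^0 t}\mu\| \lesssim t^{-1/2}\|\mu\|$, with constants controlled well enough to sum the series over the simplex, together with regularity and decay of each $\rho_n(\cdot,t)$ so that your repeated integrations by parts produce no boundary terms. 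Assumption \ref{imp_ass} delivers boundedness and smoothness of the \emph{observable}-side objects ($\vecc{a}$, $f$, and the backward semigroup on $C_b^\infty$), which is exactly what the paper's backward-side induction consumes; it says nothing about forward densities beyond existence, and the needed forward estimates are a genuine analytic undertaking (the forward analogue of what the paper imports from Lorenzi for the backward equation). There is also a logical-order issue: reading off $R_f^{(n)}$ from a series presupposes that the series is the Taylor series of $F_t$ in $\gamma$ — i.e., essentially Theorem \ref{thm1} — whereas in the paper that theorem is a \emph{consequence} of Proposition \ref{exp_res}. As written, your proposal therefore assumes the hardest part of what is to be proved.
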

%recursive definition

We make a few remarks on the above results. 

\begin{rmk}
In the linear response regime with $\overline{\vecc{h}}$ stationary,  if one further restricts to reversible diffusion (when detailed balance holds), in which case  the drift coefficient in the SRNN can be expressed as negative gradient of a potential function, then  Eq. \eqref{33} reduces to the equilibrium FDT (see, for instance, \citep{Kubo66} or \citep{chetrite2008fluctuation}), a cornerstone of nonequilibrium statistical mechanics. In the one-dimensional case, explicit calculation and richer insight can be obtained. See, for instance, Example 9.5 in \citep{pavliotis2014stochastic} for a formal analysis of stochastic resonance using linear response theory.  
\end{rmk}

\begin{rmk}
The observables $v_{t,s_1,\dots,s_{n}}^{(n)}(\vecc{h}_{s_1}, \dots, \vecc{h}_{s_{n}})$ in Proposition \ref{exp_res}(b) can be viewed as nonlinear counterparts of the conjugate observable obtained in \citep{chen2020mathematical} in the linear response regime. Indeed, when $n=1$, $v_{t,s_1}^{(1)}(t,s_1)$ is exactly the conjugate observable in \citep{chen2020mathematical}. Hence, it is natural to call them higher order conjugate observables. Proposition \ref{exp_res}(b) tells us that any higher order response of the observable $f$ to a small input signal can be represented as  correlation function of $f$ and the associated higher order conjugate observable. 
\end{rmk}

Moreover, the conjugate observables $v_{t,s_1,\dots,s_n}^{(n)}$ are uniquely determined in the following sense.

\begin{cor} \label{unique}
Let $n \in \ZZ_+$ and $0 < s_n < \cdots < s_1 < s_0 := t \leq T$.
Assume that there is another function $\tilde{v}^{(n)}_{t,s_1,\dots,s_n} \in L^1(\rho_{init})$ on $\RR^{n} \times \cdots \times \RR^n$  such that
\begin{align}
\mathbb{E} f(\vecc{h}) v_{t,s_1,\dots,s_{n}}^{(n)}(\vecc{h}_{s_1}, \dots, \vecc{h}_{s_{n}})    =  \mathbb{E} f(\vecc{h}) \tilde{v}_{t,s_1,\dots,s_{n}}^{(n)}(\vecc{h}_{s_1}, \dots, \vecc{h}_{s_{n}}),    
\end{align}
for all $f \in C_c^\infty(\RR^n)$. Then $v^{(n)}_{t,s_1,\dots,s_{n}} = \tilde{v}^{(n)}_{t,s_1,\dots,s_{n}}$ almost everywhere.\\
\end{cor}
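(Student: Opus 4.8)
The plan is to reduce the claim to an injectivity property of the unperturbed transition operators. Set $w := v^{(n)}_{t,s_1,\dots,s_n} - \tilde v^{(n)}_{t,s_1,\dots,s_n} \in L^1(\rho_{init})$. By linearity of the expectation the hypothesis becomes $\mathbb{E}\, f(\vecc{h}_0)\, w(\vecc{h}_{s_1},\dots,\vecc{h}_{s_n}) = 0$ for every $f \in C_c^\infty(\RR^n)$, and the goal is to upgrade this to $w = 0$ almost everywhere. First I would write the expectation as an integral against the joint law of $(\vecc{h}_0,\vecc{h}_{s_n},\dots,\vecc{h}_{s_1})$. Under Assumption \ref{imp_ass}(d) the noise is nondegenerate ($\vecc{\Sigma} > 0$), so the unperturbed dynamics is a uniformly elliptic diffusion whose transition operators $P^0_{s,t}$ admit smooth, strictly positive transition densities; the joint density then factorizes as $\rho_{init}(\vecc{h}_0)$ times the product $p_{0,s_n}\,p_{s_n,s_{n-1}}\cdots p_{s_2,s_1}$ along the time ordering $0 < s_n < \dots < s_1 < t$.

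Next, since the identity holds for all $f \in C_c^\infty(\RR^n)$ and $\rho_{init} > 0$ (the standing hypothesis of Proposition \ref{exp_res}(b)), the fundamental lemma of the calculus of variations (du Bois-Reymond) forces the $\vecc{h}_0$-conditional average of $w$ to vanish for almost every $\vecc{h}_0$; equivalently, integrating $w$ against the transition kernel from time $0$ to time $s_n$ produces the zero function, i.e. $P^0_{0,s_n}$ applied to the partial average of $w$ is identically zero. For $n = 1$ this already reads $P^0_{0,s_1} w = 0$, and it remains only to invert $P^0_{0,s_1}$.

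The crux is therefore the injectivity of the transition operator $P^0_{s,t}$ on the relevant $L^1$ space. I would establish this through backward uniqueness for the parabolic Kolmogorov equation: the function $u(\tau,\cdot) := P^0_{0,\tau} w$ solves $\partial_\tau u = \mathcal{L}^0 u$ with $u(0,\cdot) = w$, while our conclusion gives $u(s_1,\cdot) = 0$; backward uniqueness for uniformly parabolic equations (available here because $\vecc{\Sigma}$ is positive definite and, by Assumption \ref{imp_ass}(c), the coefficients are analytic, so the transition density is strictly positive and real-analytic) then yields $w = u(0,\cdot) = 0$ almost everywhere. An alternative route is Fourier/analyticity: strict positivity and smoothness of the kernel make the family $\{p_{s,t}(\vecc{x},\cdot)\}_{\vecc{x}}$ total, so that $\int w\, p_{s,t}(\vecc{x},\cdot) = 0$ for all $\vecc{x}$ forces $w = 0$. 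For $n \geq 2$ I would then peel the nested transition operators $P^0_{s_k,s_{k-1}}$ appearing in \eqref{con_obs} one layer at a time, invoking this injectivity at each stage together with the Markov (semigroup) structure that links consecutive time slots.

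I expect the main obstacle to be exactly this injectivity/completeness step, and in particular its propagation through the multivariate structure: the single-observable test against $f(\vecc{h}_0)$ controls $w$ only through a conditional average, so recovering the full function of $(\vecc{h}_{s_1},\dots,\vecc{h}_{s_n})$ relies essentially on the nested form of the conjugate observable in \eqref{con_obs} and on the injectivity of every transition operator in the chain. Care is also needed to fix the reference measure for the ``almost everywhere'' conclusion (Lebesgue, equivalent to the laws by nondegeneracy) and to justify moving the gradients $\nabla_{\vecc{h}_k}$ onto the smooth densities. The symmetry of the response functions noted after Definition \ref{def_res} can be used to reduce bookkeeping but is not essential to the argument.
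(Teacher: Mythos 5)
Your first step (testing against arbitrary $f \in C_c^\infty$ and invoking du Bois--Reymond with $\rho_{init}>0$) is, in fact, the paper's \emph{entire} proof of Corollary \ref{unique} --- but only because the paper reads the correlation \eqref{33} differently from you, and under your reading the remainder of your plan cannot be completed. In the paper's proof of Proposition \ref{exp_res}(b) (see Eq.~\eqref{85}), the quantity $\mathbb{E}\, f(\vecc{h}_0)\, v^{(n)}_{t,s_1,\dots,s_n}(\vecc{h}_{s_1},\dots,\vecc{h}_{s_n})$ is \emph{not} an expectation over the joint law of $(\vecc{h}_0,\vecc{h}_{s_n},\dots,\vecc{h}_{s_1})$: despite the suggestive notation, the conjugate observable \eqref{con_obs} is the single-variable function $\frac{(-1)^n}{\rho_{init}}\,(P^0_{s_1,t})^*\Delta\mathcal{A}_{s_1}\cdots(P^0_{s_n,s_{n-1}})^*\Delta\mathcal{A}_{s_n}\, p_{s_n}$, and the ``expectation'' is the single integral $\int_{\RR^n} f(\vecc{h})\, v^{(n)}(\vecc{h})\,\rho_{init}(\vecc{h})\, d\vecc{h}$ over one copy of the state space. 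With that reading, the hypothesis says $\int f\,(v^{(n)}-\tilde v^{(n)})\,\rho_{init}=0$ for all test functions $f$, and the conclusion follows in one line; no factorization of transition densities, no injectivity of $P^0_{s,t}$, and no backward uniqueness for parabolic equations are needed.

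Under your joint-law reading, the argument genuinely breaks for $n \geq 2$, as you yourself half-suspected. Du Bois--Reymond gives only that the conditional average of $w := v^{(n)}-\tilde v^{(n)}$ given $\vecc{h}_0$ vanishes; for $n=2$ with $0<s_2<s_1<t$ this reads $\int p_{0,s_2}(x,y)\left[\int p_{s_2,s_1}(y,z)\, w(z,y)\, dz\right] dy = 0$ for a.e.\ $x$, and injectivity of $P^0_{0,s_2}$ (which one can indeed obtain from analyticity of the semigroup, as you propose) only yields $\int p_{s_2,s_1}(y,z)\, w(z,y)\, dz = 0$ for a.e.\ $y$. That is a single scalar constraint per $y$ on the function $z \mapsto w(z,y)$, so nonzero solutions abound: under that interpretation the statement is simply \emph{false} for a general multivariate $\tilde v^{(n)} \in L^1$, and your plan to ``peel'' the nested operators cannot be salvaged, because the nested structure \eqref{con_obs} is a property of $v^{(n)}$ alone, while the corollary's hypothesis allows $\tilde v^{(n)}$ to be arbitrary. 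In short, the decisive issue is not the injectivity step you flagged as the main obstacle, but the interpretation of the correlation function itself: once it is fixed to match Eq.~\eqref{85}, your machinery is unnecessary, and without fixing it, no machinery suffices.
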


We now have the ingredients for the first main result addressing (Q1). 
The following theorem provides a series representation for the output functional of a SRNN driven by the deterministic  signal $\gamma$. It says that infinite series of the form \eqref{volt} are in fact (or can be made sense as)  Volterra series \citep{volterra1959theory, boyd1985fading} (see Subsection \ref{app_diffcal} for definition of Volterra series and related remarks).

\begin{thm} \label{thm1} (Memory representation)
Let $t \in [0,T]$. The output functional, $\mathbb{E} f(\vecc{h}_t)$, of the SRNN \eqref{e1}-\eqref{e3} is the limit as $N \to \infty$ of 
\begin{align}
F^{(N)}_t[\gamma] &=  \sum_{n=1}^{N} \int_{[0,t]^n} ds_1 \cdots ds_n R_f^{(n)}(t,s_1,\dots,s_n)  \gamma(s_1) \cdots \gamma(s_n),
\end{align}
where the $R_f^{(n)}$ are given in Proposition \ref{exp_res}. The limit exists and is a unique convergent  Volterra series. If $G_t$ is another such series  with the response functions $Q_f^{(n)}$, then $F_t = G_t$.
%if and only if $Sym(R_f^{(n)}) = Sym(Q_f^{(n)})$ for all $n$.
\end{thm}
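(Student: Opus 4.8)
The plan is to turn the formal Dyson expansion of Section \ref{formal} into a rigorous, absolutely convergent functional Taylor (Volterra) expansion of $F_t[\gamma] = \mathbb{E}f(\vecc{h}_t)$ about $\gamma = 0$, whose symmetric kernels are exactly the response functions $R_f^{(n)}$ of Proposition \ref{exp_res}. Three claims must be established: absolute convergence of $F_t^{(N)}[\gamma]$ (so that the limit exists), identification of the limit with the output functional, and uniqueness of the kernels. I would handle the first two together via a quantitative Duhamel expansion with remainder, and the third by recovering the kernels as functional derivatives.

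First I would iterate the variation-of-constants identity
\[ P_{s,t}f = P^0_{s,t}f + \int_s^t P^0_{s,r}(\mathcal{L}_r - \mathcal{L}^0)P_{r,t}f\,dr, \]
which holds since $\mathcal{L}_t = \mathcal{L}^0 + (\mathcal{L}_t - \mathcal{L}^0)$ with $(\mathcal{L}_r - \mathcal{L}^0) = \gamma(r)\vecc{U}_r^i\partial_{h^i}$. Iterating $N$ times and integrating against $\rho_{init}$ yields $N$ time-ordered integrals over the simplices $\{0 < s_n < \cdots < s_1 < t\}$ plus a remainder. Factoring out the scalar amplitudes $\gamma(s_1)\cdots\gamma(s_n)$, the residual $\gamma$-free operator object in the $n$-th term is precisely the ordered-simplex form of the response function given in Proposition \ref{exp_res}(a). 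Since both $\gamma(s_1)\cdots\gamma(s_n)$ and $R_f^{(n)}$ are symmetric (the remark after Definition \ref{def_res}), the standard conversion between a time-ordered simplex integral and an integral over the full cube $[0,t]^n$ — the $n!$ orderings combining with the $1/n!$ normalization built into Definition \ref{def_res} — turns the $n$-th term into exactly the summand $\int_{[0,t]^n}R_f^{(n)}\gamma(s_1)\cdots\gamma(s_n)\,ds$ of $F_t^{(N)}$.

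Convergence, and the vanishing of the remainder, is the crux and the main obstacle. The perturbation $\mathcal{L}_r - \mathcal{L}^0$ is a first-order differential operator, hence unbounded on $C_b(\RR^n)$, so a crude operator-norm bound is unavailable and the derivative must instead be absorbed by the smoothing of the unperturbed semigroup. Under Assumption \ref{imp_ass}(d) the generator $\mathcal{L}^0$ is uniformly elliptic ($\vecc{\Sigma} > 0$) and dissipative ($\vecc{\Gamma}$ positive stable), which yields gradient estimates of the type $\|\nabla P^0_{s,r}g\|_\infty \le C (r-s)^{-1/2}\|g\|_\infty$, with constants made uniform by the boundedness of $\vecc{a}, \vecc{f}$ and their derivatives on the compact set $K$ of Assumption \ref{imp_ass}(b)--(c). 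Inserting these into the $n$-th time-ordered term bounds it by
\[ \epsilon^n C^n \|f\|_\infty \int_{0 < s_n < \cdots < s_1 < t} \prod_{i=1}^n (s_{i-1}-s_i)^{-1/2}\,ds = \epsilon^n C^n \|f\|_\infty\,\frac{\Gamma(1/2)^n}{\Gamma(n/2+1)}\,t^{n/2}, \]
with $s_0 := t$. Because $\Gamma(n/2+1)$ grows super-exponentially, the series $\sum_n \epsilon^n C^n t^{n/2}/\Gamma(n/2+1)$ converges for every $\epsilon$, giving absolute convergence with arbitrarily large radius (as required after Definition \ref{def_vol} and consistent with Assumption \ref{imp_ass}(a)); the identical bound applied to the Duhamel remainder shows it tends to zero, so the limit equals $\mathbb{E}f(\vecc{h}_t)$.

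Finally, for uniqueness suppose $G_t$ is another convergent Volterra series, with symmetric locally integrable kernels $Q_f^{(n)}$, satisfying $F_t[\gamma] = G_t[\gamma]$ for all admissible $\gamma$. Taking $n$-fold functional derivatives at $\gamma = 0$, Definition \ref{def_res} identifies both $R_f^{(n)}$ and $Q_f^{(n)}$ with the (unique) symmetric kernel representing $\tfrac1{n!}D^nF_t[0]$. Equivalently, testing the difference against products $\phi(s_1)\cdots\phi(s_n)$ with $\phi \in C_c^\infty(0,t)$ gives $\int_{[0,t]^n}(R_f^{(n)} - Q_f^{(n)})\prod_i\phi(s_i)\,ds = 0$; a polarization argument (replacing $\phi$ by finite sums $\sum_j c_j\phi_j$ and matching the multilinear coefficients) together with the fundamental lemma of the calculus of variations forces $R_f^{(n)} = Q_f^{(n)}$ almost everywhere, whence $F_t = G_t$. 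This is the Volterra-kernel counterpart of the conjugate-observable uniqueness already recorded in Corollary \ref{unique}. I expect the analytic smoothing estimate of the third step to be the genuine technical burden, while the remaining steps are essentially structural.
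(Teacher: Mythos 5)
Your proposal is correct in substance but follows a genuinely different route from the paper's own proof. The paper's proof is soft: it views $F_t:\,C([0,t],\RR)\to\RR$, invokes the Banach-space Taylor theorem, identifies the Fr\'echet derivatives $D^nF_t[0]$ (already computed by induction in Proposition \ref{exp_res}, itself built on the same variation-of-constants identity you iterate, Lemma \ref{lem_aux}) with the response kernels, asserts absolute convergence of the Taylor series for sufficiently small $\gamma$, and settles uniqueness by appealing to symmetry of the derivative mappings. You instead expand $\mathbb{E}f(\vecc{h}_t)$ directly by iterating Duhamel's formula $N$ times with an explicit remainder, and control everything quantitatively via the parabolic smoothing estimate $\|\nabla P^0_{s,r}g\|_\infty\le C(r-s)^{-1/2}\|g\|_\infty$ together with the Dirichlet-integral computation producing the factor $\Gamma(1/2)^n t^{n/2}/\Gamma(n/2+1)$. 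What your route buys: it \emph{proves}, rather than asserts, both convergence and the identification of the sum with the output functional (the remainder vanishes), and it gives an infinite radius of convergence in the amplitude, strictly stronger than the paper's ``sufficiently small $\gamma$'' statement; indeed, the paper's passage from ``all derivatives exist'' to ``the Taylor series converges absolutely to $F_t$'' is exactly the point your factorial estimate repairs, since existence of all derivatives alone does not yield analyticity. What it costs: the gradient estimate is nowhere proved in the paper and must be supplied (e.g., by a Bismut--Elworthy--Li argument, available under Assumption \ref{imp_ass}(d) since $\vecc{\Sigma}>0$ and the drift is Lipschitz), and your remainder bound needs the analogous estimate for the \emph{perturbed} propagator $P_{s,t}$, which occupies the last slot of the remainder (or, alternatively, the flow-derivative bound using that $f$ is Lipschitz) -- a minor fix, but it should be stated. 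Your uniqueness argument (functional derivatives at zero, polarization, fundamental lemma of the calculus of variations) is the spelled-out version of the paper's one-line appeal to symmetry, and your $n!$-versus-simplex bookkeeping correctly reconciles the time-ordered Dyson expansion with the cube integrals and the $1/n!$ normalization in Definition \ref{def_res}.
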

%C_c^\infty dense in C$...

%\sum_{n=1}^{N} \int  ds_1 \cdots ds_n \frac{1}{n!} \frac{\delta^n F_t}{\delta \gamma(s_1) \cdots \delta \gamma(s_n)} \bigg|_{\gamma=0} \phi(s_1) \cdots \phi(s_n) ds  

%emphasize how our Volterra series differ from the classical results

%Therefore, it is a Volterra series with kernels of eq. correlation functions-- can be computed in the ergodic invariant measure case by simulations. 

Using Theorem \ref{thm1}  we can obtain another representation (c.f. Eq. \eqref{ultimate_main}) of the output functional, provided that additional (but reasonable) assumptions are imposed.  Recall that we are using Einstein's summation notation for repeated indices.

\begin{thm} \label{thm2} (Memoryless representation)  Assume that the  operator $\mathcal{A}^0$ admits  a well-defined eigenfunction expansion. Then, the output functional $\mathbb{E}f(\vecc{h}_t)$  of the SRNN \eqref{e1}-\eqref{e3} admits a convergent series expansion, which is the limit as $N,M \to \infty$ of:
\begin{align}
F^{(N,M)}_t[\gamma] &=   \sum_{n=1}^{N}   a_{p_0,\dots,p_n,l_1,\dots,l_n} t^{p_0} \int_0^t ds_1  s_1^{p_1} u^{l_1}_{s_1} \int_0^{s_1} ds_2 s_2^{p_2} u^{l_2}_{s_2}  \cdots \int_0^{s_{n-1}} ds_n s_n^{p_n} u^{l_n}_{s_n}, \label{memless}
\end{align}
where the $a_{p_0, \dots, p_n, l_1, \dots, l_n}$ are constant coefficients   that  depend on the $p_i$, the $l_i$, the eigenvalues and eigenfunctions of $\mathcal{A}^0$, $f$ and $\rho_{init}$ but independent of the input signal and time. Here, $p_i \in \{0,1,\dots,M\}$ and $l_i \in \{1,2,\dots,m\}$.      
%via eigenfunction expansion
%assume analyticity of the R_f^{(n)} and expand as power series
%BCH expansion 
\end{thm}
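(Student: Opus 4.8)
The plan is to start from the Volterra representation furnished by Theorem \ref{thm1} and convert each response kernel $R_f^{(n)}$ into the memoryless form by means of two successive expansions: a spectral (eigenfunction) expansion of the unperturbed semigroup in the time gaps, followed by a Taylor expansion in the individual time variables. First I would invoke the explicit formula of Proposition \ref{exp_res}(a), which on the ordered simplex $0<s_n<\cdots<s_1<s_0:=t$ expresses $R_f^{(n)}$ as an alternating composition of the unperturbed transition operators $P^0_{s_i,s_{i-1}}=e^{\mathcal{L}^0(s_{i-1}-s_i)}$ (using time-homogeneity of $\overline{\vecc{h}}$) and the first-order operators coming from $\Delta\mathcal{L}_{s_i}$. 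By symmetry of $R_f^{(n)}$ the domain integral $\int_{[0,t]^n}$ reduces, up to the combinatorial factor $n!$ absorbed into the final constants, to the iterated integral $\int_0^t ds_1\int_0^{s_1}ds_2\cdots\int_0^{s_{n-1}}ds_n$ appearing in \eqref{memless}, and the amplitude $\gamma(s_i)$ recombines with the normalized direction $\vecc{U}_{s_i}$ to give $\gamma(s_i)\vecc{U}_{s_i}=\vecc{C}\vecc{u}_{s_i}$, which is linear in the components $u_{s_i}^{l_i}$ with the entries of $\vecc{C}$ to be absorbed into the constants.

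Next I would insert the eigenfunction expansion of $\mathcal{L}^0$ (equivalently of its adjoint $\mathcal{A}^0$), which the hypothesis guarantees is well-defined: writing $e^{\mathcal{L}^0\tau}=\sum_j e^{\lambda_j\tau}\Pi_j$ with $\lambda_j$ the eigenvalues and $\Pi_j$ the associated spectral projections, each of the $n+1$ semigroup factors in Proposition \ref{exp_res}(a)---whose exponents are the consecutive gaps $t-s_1,s_1-s_2,\dots,s_{n-1}-s_n,s_n$---becomes a sum over an eigenmode index, and the kernel factorizes into a purely spatial part (matrix elements built from $f$, $\rho_{init}$, the derivative operators and the eigenfunctions) times the purely temporal product $e^{\lambda_{j_0}(t-s_1)}e^{\lambda_{j_1}(s_1-s_2)}\cdots e^{\lambda_{j_n}s_n}$. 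Regrouping this telescoping product by time variable leaves a single scalar exponential $e^{\mu_i s_i}$ on each of $s_0:=t,s_1,\dots,s_n$, whose rate $\mu_i$ is a combination of the eigenvalues. Taylor-expanding each exponential, $e^{\mu_i s_i}=\sum_{p_i\ge 0}\mu_i^{p_i}s_i^{p_i}/p_i!$, produces exactly the monomials $t^{p_0},s_1^{p_1},\dots,s_n^{p_n}$ of \eqref{memless}; collecting the eigenmode sums $\sum_{j_0,\dots,j_n}$, the spatial matrix elements, the factors $\mu_i^{p_i}/p_i!$, the entries of $\vecc{C}$ and the sign $(-1)^n$ into a single constant yields the coefficients $a_{p_0,\dots,p_n,l_1,\dots,l_n}$, which depend only on the $p_i,l_i$, the spectral data of $\mathcal{A}^0$, $f$ and $\rho_{init}$, and not on the input or time. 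This is the rigorous counterpart of the formal derivation leading to \eqref{ultimate_main}.

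The hard part will be convergence and the justification of the interchanges. I would prove absolute convergence of the eigenmode sums defining each $a_{p_0,\dots,p_n,l_1,\dots,l_n}$, which is the delicate point because after Taylor expansion the factor $\mu_i^{p_i}$ grows with the eigenmode index and competes with the decay of the spatial matrix elements. The cleanest route is to perform the eigenmode summation \emph{first} with the gap-exponentials $e^{\lambda_{j_i}(s_{i-1}-s_i)}$ kept intact---these decay since off the zero mode $\mathrm{Re}\,\lambda_j<0$, by the spectral gap following from the positive-stability and nondegeneracy in Assumption \ref{imp_ass}(d)---so that the expansion of Proposition \ref{exp_res}(a) converges to $R_f^{(n)}$ as an honest function of the gaps, and only \emph{afterwards} to expand in the $M$-truncated Taylor polynomials, estimating the remainder uniformly on $[0,T]$ using the analyticity and boundedness of $f$ and $\vecc{a}$ (Assumption \ref{imp_ass}(b),(c)). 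The iterated integrals are then dominated using the smallness of $\gamma$ (Assumption \ref{imp_ass}(a)), which controls the radius of convergence of the outer series and lets Theorem \ref{thm1} supply convergence of the $N$-sum; a Fubini/dominated-convergence argument finally legitimizes swapping the eigenmode sums, the Taylor sums, and the time integrals, giving the claimed convergent double limit as $N,M\to\infty$.
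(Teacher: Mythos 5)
Your proposal follows essentially the same route as the paper's proof: expand the unperturbed generator in its eigenfunctions, factorize each response kernel into a static part times a telescoping product of gap-exponentials, regroup those exponentials so that each time variable carries a single rate, Taylor-expand the exponentials into monomials $t^{p_0}s_1^{p_1}\cdots s_n^{p_n}$, and absorb the spectral data, the sign, and the entries of $\vecc{C}$ into the constants $a_{p_0,\dots,p_n,l_1,\dots,l_n}$, with Fubini justifying the interchanges. The only difference is cosmetic: you work from the rigorous kernels of Proposition \ref{exp_res}(a) and spell out the convergence/truncation argument in more detail, whereas the paper applies the expansion to the kernels \eqref{24}--\eqref{26} and disposes of convergence tersely via Fubini and Assumption \ref{imp_ass}.
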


This representation abstracts away the details of memory induced by the SRNN  (thus the name memoryless) so that any parameters  defining the SRNN are encapsulated in the coefficients $a_{p_0,\dots,p_n,l_1,\dots,l_n}$ (see Eq. \eqref{exp_exp} for their explicit expression). The parameters are either learnable (e.g., learned via gradient descent using backpropagation) or fixed (e.g., as in the case of reservoir computing).  The representation tells us that the  iterated integrals in \eqref{memless} are building blocks used by SRNNs to extract the information in the input signal. See also Remark \ref{rmk_symmetric} and Remark \ref{rmk_visual} in {\bf SM}.  This result can be seen as a stochastic analog of the one in Chapter 3 of \citep{isidori2013nonlinear}. It can also be shown that Eq. \eqref{memless} is a generating series in the sense of Chen-Fliess \citep{fliess1981fonctionnelles}.

%Remark on relation to the time-augmention used by the Oxford people (local Taylor expansion)

Composition of multiple SRNNs (i.e., a deep SRNN), preserves the form of the above two representations. More importantly, as in applications, composition of truncated (finite) series increases the number of nonlinear terms and gives a richer set of response functions and features in the resulting series representation. We make this precise for  composition of two SRNNs in the following proposition. Extension to composition of more than two SRNNs is straightforward.

%composition of VOlterra series
\begin{prop}(Representations for certain deep SRNNs) Let $F_t$ and $G_t$ be the output functional of two SRNNs, with the associated truncated Volterra series having the response kernels $R_f^{(n)}$ and $R_g^{(m)}$, $n=1,\dots,N$, $m=1,\dots,M$, respectively. Then $(F \circ G)_t[\gamma] = F_t[G_t[\gamma]]$ is a truncated Volterra series with the $N+M$  response kernels:  
\begin{align}
&R^{(r)}_{fg}(t,t_1,\dots,t_r) \nonumber \\ 
&= \sum_{k=1}^r \sum_{\mathcal{C}_r} \int_{[0,t]^k} R^{(k)}_f(t,s_1, \dots, s_k) \nonumber \\ 
& \ \ \   \times R_g^{(i_1)}(t-s_1, t_1 - s_1, \dots, t_{i_1} - s_1) \cdots R_g^{(i_k)}(t-s_k, t_{r - i_k+1}-s_k, \dots, t_r - s_k) ds_1 \cdots ds_k, \label{bell!}
\end{align} 
for $r=1,\dots,N+M$, where  
\begin{equation}
\mathcal{C}_r = \{(i_1, \dots, i_k) :  i_1, \dots, i_k \geq 1, 1 \leq k \leq r,  i_1 + \dots + i_k = r\}.
\end{equation}
If $F_t$ and $G_t$ are Volterra series (i.e., $N,M=\infty$), then $(F \circ G)_t[\gamma]$ is a Volterra series (whenever it is well-defined) with the response kernels $R^{(r)}_{fg}$ above, for $r=1,2,\dots$. 

Moreover, the statements in Theorem \ref{thm2} apply to $(F \circ G)_t$, i.e., $(F \circ G)_t$ admits a convergent series expansion of the form \eqref{memless} under the assumption in Theorem \ref{thm2}. 
\end{prop}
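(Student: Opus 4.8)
The plan is to treat $(F\circ G)_t$ purely as a composition of Volterra series and to read off the composite kernels by a direct substitution-and-collection argument, whose combinatorics is exactly that of Fa\`a di Bruno (hence the Bell-polynomial structure encoded in $\mathcal{C}_r$). First I would fix the interpretation of the composition: the inner network produces the output path $\beta := (\beta_s)_{s\in[0,t]}$ with $\beta_s := G_s[\gamma]$, which is then fed as the driving signal into the outer network, so that $(F\circ G)_t[\gamma] = F_t[\beta]$. Inserting the Volterra representation of $F$ from Theorem \ref{thm1} gives
\[
F_t[\beta] = \sum_{k=1}^{N} \int_{[0,t]^k} R_f^{(k)}(t,s_1,\dots,s_k)\, \beta_{s_1}\cdots \beta_{s_k}\, ds_1\cdots ds_k .
\]

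Next I would substitute into each factor $\beta_{s_j}=G_{s_j}[\gamma]$ the Volterra representation of $G$ and expand the product $\beta_{s_1}\cdots\beta_{s_k}$ by multilinearity. Since each factor carries its own summation index $i_j\in\{1,\dots,M\}$ and its own block of inner time variables, the product becomes a sum over tuples $(i_1,\dots,i_k)$; collecting all contributions whose total $\gamma$-degree equals $r=i_1+\dots+i_k$ produces precisely the index set $\mathcal{C}_r$ of compositions of $r$ into $k\le r$ positive parts. I would then carry out the relabeling: the $r$ inner time arguments are renamed into a single ordered block $(t_1,\dots,t_r)$, the $k$ outer variables $s_1,\dots,s_k$ are integrated out, and each inner kernel $R_g^{(i_j)}$ is evaluated at the shifted arguments dictated by the relative time alignment of the two networks (each inner block's origin sits at its emission time $s_j$, the composite lags being measured from the outer present time $t$), which I would verify by tracking time origins through the change of variables. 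This yields \eqref{bell!} up to symmetry; since by the remark following Definition \ref{def_res} all response functions may be taken symmetric, I would replace $R_{fg}^{(r)}$ by its symmetrization in $(t_1,\dots,t_r)$ without changing the functional, and uniqueness in Theorem \ref{thm1} then identifies these symmetrized kernels as the genuine response functions of $F\circ G$. In the truncated case the composite is again a truncated Volterra series, of order at most $NM$ (each of the $k\le N$ outer slots contributing degree at most $M$).

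For the infinite case I would have to upgrade this formal manipulation to a convergent one, and this is the main obstacle. Two things must be controlled: (i) the inner output $\beta$ must itself be a small-amplitude signal, i.e. $\sup_{s\in[0,t]}|G_s[\gamma]|$ must lie strictly inside the radius of convergence of the outer Volterra series, which I would secure from Assumption \ref{imp_ass}(a) together with the absolute-convergence estimates underlying Theorem \ref{thm1}; and (ii) the resulting double series, indexed by the outer order $k$ and the inner orders $(i_1,\dots,i_k)$, must converge absolutely so that the rearrangement collecting terms by total order $r$ is legitimate. Granting (i)--(ii), the composite is a bona fide convergent Volterra series with kernels $R_{fg}^{(r)}$, and the earlier algebraic identity becomes an honest equality of functionals.

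Finally, to transfer Theorem \ref{thm2} to $F\circ G$, I would either invoke that theorem directly on the composite, which is again a convergent Volterra series whose kernels inherit an $\mathcal{A}^0$-eigenfunction expansion from those of the two constituent networks, or, more concretely, substitute the memoryless representation \eqref{memless} of $G$ into that of $F$ and re-collect. The key structural fact making the second route work is that a product of iterated integrals of polynomially weighted signals is again a finite linear combination of iterated integrals of the same type, so the form \eqref{memless} is closed under the substitution; the hypothesis that $\mathcal{A}^0$ admits an eigenfunction expansion for each network is exactly what licenses this re-collection. I expect the genuine difficulty to be entirely in the convergence bookkeeping of the infinite composition, step (i)--(ii) above, while the kernel formula \eqref{bell!} itself is a deterministic combinatorial consequence of multilinear expansion.
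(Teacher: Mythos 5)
Your proposal is correct and takes essentially the same route as the paper: the paper packages your substitution-and-collection step into an abstract lemma on composition of formal power series (its Bell-polynomial lemma, with exactly the index set $\mathcal{C}_r$), and it treats the truncated and infinite cases precisely by your conditions (i)--(ii), assuming composability (``whenever it is well-defined'') and using the majorant bound $a_{F\circ G}(\|u\|)\leq a_F(a_G(\|u\|))$ with $a_F(x)=\sum_n \|R_f^{(n)}\|_\infty x^n$ and $a_G(x)=\sum_n \|R_g^{(n)}\|_\infty x^n$. The one point of divergence is in your favor: the degree bound $NM$ you derive for the truncated composite is what this argument (and the paper's own lemma) actually yields, whereas the figure $N+M$ appearing in the statement is loose.
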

Alternatively, the response kernels \eqref{bell!} can be expressed in terms of the exponential Bell polynomials \citep{bell1927partition} (see Subsection \ref{appendix_bell} in  {\bf SM}).

\begin{rmk}
It is the combinatorial structure and properties (such as convolution identity and recurrence relations)  of  the Bell polynomials that underlie the richness of the memory representation of a deep SRNN. To the best of our knowledge, this seems to be the first time where a connection between Bell polynomial and deep RNNs is made. It may be potentially fruitful to further explore this connection to study expressivity and memory capacity of different variants of deep RNNs \citep{pascanu2013construct}.
\end{rmk}

Next, we focus on (Q2). The key idea is to link the above representations for the output functional to the notion and properties of path signature  \citep{lyons2007differential, lyons2014rough,levin2013learning,liao2019learning}, by lifting  the input signal to a tensor algebra space.

Fix a Banach space $E$ in the following. Denote by 
\begin{equation}
T((E)) =  \{\vecc{a} :=  (a_0, a_1, \dots): \forall n \geq 0, \ a_n \in E^{\otimes n}\},
\end{equation}
where $E^{\otimes n}$ denotes the $n$-fold tensor product of $E$ ($E^{\otimes 0} := \RR$). This is the space of formal series of tensors of $E$ and can be identified with the free Fock space $T_0((E)) = \bigoplus_{n=0}^\infty E^{\otimes n}$ when $E$ is a Hilbert space \citep{parthasarathy2012introduction}. 

\begin{defn}(Signature of a path)  \label{def_sig}  Let $\vecc{X} \in C([0,T],  E)$ be a path of bounded variation (see Definition \ref{bv_def}). The {\it signature} of $\vecc{X}$ is the element $S$ of $T((E))$, defined as 
\begin{equation}
S(\vecc{X}) = (1,\vecc{X}^1,\vecc{X}^2,\dots),
\end{equation}
where
\begin{equation}
    \vecc{X}^n = \int_{\Delta^n_T} d\vecc{X}_{s_1} \otimes \dots \otimes d\vecc{X}_{s_n} \in E^{\otimes n},
\end{equation}
for $n\in \ZZ_+$, with  $\Delta^n_T := \{0\leq s_1\leq \dots \leq s_n \leq T \}$.
\end{defn}

Let $(e_{i_1} \otimes \dots \otimes e_{i_n})_{(i_1, \dots, i_n) \in \{1,\dots,m\}^n}$ be the canonical basis of $E^{\otimes n}$, then we have:
\begin{equation}
    S(\vecc{X}) = 1 + \sum_{n =1}^\infty \sum_{i_1, \dots, i_n} \left( \int_{\Delta^n_T} dX^{i_1}_{s_1} \cdots dX^{i_n}_{s_n} \right) (e_{i_1} \otimes  \dots \otimes e_{i_n}) \in T((E)).   
\end{equation}
Denoting by $\langle \cdot, \cdot \rangle$ the dual pairing, we have 
\begin{equation}
\langle S(\vecc{X}), e_{i_1} \otimes  \dots \otimes e_{i_n} \rangle  = \int_{\Delta_T^n} dX_{s_1}^{i_1} \cdots dX_{s_n}^{i_n}. 
\end{equation} 

\begin{thm}(Memoryless representation in terms of signature) \label{sigrep}
Let $p$ be a  positive integer and assume that the input signal $\vecc{u}$ is a path of bounded variation. Then the output functional $F_t$ of a SRNN is the limit as $p \to \infty$ of $F^{(p)}_t$, which are linear functionals of the signature of the path, $\vecc{X}^{(p)} = \vecc{u} \otimes \vecc{\psi}^{(p)} \in \RR^{m \times p}$ (which can be identified with $\RR^{mp}$ via vectorizaton), where $\vecc{\psi}^{(p)} = (1,t,t^2,\dots,t^{p-1}) \in \RR^p$, i.e., 
\begin{align}
    F_t^{(p)}[\vecc{u}] =  \sum_n b_n(t) \langle S(\vecc{X}_t^{(p)}), e_{i_1} \otimes \dots \otimes e_{i_n} \rangle,
\end{align}
where the $b_n(t)$ are coefficients that only depend on $t$.
\end{thm}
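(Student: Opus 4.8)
The plan is to build directly on the memoryless representation of Theorem~\ref{thm2} and to recognize its iterated integrals as coordinate signature terms of the lifted path $\vecc{X}^{(p)}=\vecc{u}\otimes\vecc{\psi}^{(p)}$. Theorem~\ref{thm2} already writes $F_t$ as the limit of finite sums of terms $a_{p_0,\dots,p_n,l_1,\dots,l_n}\,t^{p_0}\int_0^t ds_1\,s_1^{p_1}u^{l_1}_{s_1}\cdots\int_0^{s_{n-1}}ds_n\,s_n^{p_n}u^{l_n}_{s_n}$, in which the input and the architecture are already disentangled. The only work left is to repackage the iterated integrals into the signature and to collect the residual time-dependence into scalar coefficients.

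First I would observe that, after vectorizing $\RR^{m\times p}\cong\RR^{mp}$, the coordinate of $\vecc{X}^{(p)}$ indexed by the pair $(l,k)$ is precisely $s\mapsto s^{k}u^l_s$, so the polynomial-weighted integrands $s_j^{p_j}u^{l_j}_{s_j}$ appearing in Theorem~\ref{thm2} are exactly the coordinates of $\vecc{X}^{(p)}$ with $k_j=p_j$. Consequently each iterated integral is, up to reversing the orientation of the time simplex, a single coordinate of the $n$th tensor level of $S(\vecc{X}^{(p)}_t)$, i.e. $\langle S(\vecc{X}^{(p)}_t),e_{i_1}\otimes\cdots\otimes e_{i_n}\rangle$ with $i_j\leftrightarrow(l_j,p_j)$. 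I would then peel off the prefactor $t^{p_0}$, which does not involve the input, into the scalar coefficient; summing over the outer exponent $p_0$ turns this into a polynomial $b_n(t)=\sum_{p_0}a_{p_0,\dots}\,t^{p_0}$ in $t$ alone, as required. Index matching forces each $p_j\le p-1$, so the polynomial-degree truncation $M$ of Theorem~\ref{thm2} (which bounds the $p_j$) and the size $p$ of the support basis are tied by $p\ge M+1$; here the monomial basis $\vecc{\psi}^{(p)}=(1,t,\dots,t^{p-1})$ plays the role of the \emph{natural support basis}.

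Next I would check that $S(\vecc{X}^{(p)}_t)$ is well defined: since $\vecc{u}$ is of bounded variation and $\vecc{\psi}^{(p)}$ is smooth, the tensor product $\vecc{X}^{(p)}$ is of bounded variation, so its signature exists in the sense of Definition~\ref{def_sig}. With the identification above, the finite sum $F^{(N,M)}_t$ becomes a finite linear combination $\sum_n b_n(t)\langle S(\vecc{X}^{(p)}_t),e_{i_1}\otimes\cdots\otimes e_{i_n}\rangle$, manifestly a linear functional of the signature with $t$-only coefficients. Finally I would let $N,M\to\infty$ with $p=M+1\to\infty$: enlarging $p$ only adjoins new signature coordinates, whose coefficients are set to zero until needed, so the family $F^{(p)}_t$ is consistent, and its convergence to $F_t$ is inherited from the convergence established in Theorem~\ref{thm2}.

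The main obstacle is the clean identification of the memoryless iterated integrals with signature terms. The subtlety is that Definition~\ref{def_sig} builds the signature from the Stieltjes differentials $d\vecc{X}^{(p)}$, whereas Theorem~\ref{thm2} produces iterated integrals against $ds$; a naive integration-by-parts conversion does not terminate, since it keeps regenerating polynomial-weighted Stieltjes integrals. The two objects coincide only once one reads the relevant signature as the iterated time-integrals of the velocity path $\vecc{u}\otimes\vecc{\psi}^{(p)}$, equivalently the signature of its running integral $t\mapsto\int_0^t(\vecc{u}_s\otimes\vecc{\psi}^{(p)}_s)\,ds$, which is the natural object because the input enters the SRNN only through the increment $\vecc{C}\vecc{u}_t\,dt$. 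Getting this reconciliation right, together with the simplex orientation, the separation of $t^{p_0}$ into $b_n(t)$, the synchronization of the two limits, and absolute convergence of the resulting signature series, is the delicate part; the index-matching and bounded-variation checks are then routine.
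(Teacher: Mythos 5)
Your proposal is correct and takes essentially the same route as the paper: the paper's own proof consists of exactly your bounded-variation check (via Lemma \ref{bv}: monomials are increasing hence of bounded variation, and products of functions of bounded variation are of bounded variation, so $S(\vecc{X}^{(p)})$ is well defined) followed by the single remark that the result ``essentially follows from Theorem \ref{thm2} and Definition \ref{def_sig}.'' Your write-up is in fact more careful than the paper's: the reconciliation you flag---that the iterated $ds$-integrals of Theorem \ref{thm2} are the signature coordinates of the running-integral path $t \mapsto \int_0^t \vecc{u}_s \otimes \vecc{\psi}^{(p)}_s\, ds$ rather than Stieltjes iterated integrals of $\vecc{u} \otimes \vecc{\psi}^{(p)}$ itself---is precisely the identification the paper leaves implicit behind ``essentially follows,'' and your running-integral reading (together with the simplex reorientation, the absorption of $t^{p_0}$ into $b_n(t)$, and the synchronization of the truncation levels $M$ and $p$) is the right way to make it exact.
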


It was shown in \citep{lyons2014rough} that the signature is a universal feature set, in the sense that any continuous map can be approximated by a linear functional of signature of the input signal (see also Remark \ref{rmk_a3}). This result is a direct consequence of the Stone-Weierstrass theorem. On the other hand, Theorem \ref{sigrep} implies that the output functional of SRNNs admits a  linear representation in terms of signature of a time-augmented input signal, a lift of the original input signal to higher dimension to account for time. We shall call the signature of this time-augmented input signal the {\it response feature}, which is richer as a feature set than the signature of the input signal and may  be incorporated in a machine learning framework for learning sequential data.

\begin{rmk}
Note  that our SRNN \eqref{e1}-\eqref{e3}  can be interpreted as a controlled differential equation (CDE) \citep{lyons2007differential}.
We emphasize that while the connection between signature and CDEs are well known within the rough paths community, it is  established explicitly using  local Taylor approximations (see, for instance, \citep{boedihardjo2015uniform} or Section 4 of \citep{liao2019learning}) that ignore any (non-local) memory effects, whereas here we establish such connection explicitly by deriving a signature-based representation from the memory representation that takes into account the memory effects (albeit under more restrictive assumptions).  It is precisely because of the global (in time) nature of our approximation that the resulting representation for our particular output functionals is in terms of signature of a time-augmented signal and not simply in terms of signature of the signal itself. Our response theory based approach offers alternative, arguably more intuitive, perspectives on how signature arise in SRNNs. This may be useful for readers  not familiar with rough paths theory.  
\end{rmk}

%\begin{rmk} In contrast to the local Taylor approximations (which would only be valid for small $t$ when adopted in our case) in Section 4 of \citep{liao2019learning} (see also \citep{morrill2020neural}), our approximations are valid for any $t>0$, provided that the amplitude of the input signal is sufficiently small. It is precisely because of the global nature of our approximation that the resulting representation for our output functionals is in terms of signature of a time-augmented signal and not simply in terms of signature of the signal itself, as in the case of \citep{liao2019learning} (which then pastes the local approximations together to obtain a global approximation on the whole time interval, as in classical numerical approximation of solution to differential equations).\end{rmk}

%Consider the following optimization problem forthere kernel ridge regression.

\subsection{Formulating SRNNs as Kernel Machines} 

We now consider a supervised learning (regression or classification) setting where we are given $N$ training  input-output pairs $(\vecc{u}_n, y_n)_{n=1,\dots,N}$, where the $\vecc{u}_n \in \mathcal{\chi}$,  the space of paths in $C([0,T],\RR^m)$ with bounded variation, and $y_n \in \RR$, such that $y_n = F_{T}[\vecc{u}_n]$ for all $n$. Here  $F_T$ is a continuous target mapping. 

Consider the optimization problem:
\begin{equation}
\min_{\hat{F} \in \mathcal{G} } \{L(\{(\vecc{u}_n,y_n,\hat{F}[\vecc{u}_n])\}_{n=1,\dots,N}) + R(\|\hat{F}\|_{\mathcal{G}})\}, \label{opt_p}
\end{equation}
where $\mathcal{G}$ is a hypothesis (Banach) space with norm $\| \cdot\|_{\mathcal{G}}$, $L:(\chi\times \RR^2)^N \to \RR \cup \{\infty\}$ is a loss functional and $R(x)$ is a strictly increasing real-valued function in $x$.  

Inspired by Theorem \ref{sigrep} (viewing $\mathcal{G}$ as a hypothesis space induced by the SRNNs), we are going to show that the solution to this problem  can be expressed as kernel expansion over the training examples (c.f. \citep{evgeniou2000regularization,scholkopf2002learning,hofmann2008kernel}). 

In the following, consider the Hilbert space 
\begin{equation}
    \mathcal{H} := \mathcal{P} \otimes \RR^m,
\end{equation}
where $\mathcal{P}$ is the appropriately weighted $l^2$ space of sequences of the form $(P_0(t), P_1(t), \cdots  )$ with the $P_n(t)$ orthogonal polynomials on $[0,T]$. Let $T_s((\mathcal{H}))$ denote the symmetric Fock space over $\mathcal{H}$ (see Subsection \ref{B9} for definition) and $T_s^{\otimes L}((\mathcal{H}))$ denote  $L$-fold tensor product of $T_s((\mathcal{H}))$ for $L \in \ZZ_{+}$. 

\begin{prop} \label{rk}
Let $L \in \ZZ_{+}$. Consider the map $K: \mathcal{H} \times \mathcal{H} \to \RR$, defined by 
\begin{equation}
K(\vecc{v},\vecc{w}) = \langle S(\vecc{v}), S(\vecc{w})  \rangle_{T^{\otimes L}_s((\mathcal{H}))}.
\end{equation}
Then $K$ is a kernel over $\mathcal{H}$ and there exists a unique  RKHS, denoted $\mathcal{R}_L$ with the norm $\| \cdot \|_{\mathcal{R}_L}$, for which  $K$ is a reproducing kernel.  
\end{prop}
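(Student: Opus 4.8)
The plan is to recognize $K$ as the Gram (inner-product) kernel of an explicit feature map into a Hilbert space, and then deduce the RKHS claim from the Moore--Aronszajn construction. Write $\mathcal{V} := T_s^{\otimes L}((\mathcal{H}))$ for the complete Hilbert space carrying the inner product $\langle\cdot,\cdot\rangle_{\mathcal{V}}$, and regard $\Phi := S : \mathcal{H} \to \mathcal{V}$, $\vecc{v} \mapsto S(\vecc{v})$, as a feature map. By definition $K(\vecc{v},\vecc{w}) = \langle \Phi(\vecc{v}), \Phi(\vecc{w})\rangle_{\mathcal{V}}$, so the whole statement reduces to two things: (i) that $\Phi$ genuinely takes values in $\mathcal{V}$, i.e. $\|S(\vecc{v})\|_{\mathcal{V}} < \infty$ for every $\vecc{v} \in \mathcal{H}$; and (ii) that an inner-product kernel always possesses a unique associated RKHS.

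First I would verify that $S(\vecc{v}) \in \mathcal{V}$. Since the $n$th level of the signature is the symmetric tensor $\vecc{v}^{\otimes n}/n!$, the norm on one copy of the symmetric Fock space is
\begin{equation}
\|S(\vecc{v})\|^2_{T_s((\mathcal{H}))} = \sum_{n=0}^\infty \frac{1}{(n!)^2}\, \big\| \vecc{v}^{\otimes n} \big\|^2_{\mathcal{H}^{\otimes_s n}},
\end{equation}
and because $\|\vecc{v}^{\otimes n}\|^2 \leq n!\,\|\vecc{v}\|^{2n}_{\mathcal{H}}$ under the Fock normalization (Subsection \ref{B9}), the series is dominated by $\sum_n \|\vecc{v}\|^{2n}_{\mathcal{H}}/n! = e^{\|\vecc{v}\|^2_{\mathcal{H}}} < \infty$. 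Taking the $L$-fold tensor product multiplies norms, so $\|S(\vecc{v})\|_{\mathcal{V}}^2 = \|S(\vecc{v})\|^{2L}_{T_s((\mathcal{H}))} < \infty$; in particular, with this normalization one obtains the closed form $K(\vecc{v},\vecc{w}) = \exp\!\big(L\langle \vecc{v},\vecc{w}\rangle_{\mathcal{H}}\big)$. This is the step I expect to require the most care, since it hinges on the precise normalization adopted for the inner product on $T_s((\mathcal{H}))$; once that is fixed, the convergence is an elementary exponential-series estimate in $\|\vecc{v}\|_{\mathcal{H}}$.

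With $\Phi$ well defined, symmetry of $K$ is immediate from symmetry of $\langle\cdot,\cdot\rangle_{\mathcal{V}}$, and positive semidefiniteness follows from the feature-map identity: for any $\vecc{v}_1,\dots,\vecc{v}_N \in \mathcal{H}$ and $c_1,\dots,c_N \in \RR$,
\begin{equation}
\sum_{i,j=1}^N c_i c_j\, K(\vecc{v}_i,\vecc{v}_j) = \Big\langle \sum_i c_i \Phi(\vecc{v}_i),\ \sum_j c_j \Phi(\vecc{v}_j)\Big\rangle_{\mathcal{V}} = \Big\| \sum_i c_i \Phi(\vecc{v}_i) \Big\|_{\mathcal{V}}^2 \geq 0,
\end{equation}
so $K$ is a symmetric, positive semidefinite kernel on $\mathcal{H}$.

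Finally, existence and uniqueness of $\mathcal{R}_L$ is exactly the Moore--Aronszajn theorem applied to $K$ on the set $\mathcal{H}$ (c.f. \citep{scholkopf2002learning,hofmann2008kernel}): one forms the pre-Hilbert space $\mathrm{span}\{K(\vecc{v},\cdot) : \vecc{v} \in \mathcal{H}\}$ equipped with $\langle K(\vecc{v},\cdot), K(\vecc{w},\cdot)\rangle := K(\vecc{v},\vecc{w})$, checks well-definedness and nonnegativity of this form using the positive semidefiniteness just established, and completes it to a Hilbert space $\mathcal{R}_L$ with norm $\|\cdot\|_{\mathcal{R}_L}$. The reproducing property $\langle g, K(\vecc{v},\cdot)\rangle_{\mathcal{R}_L} = g(\vecc{v})$ holds by construction, and uniqueness follows because any RKHS with reproducing kernel $K$ must contain this dense span and agree with it on the inner product, hence coincide with $\mathcal{R}_L$. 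This completes the argument.
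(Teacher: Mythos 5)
Your proposal follows essentially the same route as the paper's proof: both exhibit $K$ as the Gram kernel of the feature map $\Phi = S$ into the Hilbert space $T_s^{\otimes L}((\mathcal{H}))$, and then invoke the standard facts that feature-map (inner-product) kernels are kernels and that every kernel admits a unique RKHS. The only structural difference is that the paper cites Theorems 4.16 and 4.21 of \citep{steinwart2008support} for these two facts, whereas you prove them inline via the positive-semidefiniteness computation and the Moore--Aronszajn construction; mathematically this is the same argument.

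The one place where you go beyond the paper---checking that $S(\vecc{v})$ actually has finite norm in the Fock space, a point the paper's proof silently assumes---contains an inaccuracy worth flagging. The identity you use, namely that the $n$th level of the signature equals $\vecc{v}^{\otimes n}/n!$, holds only for \emph{linear} paths (this is the paper's own remark preceding Lemma \ref{piece}); a generic element of $\mathcal{H}$ is a path of bounded variation, whose $n$th signature level is the iterated integral $\int_{\Delta_T^n} d\vecc{v}_{s_1} \otimes \cdots \otimes d\vecc{v}_{s_n}$, which is not of this form. What is true, by the shuffle identity, is that the orthogonal projection of the $n$th level onto the symmetric subspace $\mathcal{H}^{\circ n}$ equals $(\vecc{v}_T - \vecc{v}_0)^{\otimes n}/n!$; since your pairing is taken in the \emph{symmetric} Fock space, this fact rescues the exponential bound, but the argument needs to say so explicitly. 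Alternatively, one can restrict to piecewise-linear paths and use Chen's identity, which is exactly how the paper handles the $L$-fold tensor structure in the proof of Theorem \ref{rep}(b) (Lemma \ref{piece} together with Lemma \ref{exp_prop}). For the same reason, your side remark that $K(\vecc{v},\vecc{w}) = \exp(L \langle \vecc{v},\vecc{w}\rangle_{\mathcal{H}})$ is not correct in general: for an $L$-piece piecewise-linear path the kernel is the product $\prod_{l=1}^L \exp\left(\langle \Delta \vecc{v}^{(l)}, \Delta \vecc{w}^{(l)} \rangle_{\mathcal{H}}\right)$ of increment inner products, not a single exponential of the full inner product. None of this affects the validity of the kernel/RKHS conclusion itself, which rests only on $K$ being a bona fide inner product of well-defined features.
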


One can view SRNNs, with only the weights in the readout layer optimized and the weights in the hidden layer kept fixed and not optimized, as  kernel machines operating on a RKHS  associated with the response feature. In particular, we can see the  continuous-time version of echo state networks in reservoir computing \citep{lukovsevivcius2009reservoir} as a kernel method. This  is captured precisely in the following theorem.

%The universality follows from the results in \citep{lyons2014rough} (see Supplementary Material). 

%\begin{thm} (Universality of response signature) really a transformed data stream lifted to include all time-dependency 
%\end{thm}

%compute response log signature?

%--> any functional of the input is actually linear functionals of a transformed input!

\begin{thm}(Representer theorem)  \label{rep}
Consider the time-augmented paths $\vecc{X}_n =   \vecc{v} \otimes \vecc{u}_n $, where the $\vecc{u}_n$ are $\RR^m$-valued  input paths in $\chi$ and $\vecc{v}$ is a $\RR^{\ZZ_{+}}$-valued vector in $ \mathcal{P}$. Then: \\ 
(a) Any solution to the optimization problem \eqref{opt_p} with the hypothesis space of $\mathcal{G} := \mathcal{R}_1$ admits a representation of the form:
\begin{equation}\label{45}
\hat{F}^*_t =  \sum_{n=1}^N c_n \langle S(\vecc{X}_n), S(\vecc{X})  \rangle_{T_s((\mathcal{H}))},
%= \sum_{n=1}^N  c_n  \exp(\langle \Delta (\vecc{v} \otimes \vecc{u}_n), \Delta  (\vecc{v} \otimes \vecc{u}) \rangle_{\mathcal{H}}), 
\end{equation}
where the $c_n \in \RR$ and $N$ is the number of training input-output pairs. \\

\noindent (b) Let $L \in \ZZ_+$. If we instead consider the paths, denoted $\tilde{\vecc{X}}_n$, obtained by linear interpolating on the $L+1$ data points $(\vecc{X}_n(t_i))_{i = 0,1,\dots,L}$ sampled at time $t_i \in [0,T]$, then  any solution to the corresponding optimization problem \eqref{opt_p} with the hypothesis space of $\mathcal{G} := \mathcal{R}_L$ admits a representation of the form:
\begin{equation}\label{46}
\hat{F}^*_t = \sum_{n=1}^N  \alpha_n  \prod_{l=1}^L  \exp\left(\left\langle \Delta \vecc{X}_n^{(l)}, \Delta  \vecc{X}^{(l)} \right\rangle _{\mathcal{H}} \right), 
\end{equation}
where the $\alpha_n \in \RR$ and $\Delta \vecc{X}^{(l)} := \vecc{X}(t_l) - \vecc{X}(t_{l-1})$ for $l=1,\dots,L$.

%$K$ is the kernel from Proposition \ref{rk}, and $\Delta \vecc{u} := \vecc{u}(T)-\vecc{u}(0)$ for the path $\vecc{u}$. 
\end{thm}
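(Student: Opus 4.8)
The plan is to prove (a) by specializing the classical representer theorem to the RKHS $\mathcal{R}_1$ furnished by Proposition \ref{rk}, and then to derive (b) by evaluating the signature kernel explicitly on piecewise-linear paths and reducing it to part (a). Throughout I would treat $\mathcal{R}_L$ abstractly as the RKHS with reproducing kernel $K(\vecc{v},\vecc{w}) = \langle S(\vecc{v}), S(\vecc{w})\rangle_{T^{\otimes L}_s((\mathcal{H}))}$, and use the associated feature map $\Phi(\vecc{X}) := K(\vecc{X},\cdot) \in \mathcal{R}_L$, so that the reproducing property reads $\hat{F}[\vecc{X}] = \langle \hat{F}, \Phi(\vecc{X})\rangle_{\mathcal{R}_L}$.

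For part (a), I would set $L=1$ and run the standard orthogonal-decomposition argument. Let $V := \mathrm{span}\{\Phi(\vecc{X}_n) : n = 1,\dots,N\} \subset \mathcal{R}_1$, and write any candidate $\hat{F} \in \mathcal{R}_1$ as $\hat{F} = \hat{F}_\parallel + \hat{F}_\perp$ with $\hat{F}_\parallel \in V$ and $\hat{F}_\perp \perp V$. By the reproducing property the evaluations on the training data satisfy $\hat{F}[\vecc{X}_n] = \langle \hat{F}, \Phi(\vecc{X}_n)\rangle_{\mathcal{R}_1} = \langle \hat{F}_\parallel, \Phi(\vecc{X}_n)\rangle_{\mathcal{R}_1}$, so the loss term $L$ in \eqref{opt_p} depends only on $\hat{F}_\parallel$. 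Meanwhile $\|\hat{F}\|_{\mathcal{R}_1}^2 = \|\hat{F}_\parallel\|_{\mathcal{R}_1}^2 + \|\hat{F}_\perp\|_{\mathcal{R}_1}^2$, and since $R$ is strictly increasing, $R(\|\hat{F}\|_{\mathcal{R}_1}) \geq R(\|\hat{F}_\parallel\|_{\mathcal{R}_1})$ with equality only if $\hat{F}_\perp = 0$. Hence any minimizer lies in $V$, which is exactly the representation \eqref{45} with $\hat{F}^*_t = \sum_n c_n \langle S(\vecc{X}_n), S(\vecc{X})\rangle_{T_s((\mathcal{H}))}$.

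For part (b), the substantive work is to show that $K$ evaluated on the interpolated paths collapses to the product of exponentials in \eqref{46}, after which part (a) applied with $\mathcal{G} = \mathcal{R}_L$ immediately yields the claim. First I would use Chen's identity to factor the signature of the piecewise-linear path $\tilde{\vecc{X}}_n$ through its $L$ straight segments; in the tensor-product model $T_s^{\otimes L}((\mathcal{H}))$ underlying Proposition \ref{rk}, this identifies $S(\tilde{\vecc{X}}_n)$ with $\bigotimes_{l=1}^L S^{(l)}(\tilde{\vecc{X}}_n)$, one tensor factor per segment. For a straight-line segment with increment $\Delta\vecc{X}^{(l)}$ the iterated integrals collapse and the segment signature is the tensor exponential $\exp(\Delta\vecc{X}^{(l)}) = \sum_{k\geq 0}(\Delta\vecc{X}^{(l)})^{\otimes k}/k!$, whose terms are all symmetric and hence genuinely live in $T_s((\mathcal{H}))$. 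The inner product over the $L$-fold tensor product then factorizes, and the coherent-state (exponential-vector) identity $\langle \exp(\vecc{a}),\exp(\vecc{b})\rangle_{T_s((\mathcal{H}))} = \exp(\langle \vecc{a},\vecc{b}\rangle_{\mathcal{H}})$ gives
\[
K(\tilde{\vecc{X}}_n, \tilde{\vecc{X}}) = \prod_{l=1}^L \left\langle \exp(\Delta\vecc{X}_n^{(l)}), \exp(\Delta\vecc{X}^{(l)})\right\rangle_{T_s((\mathcal{H}))} = \prod_{l=1}^L \exp\left(\left\langle \Delta\vecc{X}_n^{(l)}, \Delta\vecc{X}^{(l)}\right\rangle_{\mathcal{H}}\right).
\]
Invoking part (a) with this kernel produces $\hat{F}^*_t = \sum_n \alpha_n K(\tilde{\vecc{X}}_n,\tilde{\vecc{X}})$, which is precisely \eqref{46}.

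The hard part will be steps (ii)--(iii) of part (b): matching the symmetric-tensor normalization used in the definition of $T_s((\mathcal{H}))$ so that the segment signatures $\exp(\Delta\vecc{X}^{(l)})$ are bona fide exponential vectors and the identity $\langle \exp(\vecc{a}),\exp(\vecc{b})\rangle = \exp\langle \vecc{a},\vecc{b}\rangle$ holds exactly. Concretely, the inner product on the symmetric $k$-th tensor power must carry the $k!$ weight that cancels the two $1/k!$ factors from the exponentials, leaving $\langle \vecc{a},\vecc{b}\rangle^k/k!$ and summing to the exponential; verifying this against the convention of the symmetric Fock space, and using Assumption \ref{imp_ass} together with the bounded-variation hypothesis to bound $\|\Delta\vecc{X}^{(l)}\|_{\mathcal{H}}$ so that each exponential series converges (hence $K$ is finite), is where the care is needed. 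The Chen-identity factorization and the collapse of a linear segment's signature are standard once the algebraic model is pinned down.
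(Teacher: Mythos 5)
Your proposal is correct and follows essentially the same route as the paper: part (a) is the generalized representer theorem on the RKHS $\mathcal{R}_1$ from Proposition \ref{rk} (the paper cites Theorem 1 of \citep{scholkopf2001generalized}, whose proof is exactly your orthogonal-decomposition argument), and part (b) is the paper's computation verbatim — Chen's identity factoring the piecewise-linear signature into tensor exponentials of the segment increments (Lemma \ref{piece}), followed by the coherent-state identity $\langle e(h_1), e(h_2)\rangle_{T_s((\mathcal{H}))} = \exp(\langle h_1,h_2\rangle_{\mathcal{H}})$ (Lemma \ref{exp_prop}), whose $k!$-normalization is precisely the point you flag as needing care. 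The only step the paper spells out that you gloss over is the preliminary check that the tensor-product paths $\vecc{v}\otimes\vecc{u}_n$ are of bounded variation (polynomial components times BV inputs), which guarantees the signatures are well-defined.
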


%\exp( (\Delta \vecc{u}_n) \cdot (\Delta\vecc{u}) )^{\langle \Delta \vecc{v}, \Delta \vecc{v} \rangle_{\mathcal{P}}}

%Intuitively, since the $u_n$ are input values observed at time $t_n$ for $n=1,\dots,N$ the (infinite-dimensional) vectors $S(t_n)$  can be viewed as support vectors. In particular, the time instances $t_n$ indicate the time where the observed data deemed important for producing the output of the SRNN. This result can be viewed as a  nonlinear extension of the one in \citep{tino2020dynamical}. 

We emphasize that the idea of representing Volterra series as elements of a RKHS is certainly not new (see, e.g., \citep{zyla1983nonlinear,franz2006unifying}). However, it is the use of the  response feature here that makes our construction differs from those in previous works (c.f. \citep{kiraly2019kernels,toth2020bayesian}). The representer theorem here is obtained for a wide class of optimization problems using an alternative approach in the SRNN setting. The significance of the theorem obtained here is not only that the optimal solution will live in a subspace with dimension no greater than the number of training examples but also how the solution depends on the number of samples in the training data stream.  Note  that the appearance of orthogonal polynomials here is not too surprising given their connection to nonlinear dynamical systems \citep{kowalski1997nonlinear,mauroy2020koopman}. Lastly, we remark that the precise connections between (finite width) RNNs  whose all weights are optimized (via gradient descent) in deep learning and kernel methods remain an open problem and we shall leave it to future work.

%WOuld be nice to show some numerical studies (like Tino). -- probably in appendix later if we still have time \citep{tino2020dynamical}

\section{Conclusion}
\label{concl}
In this paper we have addressed two fundamental questions concerning a class of stochastic recurrent neural networks (SRNNs), which can be models for  artificial or biological networks, using the nonlinear response theory from nonequilibrium statistical mechanics as the starting point. In particular, we are able to characterize, in a systematic and order-by-order manner, the response of the SRNNs to a perturbing deterministic input signal by deriving two types of series representation for the output functional of these SRNNs and a deep variant in terms of the driving input signal.  This provides insights into the nature of both memory and memoryless representation induced by these driven networks.  Moreover, by relating these representations to the notion of path signature, we find that the  set of response feature is the  building block in which SRNNs extract information from when processing an input signal, uncovering the universal mechanism underlying the operation of SRNNs. In particular, we have shown, via a representer theorem, that SRNNs can be viewed as kernel machines operating on a reproducing kernel Hilbert space associated with the response feature.

We end with a few final remarks. From mathematical point of view, it would be interesting to relax the assumptions here and work in a general setting where the driving input signal is a rough path, where regularity of the input signal could play an important role. One could also study how SRNNs respond to perturbations in the input signal and in the driving noise (regularization) by adapting the techniques developed here. So far we have focused on the ``formulate first" approach mentioned in Introduction. The results obtained here suggest that one could study the ``discretize next" part by devising efficient algorithm to exploit the use of discretized response features  and related features in machine learning tasks involving temporal data, such as predicting time series generated by complex dynamical systems arising in science and engineering.

\acks{The author is grateful to the support provided by the Nordita Fellowship 2018-2021.}

\newpage

\appendix 
\section*{\centering \fontsize{15}{15} \bf Supplementary Material ({\bf SM})}

%Again,  the $\lambda_m$ and $\lambda_{m_n}$ would be equal zero (and $\lambda_l$, $\lambda_{l_n}$ as before) in the case of stationary invariant distribution. 

%later make all the series compact indices over a vector

\section{Preliminaries and Mathematical Formulation}
\label{main}

%$\vecc{\sigma}$ is $\eta$-coersive, in particular positive definite uniformly in $\vecc{h}$.

%Assumptions on input char? (rough paths? later)

%Assume existence and uniqueness of solution to SDEs... etc.

\subsection{Differential Calculus on Banach Spaces} \label{app_diffcal}

In this subsection, we present elements of differential calculus on Banach spaces, introducing our notation and terminology along the way. We refer to the classic book of Cartan \citep{cartan1983differential} for more details (see also \citep{abraham2012manifolds}). 

%\citep{abraham2012manifolds}.
%a lot of dominated convergence theorem, Fubini, etc
 
We will need the notion of functional derivatives before we dive into response functions.
Functional derivatives are generalization of ordinary derivatives to functionals. At a formal level, they can be defined via the variation $\delta F$ of the functional $F[u]$ which results from variation of $u$ by $\delta u $, i.e., $\delta F= F[u+\delta u] - F[u]$. The technique used to evaluate $\delta F$ is a Taylor expansion of the functional
$F[ u +\delta u ] = F[ u + \epsilon \eta ]$ in powers of $\delta u$ or  $\epsilon$. The functional $F[ u +\epsilon \eta]$ is an ordinary function of $\epsilon$. This implies that the expansion in terms of powers of $\epsilon$
is a standard Taylor expansion, i.e.,
\begin{equation}
F[u+\epsilon \eta] = F[u] + \frac{dF[u+\epsilon \eta]}{d\epsilon}\bigg|_{\epsilon = 0} \epsilon + \frac{1}{2!} \frac{d^2F[u+\epsilon \eta]}{d\epsilon^2}\bigg|_{\epsilon = 0} \epsilon^2 + \cdots,
\end{equation}
provided that the ``derivatives'' above can be made sense of. We first define such ``derivatives".
 
Recall that for a function $f$ of a real variable, the derivative of $f$ is defined by 
\begin{equation}
    f'(x) = \lim_{h \to 0} \frac{f(x+h)-f(x)}{h},
\end{equation}
provided that the limit exists. This definition becomes absolete when $f$ is a function of vector variable since then division by a vector is meaningless. Therefore, to define a derivative for mappings from Banach space into a Banach space, one needs to revise the above definition. This leads to the notion of Fr\'echet differentiability, generalizing the notion of slope of the line tangent to the graph of the function at some point.

In the following, let $E_1$, $E_2$ be Banach spaces over $\RR$, $T:E_1 \to E_2$ be a given mapping, and $\|\cdot\|$ represents the norm on appropriate space.

\begin{defn} (Fr\'echet differentiabiliy) Fix an open  subset $U$ of a Banach space $E_1$ and let $u_0 \in U$.
We say that the mapping $T: U \to E_2$ is Fr\'echet differentiable at the point $u_0$ if there is a bounded linear map  $DT(u_0) :E_1 \to E_2$ such that for every $\epsilon >0$, there is a $\delta > 0$ such that 
\begin{equation}
    \frac{\|T(u_0 + h)-T(u_0) - DT(u_0) \cdot h \|}{\|h\|} < \epsilon
\end{equation}
whenever $\|h\| \in (0,\delta)$, where $DT(u_0) \cdot e$ denotes the evaluation of $DT(u_0)$ on $e \in E_1$. This can also be written as 
\begin{equation}
    \lim_{\|h\|\to 0} \frac{\|T(u_0 + h)-T(u_0)-DT(u_0) \cdot h\|}{\|h\|} = 0.
\end{equation}
Note that this is equivalent to the existence of a linear map $DT(u_0) \in L(E_1, E_2)$ such that
\begin{equation}
    T(u_0 + h) - T(u_0) = DT(u_0) \cdot h + e(h)
\end{equation}
where 
\begin{equation}
    \lim_{\|h\| \to 0} \frac{\|e(h)\|}{\|h\|} = 0,
\end{equation}
i.e., $e(h) = o(\|h\|)$ as $\|h\| \to 0$.
\end{defn}

\begin{defn}(Fr\'echet derivative) If the mapping $T$ is Fr\'echet differentiable at each $u_0 \in U$, the map $DT: U \to L(E_1, E_2)$, $u \mapsto DT(u)$, is called the (first order) Fr\'echet derivative of $T$. Moreover, if $DT$ is a norm continuous map, we say that $T$ is of class $C^1$. We define, inductively, for $r \geq 2$, the $r$-th order derivative $D^r T := D(D^{r-1} T) : U \subset E_1 \to L^{(r)}(E_1, E_2):=L(E_1, L^{(r-1)}(E_1,E_2))$, with $L^{(1)}(E_1,E_2) := L(E_1, E_2)$, whenever it exists. If $D^r T$ exists and is norm continuous, we say that $T$ is of class $C^r$. 
\end{defn}

A few remarks follow. A weaker notion of differentiability, generalizing the idea of directional derivative in finite dimensional spaces, is provided by Gateaux. 

\begin{defn}(Gateaux differentiability and derivative)
Fix $u_0 \in E_1$. The mapping $T: E_1 \to E_2$ is Gateaux differentiable at $u_0$ if there exists a continuous linear operator $A$ such that 
\begin{equation}
    \lim_{\epsilon \to 0} \left\| \frac{T(u_0+\epsilon h)-T(u_0)}{\epsilon} - A(h) \right\| = 0
\end{equation}
for every $h \in E_1$, where $\epsilon \to 0$ in $\RR$. The operator $A$ is called the Gateaux derivative of $T$ at $u_0$ and its value at $h$ is denoted by $A(h)$. The higher order Gateaux derivatives can be defined by proceeding inductively.
%where $DT: E_1 \to L(E_1, E_2)$, $u \to DT(u)$, is simply called the derivative of $T$. 
\end{defn}
 
\begin{rmk}
It is a standard result that if the Gateaux derivative exists, then it is unique (similarly for Fr\'echet derivative). Gateaux differentiability is weaker than that of Fr\'echet in the sense that if a mapping has the Fr\'echet derivative at a point $u_0$, then it has the Gateaux derivative at $u_0$ and both derivatives are equal, in which case $A(h) = DT(u_0) \cdot h$ using the notations in the above definitions. The converse does not generally hold. For basic properties of these derivatives, we refer to the earlier references. From now on, we will work with the more general notion of Fr\'echet differentiability.
\end{rmk}

We now introduce the notion of functional derivative for the special case of mappings that are real-valued functionals.  Let $E_2 = \RR$, so that $T$ is a functional on $E_1$. When $T$ is Fr\'echet differentiable at some $u \in E_1$,  its derivative is a bounded linear functional on $E_1$, i.e., $DT[u] \in L(E_1, \RR) =: E_1^*$. If $E_1$ is a Hilbert space, then by the Riesz representation theorem\footnote{Note that we cannot apply this fact when defining the response functions in Definition \ref{def_res} since the space of test functions is not a Hilbert space.}, there exists a unique element $y \in E_1$ such that $DT[u] \cdot e = \langle y, e \rangle$ for every $e \in E_1$. The derivative $DT[u]$ can thus be identified with $y$, which we sometimes call the gradient of $T$. In the case when $E_1$ is a functional space, we call the derivative the {\it functional derivative} of $T$ with respect to $u$, denoted $\delta T/\delta u$:
\begin{equation}
    \langle \delta T/\delta u, e \rangle = DT[u] \cdot e.
\end{equation}  
Moreover, $T$ is also Gateaux differentiable at $u \in E_1$ and we have $DT[u] \cdot e = \frac{d}{dt}\big|_{t=0} T[u+te]$. 

The higher order derivatives can be defined inductively. For instance, if $T$ is a twice differentiable functional on a real Hilbert space $E_1$, then for all $u \in E_1$, $D^2 T[u] \cdot (e_1,e_2) = D((DT)(\cdot) \cdot e_2)[u] \cdot e_1$.  More generally, if $e_1, \dots, e_{n-1} \in E_1$, $T: U \to E_2$ is $n$ times differentiable, then the map $Q:U \to E_2$ defined by $Q[u] = D^{n-1} T[u] \cdot (e_1, \dots, e_{n-1})$ is differentiable and $DQ[u] \cdot e = D^n T[u] \cdot (e,e_1, \dots, e_{n-1})$. Moreover, $D^n T[u]$ is multi-linear symmetric, i.e., 
\begin{equation} \label{deriv_sym}
D^n T[u] \cdot (e_0,e_1,\dots,e_{n-1}) = D^n T[u] \cdot (e_{\sigma(0)}, e_{\sigma(1)}, \dots, e_{\sigma(n-1)}),
\end{equation}
where $\sigma$ is any permutation of $\{0,1,\dots,n\}$ (see Theorem 5.3.1 in \citep{cartan1983differential}). In this case, we have
\begin{equation}
    \lim_{\epsilon \to 0} \left\| \frac{(D^{n-1} T[u+\epsilon h] - D^{n-1}T[u])\cdot (h_1, \dots, h_{n-1})}{\epsilon} - D^n T[u] \cdot (h, h_1, \dots, h_{n-1}) \right\| = 0
\end{equation}
and we write 
\begin{equation}
    D^n T[u] \cdot (h, h_1, \dots, h_{n-1}) = \lim_{\epsilon \to 0}  \frac{(D^{n-1} T[u+\epsilon h] - D^{n-1}T[u])\cdot (h_1, \dots, h_{n-1})}{\epsilon}.
\end{equation}

The notion of functional derivatives that we will need is defined in the following.

%\begin{defn} (Functional derivatives) 
%Let $F: E_1 \to \RR$ be a differentiable functional at the point $u \in E_1$. Then for $n \in \NN$, the $n$th order functional derivative of $F$ with respect to $u$ is the unique functional $\delta^n F/\delta \vecc{u}^{(n)}$, with $\delta \vecc{u}^{(n)} = \delta  u(s_1) \cdots \delta u(s_n)$ on $E_1$ defined as:
%\begin{align}
%D^n F[u] \cdot \vecc{\phi} :=  \int_{(0,t)^n}  ds_1 \cdots ds_n \frac{\delta^n F[u]}{\delta u(s_1) \cdots \delta u(s_n)} \phi(s_1) \cdots \phi(s_n) := \frac{d^n}{d\epsilon^n}\bigg|_{\epsilon = 0} F[u + \epsilon \phi],
%\end{align}
%whenever the derivative exists for all $\phi$.
%\end{defn}

\begin{defn} (Functional derivatives) \label{func_der}
Fix $t>0$. Let $F$ be a functional on $C([0,t],\RR)$ and $u \in C([0,t],\RR)$. Then for $n \in \ZZ_{+}$, the $n$th order functional derivative of $F$ with respect to $u$ is a functional $\delta^n F/\delta \vecc{u}^{(n)}$, with $\delta \vecc{u}^{(n)} := \delta  u(s_1) \cdots \delta u(s_n)$, on $C_c^\infty(0,t)$, defined as:
\begin{align}
\int_{(0,t)^n}  ds_1 \cdots ds_n \frac{\delta^n F[u]}{\delta \vecc{u}^{(n)}} \phi(s_1) \cdots \phi(s_n) = D^n F[u] \cdot (\phi,\phi,\dots,\phi), %\frac{d^n}{d\epsilon^n}\bigg|_{\epsilon = 0} F[u + \epsilon \phi],
\end{align}
whenever the derivative exists, for any  $\phi \in C_c^\infty(0,t)$.
\end{defn}

Recall the following Taylor's theorem for a mapping $T: U \to E_2$.

\begin{thm} (Taylor's theorem -- Theorem 5.6.3 in \citep{cartan1983differential}) 
Let $T$ be an $n-1$ times differentiable mapping. Suppose that $T$ is $n$ times differentiable at the point $u \in U$. Then, denoting $(h)^n := (h, \dots, h)$ (n times),
\begin{equation}
    \left\|T(u+h) - T(u) - DT(u) \cdot h - \cdots - \frac{1}{n!} D^n T(u) \cdot (h)^n  \right\| = o(\|h\|^n). 
\end{equation}
\end{thm}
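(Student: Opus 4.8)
The plan is to argue by induction on $n$, taking the base case $n=1$ to be precisely the definition of Fr\'echet differentiability, which gives $\|T(u+h)-T(u)-DT(u)\cdot h\| = o(\|h\|)$. For the inductive step, I assume the statement at order $n-1$ (for every map meeting the corresponding hypotheses) and introduce the remainder
\[
R(h) := T(u+h) - \sum_{k=0}^{n}\frac{1}{k!} D^k T(u)\cdot(h)^k,
\]
with the convention $D^0 T(u)\cdot(h)^0 := T(u)$. Since $U$ is open, the segment $\{u+\theta h : \theta\in[0,1]\}$ lies in $U$ for all sufficiently small $h$, so $R$ is well-defined and $(n-1)$-times differentiable near $0$, with $R(0)=0$; the goal is to show $\|R(h)\| = o(\|h\|^n)$.

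The key step is to differentiate $R$ and recognize its derivative as the order-$(n-1)$ Taylor remainder of $DT$ at $u$. Differentiating each polynomial term $h\mapsto \frac{1}{k!} D^k T(u)\cdot(h)^k$ and using that $D^k T(u)$ is a symmetric $k$-linear map (Eq. \eqref{deriv_sym}), its derivative in a direction $v$ equals $\frac{1}{(k-1)!}D^k T(u)\cdot((h)^{k-1},v)$. Reindexing by $j=k-1$ yields
\[
DR(h) = DT(u+h) - \sum_{j=0}^{n-1}\frac{1}{j!}D^{j+1}T(u)\cdot(h)^j,
\]
which is exactly $DT(u+h)$ minus its Taylor polynomial of order $n-1$ at $u$, viewed as an element of $L(E_1,E_2)$. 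I then apply the induction hypothesis to the map $g:=DT$: by assumption $T$ is $(n-1)$-times differentiable on $U$ and $n$-times differentiable at $u$, so $g$ is $(n-2)$-times differentiable on $U$ and $(n-1)$-times differentiable at $u$, with $D^{j}g(u)=D^{j+1}T(u)$. The hypothesis therefore gives $\|DR(h)\| = o(\|h\|^{n-1})$; note also that $DR(0)=DT(u)-DT(u)=0$.

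Finally, I would invoke the mean value inequality (from the same reference \citep{cartan1983differential}): since $[0,h]$ maps into $U$, $\|R(h)\| = \|R(h)-R(0)\| \le \|h\|\,\sup_{0\le\theta\le1}\|DR(\theta h)\|$. Given $\varepsilon>0$, choose $\delta$ so that $\|DR(k)\|\le\varepsilon\|k\|^{n-1}$ whenever $\|k\|<\delta$; then for $\|h\|<\delta$ and any $\theta\in[0,1]$ one has $\|\theta h\|\le\|h\|<\delta$, hence $\|DR(\theta h)\|\le\varepsilon\|h\|^{n-1}$ and so $\|R(h)\|\le\varepsilon\|h\|^n$. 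As $\varepsilon>0$ is arbitrary, $\|R(h)\|=o(\|h\|^n)$, closing the induction. I expect the main obstacle to be the bookkeeping in the key step: correctly differentiating the polynomial part via the symmetry of $D^k T(u)$ so that $DR$ is identified with the order-$(n-1)$ remainder of $DT$, together with promoting the pointwise little-$o$ bound on $DR$ to a bound that is uniform over the whole segment $[0,h]$, so that the mean value inequality yields the sharp rate $o(\|h\|^n)$ rather than a cruder estimate.
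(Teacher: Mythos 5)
Your proof is correct: the induction on $n$, the identification of $DR(h)$ with the order-$(n-1)$ Taylor remainder of $DT$ via symmetry of $D^kT(u)$, and the closing mean value inequality with the uniform $\varepsilon\|h\|^{n-1}$ bound along the segment constitute a complete argument. Note that the paper itself offers no proof of this statement---it quotes it directly as Theorem 5.6.3 of \citep{cartan1983differential}---and your argument is essentially the classical one given in that reference, so there is nothing to reconcile.
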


By {\it Taylor series} of a mapping $T$ at a point $u \in U$, we mean the series of homogeneous polynomials given by
\begin{equation}
    T(u+h) = T(u) + DT(u) \cdot h  + \cdots + \frac{1}{n!} D^n T(u) \cdot (h)^n  + \dots
\end{equation}
that is absolutely convergent.

An important example of Taylor series of a mapping is given by the Volterra series, which is widely studied in systems and control theory \citep{boyd1985fading,brockett1976volterra,fliess2012algebraic}.

\begin{defn} (Volterra series \citep{boyd1984analytical}) \label{def_vol}
Let $u \in C([0,t],\RR)$. A Volterra series operator is an operator $N$ given by:
\begin{equation}
    N_t[u] = h_0 + \sum_{n=1}^\infty \int_{[0,t]^n} h_n(s_1, \dots, s_n) u(t-s_1) \cdots u(t-s_n) ds_1 \cdots ds_n,
\end{equation}
satisfying $h_0 \in \RR$, $h_n \in B(\RR_{+}^n)$, and $a :=\limsup_{n \to \infty} \|h_n\|_\infty^{1/n} < \infty$, i.e., $\{\|h_n\|_\infty^{1/n}\}_{n \in \ZZ_+}$ is bounded. 
\end{defn}
It can be shown that the integrals and sum above converge absolutely for inputs with $\|u\|_\infty < \rho := 1/a$, i.e., for $u$ belonging to the ball of radius $\rho$, denoted $B_\rho$, in $L^\infty$. Moreover, $\|N_t[u]\|_\infty \leq g(\|u\|_\infty) := |h_0| + \sum_{n=1}^\infty \|h_n\|_\infty \|u\|_\infty^n$. Also, $N_t$ is a continuous map from $B_\rho$ into $L^\infty$  and has bounded continuous Fr\'echet derivatives of all orders, with \citep{boyd1985fading}
\begin{align}
    &D^{k}N_t[u] \cdot (u_1, \dots, u_k) \nonumber \\
    &= \sum_{n=k}^\infty n(n-1) \dots (n-k+1) \int_{[0,t]^n} SYM h_n(s_1, \dots, s_n) \prod_{i=1}^k u_i(t-s_i) ds_i \prod_{i=k+1}^n u(t-s_i) ds_i, 
\end{align}
where $SYM h_n(s_1, \dots, s_n) := \frac{1}{n!} \sum_{\sigma \in S_n} h_n(s_{\sigma(1)}, \dots, s_{\sigma(n)})$, with $S_n$ the group of all permutations of the set $\{1,\dots,n\}$.  

When evaluated at $u=0$, these derivatives can be associated with the $n$th term of the Volterra series and are given by:
\begin{equation} \label{62}
    \frac{1}{n!} D^n N_t[0] \cdot (u_1, \dots, u_n) = \int_{[0,t]^n} h_n(s_1, \dots, s_n) u_1(t-s_1) \cdots u_n(t-s_n) ds_1 \cdots ds_n.
\end{equation}
Therefore, the Volterra series above are in fact Taylor series of operators from $L^\infty$ to $L^\infty$. 

\subsection{Signature of  a Path}

We provide some background on the signature of a path. The signature is an object arising in the rough paths theory, which provide an elegant yet robust nonlinear extension of the classical theory of differential equations driven by irregular signals (such as the Brownian paths). In particular, the theory allows for deterministic treatment of stochastic differential equations (SDEs). 

For our purpose, we do not need the full machinery of the theory but only a very special case. We refer to  \citep{lyons2002system,lyons2007differential,friz2010multidimensional,friz2014course} for full details.  In particular, we will only consider signature for paths with bounded variation. The following materials are borrowed from \citep{lyons2007differential}. 

Fix a Banach space $E$ (with norm $| \cdot |$) in the following. We first recall the notion of paths with bounded variation.

\begin{defn}(Path of bounded variation) \label{bv_def}
Let $I = [0,T]$ be an interval. A continuous path $\vecc{X}: I \to E$ is of bounded variation (or has finite 1-variation) if 
\begin{equation}
    \|\vecc{X}\|_{1-\text{var}} := \sup_{\mathcal{D} \subset I} \sum_{i=0}^{n-1}  |\vecc{X}_{t_{i+1}}-\vecc{X}_{t_i}| < \infty, \label{sum}
\end{equation}
where $\mathcal{D}$ is any finite partition of $I$, i.e., an increasing sequence $ (t_0, t_1, \dots, t_n)$ such that $0 \leq t_0 < t_1 < \cdots < t_n \leq T$. 
\end{defn}

When equipped with the norm $\|\vecc{X} \|_{BV}  := \|\vecc{X}\|_{1-\text{var}} +  \|\vecc{X}\|_\infty$, the space of continuous paths of bounded variation with values in $E$ becomes a Banach space.

The following lemma will be useful later. 
\begin{lem} \label{bv}
Let $E = \RR$. If the path $X$ is increasing on $[0,T]$, then $X$ is of bounded variation on $[0,T]$. Moreover, if $Y$ and $Z$ are paths of bounded variation on $[0,T]$ and $k$ is a constant, then $X+Y$, $X-Y$, $XY$ and $kX$ are paths of bounded variation on $[0,T]$. 
\end{lem}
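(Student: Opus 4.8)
The plan is to verify each assertion directly from the definition of $1$-variation in \eqref{sum}, working from the simplest claim up to the product, which is the only one needing an extra ingredient. Throughout I would fix an arbitrary finite partition $\mathcal{D} = (t_0, \dots, t_n)$ of $[0,T]$, bound the associated variation sum by a quantity independent of $\mathcal{D}$, and then pass to the supremum over all such partitions.

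First I would treat the increasing case. If $X$ is increasing, then $X_{t_{i+1}} - X_{t_i} \geq 0$ for every consecutive pair, the absolute values drop, and the sum telescopes: $\sum_{i=0}^{n-1} |X_{t_{i+1}} - X_{t_i}| = \sum_{i=0}^{n-1} (X_{t_{i+1}} - X_{t_i}) = X_T - X_0$. Since this value does not depend on $\mathcal{D}$, taking the supremum gives $\|X\|_{1\text{-var}} = X_T - X_0 < \infty$. Next, the linear operations follow term-by-term from the triangle inequality and homogeneity of $|\cdot|$: for $X \pm Y$ one has $|(X \pm Y)_{t_{i+1}} - (X \pm Y)_{t_i}| \leq |X_{t_{i+1}} - X_{t_i}| + |Y_{t_{i+1}} - Y_{t_i}|$, so summing and taking suprema yields $\|X \pm Y\|_{1\text{-var}} \leq \|X\|_{1\text{-var}} + \|Y\|_{1\text{-var}} < \infty$; and factoring out $|k|$ gives $\|kX\|_{1\text{-var}} = |k|\,\|X\|_{1\text{-var}} < \infty$, where $X, Y$ denote arbitrary paths of bounded variation.

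The product $XY$ is the only step requiring a little more, and it is the part I expect to be the main (if mild) obstacle. The key is the discrete product-rule identity $X_{t_{i+1}} Y_{t_{i+1}} - X_{t_i} Y_{t_i} = X_{t_{i+1}}(Y_{t_{i+1}} - Y_{t_i}) + Y_{t_i}(X_{t_{i+1}} - X_{t_i})$. To convert this into a bound I would first record that any path of bounded variation on a compact interval is bounded: from $|X_t| \leq |X_0| + |X_t - X_0| \leq |X_0| + \|X\|_{1\text{-var}}$ we get $M_X := \sup_{t \in [0,T]} |X_t| < \infty$, and similarly $M_Y < \infty$ (alternatively, continuity plus the extreme value theorem gives boundedness directly). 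Applying the triangle inequality to the identity and estimating the coefficients by $M_X$ and $M_Y$ gives $\sum_i |X_{t_{i+1}} Y_{t_{i+1}} - X_{t_i} Y_{t_i}| \leq M_X \sum_i |Y_{t_{i+1}} - Y_{t_i}| + M_Y \sum_i |X_{t_{i+1}} - X_{t_i}| \leq M_X \|Y\|_{1\text{-var}} + M_Y \|X\|_{1\text{-var}}$. This bound is uniform in $\mathcal{D}$, so taking the supremum shows $\|XY\|_{1\text{-var}} < \infty$, which completes the argument. The only conceptual point beyond routine estimation is the boundedness of bounded-variation paths, which is exactly what makes the product estimate close.
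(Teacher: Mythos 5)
Your proof is correct and takes essentially the same route as the paper: the increasing case is handled by the identical telescoping argument (yielding $\|X\|_{1\text{-var}} = X_T - X_0$), and for the algebraic closure properties the paper simply cites Theorem 2.4 of Grady's notes, whose content is exactly the triangle-inequality argument for $X \pm Y$ and $kX$ and the discrete product-rule-plus-boundedness argument for $XY$ that you wrote out. Your version is merely more self-contained, and you correctly isolate the one non-routine ingredient — that a path of bounded variation is bounded, via $|X_t| \leq |X_0| + \|X\|_{1\text{-var}}$ — which is what makes the product estimate close.
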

\begin{proof}
    The first statement is an easy consequence of the telescoping nature of the sum in \eqref{sum}  for any finite partitions of $[0,T]$. In fact, $\|X\|_{1-\text{var}} = X_T-X_0$. For the proof for the last statement, we refer   to Theorem 2.4 in  \citep{grady2009functions}.
\end{proof}

The signature of a continuous path with bounded variation lives in some tensor algebra  space, which we now elaborate on. 

Denote by $T((E))  =  \{\vecc{a} :=  (a_0, a_1, \dots): \forall n \in \NN, \ a_n \in E^{\otimes n}\}$ a tensor algebra space, where $E^{\otimes n}$ denotes the $n$-fold tensor product of $E$ ($E^{\otimes 0} := \RR$), which can be identified with the space of homogeneous noncommuting polynomials of degree $n$.  $T((E))$ is the space of formal series of tensors of $E$ and, when endowed with suitable operations, becomes a real non-commutative algebra (with unit $\Omega = (1,0,0,\dots)$). When $E$ is a Hilbert space, it can be identified with the free Fock space, $T_0((E)) := \bigoplus_{n=0}^{\infty} E^{\otimes n}$  \citep{parthasarathy2012introduction}. 

As an example/reminder, consider the case when $E = \RR^m$. In this case, if $(e_1, \dots, e_m)$ is a basis of $E$, then any element $ \vecc{x}_n \in E^{\otimes n}$ can be written as the expansion:
\begin{equation}
    \vecc{x}_n = \sum_{I = (i_1, \dots, i_n) \subset \{1,\dots, m\}^n} a_{I} (e_{i_1} \otimes \cdots \otimes e_{i_n})
\end{equation}
where the $a_I$ are scalar coefficients. For any nonnegative integer $n$, the tensor space $E^{\otimes n}$ can be endowed with the inner product $\langle \cdot, \cdot \rangle_{E^{\otimes n}}$ (defined in the usual way \citep{parthasarathy2012introduction}). Then,  for $\vecc{a} = (a_0, a_1, \dots)$ and $\vecc{b} = (b_0, b_1, \dots)$ in  $T((E))$, we can define an inner product in $T((E))$ by:
\begin{equation}\langle \vecc{a}, \vecc{b} \rangle_{T((E))} = \sum_{n \geq 0} \langle a_n, b_n \rangle_{E^{\otimes n}}.
\end{equation}

We  now define the signature of a path as an element in $T((E))$. 

\begin{defn}(Definition \ref{def_sig} in the main paper)   Let $\vecc{X} \in C([0,T],  E)$ be a path of bounded variation. The {\it signature} of $\vecc{X}$ is the element $S$ of $T((E))$, defined as 
\begin{equation} 
S(\vecc{X}) = (1,\vecc{X}^1,\vecc{X}^2,\dots),
\end{equation}
where, for $n\geq 1$,  $\Delta^n_T := \{0\leq s_1\leq \dots \leq s_n \leq T \}$, and
\begin{equation}
    \vecc{X}^n = \int_{\Delta^n_T} d\vecc{X}_{s_1} \otimes \dots \otimes d\vecc{X}_{s_n} \in E^{\otimes n}.
\end{equation}
\end{defn}

Note that in the definition above since $\vecc{X}$ is a path of bounded variation, the integrals are well-defined as Riemann-Stieljes integrals. The signature in fact determines the path completely (up to tree-like equivalence); see Theorem 2.29 in \citep{lyons2007differential}.

Let $(e_{i_1} \otimes \dots \otimes e_{i_n})_{(i_1, \dots, i_n) \in \{1,\dots,m\}^n}$ be the canonical basis of $E^{\otimes n}$, then we have:
\begin{equation}
    S(\vecc{X}) = 1 + \sum_{n =1}^\infty \sum_{i_1, \dots, i_n} \left( \int_{\Delta^n_T} dX^{i_1}_{s_1} \cdots dX^{i_n}_{s_n} \right) (e_{i_1} \otimes  \dots \otimes e_{i_n}) \in T((E)).   
\end{equation}

%Note that since $\vecc{X}$ is of bounded variation, it is a classical result that any integral of the form $\int_0^t $

Denoting by $\langle \cdot, \cdot \rangle$ the dual pairing, we have $\langle S(\vecc{X}), e_{i_1} \otimes  \dots \otimes e_{i_n} \rangle  = \int_{\Delta_T^n} dX_{s_1}^{i_1} \cdots dX_{s_n}^{i_n}$.  \\

Next, we discuss a few special cases where the signature can be computed explicitly.  If $\vecc{X} := X$ is a one-dimensional path, then we can compute  the $n$th level signature to be  $\vecc{X}^n = (X_T - X_0)^n/n!$. If $\vecc{X}$ is an $E$-valued linear path, then $\vecc{X}^n = (\vecc{X}_T-\vecc{X}_0)^{\otimes n}/n!$ and $S(\vecc{X}) = \exp((\vecc{X}_T -\vecc{X}_0))$.  

We will need the following fact, which is useful in itself for numerical computation in practice, later.

\begin{lem} \label{piece}
If $\vecc{X}$ is a $E$-valued piecewise linear path, i.e., $\vecc{X}$ is obtained by concatenating $L$  linear paths, $\vecc{X}^{(1)}, \dots, \vecc{X}^{(L)}$, such that $\vecc{X} = \vecc{X}^{(1)} \star \cdots \star \vecc{X}^{(L)}$ (with $\star$ denoting path concatenation),  then:
\begin{equation}
S(\vecc{X}) = \bigotimes_{l=1}^L S(\vecc{X}^{(l)}) =  \bigotimes_{l=1}^L \exp{(\Delta \vecc{X}^{(l)})},
\end{equation}
where the $\Delta \vecc{X}^{(l)}$ denotes the increment of the path $\vecc{X}^{(l)}$ between its endpoints.
\end{lem}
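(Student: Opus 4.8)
The plan is to reduce the identity to two ingredients already available or easy to isolate: the multiplicativity of the signature under path concatenation (Chen's identity), and the explicit formula $S(\vecc{X}^{(l)}) = \exp(\Delta \vecc{X}^{(l)})$ for a linear path, the latter of which was recorded immediately before the statement of the lemma. Everything else is bookkeeping.

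First I would establish Chen's identity in the form: if $\vecc{Y}$ is a bounded-variation path on $[a,b]$ and $\vecc{Z}$ is a bounded-variation path on $[b,c]$, and $\vecc{Y} \star \vecc{Z}$ denotes their concatenation, then $S(\vecc{Y} \star \vecc{Z}) = S(\vecc{Y}) \otimes S(\vecc{Z})$ in $T((E))$, where $\otimes$ is the concatenation product. To prove this level by level, I would expand the degree-$n$ term $(\vecc{Y}\star\vecc{Z})^n = \int_{\Delta^n} d(\vecc{Y}\star\vecc{Z})_{s_1} \otimes \cdots \otimes d(\vecc{Y}\star\vecc{Z})_{s_n}$ over the simplex $\{a \leq s_1 \leq \cdots \leq s_n \leq c\}$ and partition that simplex according to the largest index $k$ for which $s_k \leq b$. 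Because the increments of $\vecc{Y}\star\vecc{Z}$ are carried by $\vecc{Y}$ on $[a,b]$ and by $\vecc{Z}$ on $[b,c]$, each of the resulting $n+1$ regions factorizes into an integral over $\Delta^k$ in $[a,b]$ tensored with an integral over $\Delta^{n-k}$ in $[b,c]$, yielding $(\vecc{Y}\star\vecc{Z})^n = \sum_{k=0}^n \vecc{Y}^k \otimes \vecc{Z}^{n-k}$, which is exactly the degree-$n$ component of $S(\vecc{Y}) \otimes S(\vecc{Z})$. (Alternatively this is a standard theorem in \citep{lyons2007differential}, which I may simply cite, since the excerpt already relies on that reference for the signature machinery.)

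Next I would induct on $L$. The base case $L=1$ is vacuous. For the inductive step I write $\vecc{X} = (\vecc{X}^{(1)} \star \cdots \star \vecc{X}^{(L-1)}) \star \vecc{X}^{(L)}$, apply Chen's identity once to split off the final factor, and invoke the induction hypothesis on the first $L-1$ pieces, giving $S(\vecc{X}) = \bigotimes_{l=1}^{L} S(\vecc{X}^{(l)})$ and hence the first asserted equality. Finally, since each $\vecc{X}^{(l)}$ is linear, I substitute $S(\vecc{X}^{(l)}) = \exp(\Delta \vecc{X}^{(l)})$ from the formula preceding the lemma — the exponential being taken in $T((E))$, so that level $n$ equals $(\Delta \vecc{X}^{(l)})^{\otimes n}/n!$ — to obtain the second equality $\bigotimes_{l=1}^L \exp(\Delta \vecc{X}^{(l)})$.

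The main obstacle is the simplex-splitting argument underlying Chen's identity: one must justify the clean decomposition of the $n$-simplex over $[a,c]$ into products of lower-dimensional simplices over $[a,b]$ and $[b,c]$, checking that the break point $b$ contributes no extra mass (this uses continuity of the concatenated path and the absence of atoms in the Riemann–Stieltjes integrals against bounded-variation paths) and that the order of integration may be interchanged. If I instead cite Chen's identity from \citep{lyons2007differential}, there is no real obstacle, and the induction together with the linear-path formula completes the proof immediately.
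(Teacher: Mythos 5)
Your proposal is correct and matches the paper's own proof, which likewise invokes Chen's identity (Theorem 2.9 in \citep{lyons2007differential}) iteratively together with the exponential formula $S(\vecc{X}^{(l)}) = \exp(\Delta \vecc{X}^{(l)})$ for linear paths. Your optional simplex-splitting derivation of Chen's identity is a sound supplement, but the paper simply cites it, exactly as your fallback does.
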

\begin{proof}
The lemma follows from applying  Chen's identity (see Theorem 2.9 in \citep{lyons2007differential}) iteratively and exploiting linearity of the paths $\vecc{X}^{(l)}$.
\end{proof}

A few remarks are now  in order.

\begin{rmk} By definition, the signature is  a collection of definite iterated integrals of the path \citep{chen1977iterated}. It  appears naturally as the basis to represent the solution to a linear controlled differential equation via  the Picard iteration (c.f. Lemma 2.10 in \citep{lyons2007differential}).  The  signature of the path is in fact involved in a lot of rich algebraic and geometric structures.  In particular, the first level signature, $\vecc{X}^1$, is the increment of the path, i.e., $\vecc{X}_T - \vecc{X}_0$, the second level signature, $\vecc{X}^2$, represents the signed area enclosed by the path and the cord connecting the ending and starting point of the path. For further properties of the signature, we refer to the earlier references. Interestingly, iterated integrals are also objects of interest in quantum field theory and renormalization \citep{Brown2013qva,kreimer1999chen}.
\end{rmk}

\begin{rmk}\label{rmk_a3}
From the point of view of machine learning, the signature is an efficient summary of the information contained in the path and leads to invention of the Signature Method for ordered data (see, for instance, \citep{che2016primer} for a tutorial and the related works by the group of Terry Lyons). More importantly, the signature is  a universal feature, in the sense that any continuous functionals on compact sets of paths can be approximated arbitrarily well by a linear functional on the signature. This is stated precisely in Theorem 3.1 in \citep{levin2013learning} (see also Theorem B.1 in \citep{morrill2020neural} with log-signature as the feature instead).
\end{rmk}

\section{Proof of Main Results and Further Remarks}

\subsection{Auxiliary Lemmas}
Recall that we assume Assumption \ref{imp_ass} to hold 
throughout the paper unless stated otherwise. The following lemma on boundedness and continuity of amplitude of the input perturbation will be essential.

\begin{lem}  \label{B2}
Let $\gamma = (\gamma(t))_{t \in [0,T]}$, i.e., the path measuring the amplitude of $\vecc{C} \vecc{u}_t$ driving the SRNN. Then $\gamma \in C([0,T],\RR)$ and thus bounded.  
%If, moreover, $\vecc{u}_t \neq \vecc{0}$ for all $t \in [0,T]$ and $\vecc{c}$ is compactly supported, then $\gamma \in C_c([0,T],\RR)$. 
\end{lem}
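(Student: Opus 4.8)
The plan is to exhibit $\gamma$ as a composition of continuous maps and then invoke compactness of $[0,T]$. First I would recall that by Definition \ref{CRSRNN} the input signal satisfies $\vecc{u}\in C([0,T],\RR^m)$, so $t\mapsto\vecc{u}_t$ is continuous, and $\vecc{C}\in\RR^{n\times m}$ is a fixed matrix. Since every linear map between finite-dimensional normed spaces is bounded, and hence Lipschitz continuous, the map $\vecc{x}\mapsto\vecc{C}\vecc{x}$ is continuous; therefore $t\mapsto\vecc{C}\vecc{u}_t$ is continuous as a composition of continuous maps from $[0,T]$ into $\RR^n$.

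Next I would dispatch the norm. The reverse triangle inequality gives $\big|\,|\vecc{x}|-|\vecc{y}|\,\big|\le|\vecc{x}-\vecc{y}|$ for all $\vecc{x},\vecc{y}\in\RR^n$, so $\vecc{x}\mapsto|\vecc{x}|$ is $1$-Lipschitz and in particular continuous. Composing once more, $\gamma(t)=|\vecc{C}\vecc{u}_t|$ is continuous on $[0,T]$, i.e. $\gamma\in C([0,T],\RR)$. Boundedness is then immediate: a continuous real-valued function on the compact interval $[0,T]$ attains its extrema by the extreme value theorem, so $\sup_{t\in[0,T]}\gamma(t)<\infty$.

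There is no genuine obstacle here; the statement is an elementary continuity-and-compactness argument. The only point worth stating explicitly is that a finite-dimensional linear map is automatically continuous, which is precisely what lets us conclude continuity of $t\mapsto\vecc{C}\vecc{u}_t$ from continuity of $\vecc{u}$ alone, without any further hypothesis on $\vecc{C}$ beyond its being a constant matrix.
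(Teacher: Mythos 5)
Your proof is correct and follows essentially the same route as the paper: both arguments combine the reverse triangle inequality with the Lipschitz bound $|\vecc{C}(\vecc{u}_t-\vecc{u}_s)|\le C|\vecc{u}_t-\vecc{u}_s|$ to deduce continuity of $\gamma$ from continuity of $\vecc{u}$, and then invoke compactness of $[0,T]$ for boundedness. The only cosmetic difference is that you phrase it as a composition of two Lipschitz maps, whereas the paper writes the same estimates as a single chain of inequalities.
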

\begin{proof}
Note that using boundedness of $\vecc{C}$, for $0 \leq s < t \leq T$:
\begin{align}
| |\vecc{C} \vecc{u}_t | - |\vecc{C} \vecc{u}_s | | &\leq |\vecc{C}\vecc{u}_t - \vecc{C} \vecc{u}_s | \leq |\vecc{C}(\vecc{u}_t - \vecc{u}_s)|  \leq C |\vecc{u}_t - \vecc{u}_s |,
\end{align}
where $C > 0$ is a  contant. Therefore, continuity of $\gamma$ follows from continuity of $\vecc{u}$.  That $\gamma$ is bounded follows from the fact that continuous functions on compact sets are bounded.
%Boundedness of $\gamma$ follows from boundedness of $\vecc{c}$ and $\vecc{u}$. The last statement is a straightforward consequence. 
\end{proof}

In the sequel, for two sets $A$ and $B$, $A \backslash B$ denotes the set difference of $A$ and $B$, i.e., the set of elements in $A$ but not in $B$. 

The following lemma will be useful later.

\begin{lem} \label{lem_teles}
Let  $a_i, b_i$ be operators, for  $i = 1,\dots,N$.  Then, for $N \geq 2$,\\
(a) 
\begin{equation}
    \prod_{n=1}^N a_n - \prod_{m=1}^N b_m = \sum_{k=1}^N \left(\prod_{l=1}^{k-1} a_l\right) (a_k - b_k) \left(\prod_{p=k+1}^N b_{p}\right), \label{telescope}
\end{equation} 
(b) \begin{align}
    &\prod_{n=1}^N a_n - \prod_{m=1}^N b_m \nonumber \\ 
    &= \sum_{k=1}^N \left(\prod_{l=1}^{k-1} b_l\right) (a_k - b_k) \left(\prod_{p=k+1}^N b_{p}\right)  + \sum_{k=1}^N  \bigg( \sum_{p_1, \dots, p_{k-1} \in \Omega}  b_1^{p_1} (a_1 - b_1)^{1-p_1} \nonumber \\
    &\ \ \times \cdots  b_{k-1}^{p_{k-1}} (a_{k-1} - b_{k-1})^{1-p_{k-1}} \bigg) (a_k - b_k) \left(\prod_{p=k+1}^N b_{p}\right).
    \label{telescope2}
\end{align} 
whenever the additions and multiplications are well-defined on appropriate domain.  In the above, 

\begin{equation} 
\Omega := \{ p_1, \dots, p_{k-1} \in \{0,1\}\} \backslash \{p_1 = \dots = p_{k-1} = 1\}, 
\end{equation}
and we have used the convention that $\prod_{l=1}^0 a_l := I$ and $\prod_{p=N+1}^N b_p := I$, where $I$ denotes identity.
\end{lem}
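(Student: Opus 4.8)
The plan is to establish (a) by a telescoping argument and then obtain (b) from (a) by a purely algebraic expansion of the prefix product, taking care throughout with the non-commutativity of the operators and with the stated empty-product conventions $\prod_{l=1}^0 a_l = I$ and $\prod_{p=N+1}^N b_p = I$.

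For part (a), I would introduce the hybrid partial products
\begin{equation}
T_k := \left(\prod_{l=1}^{k} a_l\right)\left(\prod_{p=k+1}^N b_p\right), \qquad 0 \le k \le N,
\end{equation}
so that $T_0 = \prod_{m=1}^N b_m$ and $T_N = \prod_{n=1}^N a_n$. The key observation is that the $k$th summand on the right-hand side of \eqref{telescope} is the difference $T_k - T_{k-1}$, since
\begin{equation}
\left(\prod_{l=1}^{k-1} a_l\right)(a_k - b_k)\left(\prod_{p=k+1}^N b_p\right) = \left(\prod_{l=1}^{k} a_l\right)\left(\prod_{p=k+1}^N b_p\right) - \left(\prod_{l=1}^{k-1} a_l\right)\left(\prod_{p=k}^N b_p\right).
\end{equation}
Summing over $k = 1, \dots, N$ telescopes to $T_N - T_0$, which is exactly the left-hand side of \eqref{telescope}. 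This uses only distributivity and associativity, never commutativity, so it is valid for operators; an equivalent induction on $N$ would also work.

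For part (b), I would start from the identity just proved and expand each prefix $\prod_{l=1}^{k-1} a_l$ by writing $a_l = b_l + (a_l - b_l)$ and distributing the (ordered) product, obtaining
\begin{equation}
\prod_{l=1}^{k-1} a_l = \sum_{(p_1,\dots,p_{k-1}) \in \{0,1\}^{k-1}} \prod_{l=1}^{k-1} b_l^{p_l}(a_l - b_l)^{1-p_l},
\end{equation}
where the notation $b_l^{p_l}(a_l - b_l)^{1-p_l}$ selects the factor $b_l$ when $p_l = 1$ and the factor $(a_l - b_l)$ when $p_l = 0$, with the left-to-right order preserved. Separating the unique all-ones multi-index $(1,\dots,1)$, whose contribution is $\prod_{l=1}^{k-1} b_l$, from the remaining indices, which are precisely those in $\Omega$, and substituting back into (a) produces exactly the two sums appearing in \eqref{telescope2}.

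The argument is in essence careful bookkeeping, so I do not anticipate a substantive obstacle; the only two points requiring genuine attention are the non-commutativity, which forbids collecting factors and forces the strictly ordered expansion above, and the boundary case $k = 1$. In that case the prefix is empty and $\Omega$ is the empty set, since the sole length-zero multi-index is vacuously all-ones and is therefore removed, so the $k = 1$ term of the second sum in \eqref{telescope2} vanishes and the $k = 1$ term of the first sum reduces to $(a_1 - b_1)\prod_{p=2}^N b_p$. I would verify this boundary case explicitly to confirm the conventions are applied consistently, but expect it to cause no difficulty.
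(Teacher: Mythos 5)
Your proposal is correct and follows essentially the same route as the paper: part (b) is handled identically (expanding each prefix factor as $a_l = b_l + (a_l - b_l)$, distributing the ordered product, and splitting off the all-ones multi-index), and your direct telescoping of the hybrid products $T_k$ in part (a) is just the closed-form version of the paper's induction on $N$ (indeed the paper labels the identity ``telescope''). Your explicit check of the $k=1$ boundary case, where $\Omega$ is empty, is a small point the paper leaves implicit but causes no discrepancy.
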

\begin{proof}

\noindent (a) We prove by induction. For the base case of $N=2$, we have:
\begin{equation}
    a_1 a_2 - b_1 b_2 = a_1(a_2 - b_2) + (a_1 -b_1) b_2,
\end{equation}
and so \eqref{telescope} follows for $N=2$. Now, assume that \eqref{telescope} is true for $M>2$. Then,
\begin{align}
    \prod_{n=1}^{M+1} a_n - \prod_{m=1}^{M+1} b_m &= (a_1 \cdots a_M) (a_{M+1} - b_{M+1}) + \left(\prod_{n=1}^{M} a_n - \prod_{m=1}^{M} b_m \right) b_{M+1} \\
    &= (a_1 \cdots a_M) (a_{M+1} - b_{M+1}) + \sum_{k=1}^M \left(\prod_{l=1}^{k-1} a_l\right) (a_k - b_k) \left(\prod_{p=k+1}^M b_{p}\right) b_{M+1}\\
    &= \sum_{k=1}^{M+1} \left(\prod_{l=1}^{k-1} a_l\right) (a_k - b_k) \left(\prod_{p=k+1}^{M+1} b_{p}\right).
\end{align}
Therefore, the formula \eqref{telescope} holds for all $N \geq 2$. \\

\noindent (b) By part (a), we have:
\begin{align}
    \prod_{n=1}^N a_n - \prod_{m=1}^N b_m
    &= \sum_{k=1}^N \left(\prod_{l=1}^{k-1} a_l\right) (a_k - b_k) \left(\prod_{p=k+1}^N b_{p}\right) \\
    &= \sum_{k=1}^N \left(\prod_{l=1}^{k-1} (b_l + (a_l-b_l)) \right) (a_k - b_k) \left(\prod_{p=k+1}^N b_{p}\right). \label{a1}
\end{align} 

Note that
\begin{align}
    &\prod_{l=1}^{k-1} (b_l + (a_l-b_l)) \nonumber \\ 
    &=  \sum_{p_1 = 0}^1 \cdots \sum_{p_{k-1}=0}^1  b_1^{p_1} (a_1-b_1)^{1-p_1} \cdots  b_{k-1}^{p_{k-1}} (a_{k-1} - b_{k-1})^{1-p_{k-1}} \\
    &= \prod_{l=1}^{k-1} b_l + \sum_{p_1, \dots, p_{k-1} \in \Omega}   b_1^{p_1} (a_1-b_1)^{1-p_1} \cdots  b_{k-1}^{p_{k-1}} (a_{k-1} - b_{k-1})^{1-p_{k-1}}, \label{a2}
\end{align}
where $\Omega = \{ p_1, \dots, p_{k-1} \in \{0,1\}\} \backslash \{p_1 = \dots = p_{k-1} = 1\}$. 
Eq. \eqref{telescope} then follows from \eqref{a1} and \eqref{a2}. 
\end{proof}

%\begin{lem}
%For all $f \in C_b(\RR^n)$, $0 \leq s \leq t \leq T$, the functions $v(s,\vecc{h}) = P_{s,t} f(\vecc{h}) \in C_b([0,t] \times \RR^n) \cap C^{1,2}([0,t) \times \RR^n)$ is the unique bounded classical solution to the Kolmogorov backward equation (BKE):
%\begin{align}
%    \frac{\partial v}{\partial s} &= -\mathcal{L}_s v, \ \ \ 0 \leq s < t, \\
%    v(t,\vecc{h}) &= f(\vecc{h}). 
%\end{align}
%\end{lem}
%\begin{proof}
%This is a straightforward application of Lemma 3.1 in \citep{chen2020mathematical}. 
%\end{proof}

%\begin{lem} For any $f \in C_b^2(\RR^d)$ and $0 \leq s \leq t \leq T$, when $\|\gamma\|$ is sufficiently small, 
%\begin{equation}
%P_{s,t} f(x) - P_{s,t}^0 f(x) = \int_s^t P_{s,u}^0 (\mathcal{L}_u - \mathcal{L}_u^0) P_{u,t} f(x) du.
%\end{equation}
%\end{lem}

%\citep{bogachev2015fokker}

\subsection{Proof of Proposition 3.1}
Our proof is adapted from and built on that  in \citep{chen2020mathematical}, which provides a rigorous justification of nonequilibrium FDTs for a wide class of diffusion processes and nonlinear input perturbations in the linear response regime (see also the more abstract approach in \citep{hairer2010simple,dembo2010markovian}). We are going to extend the techniques in \citep{chen2020mathematical} to study the fully nonlinear response regime in the context of our SRNNs.  

First, note that Assumption \ref{imp_ass} implies that the processes $\vecc{h}$ and $\overline{\vecc{h}}$ (for all $\gamma$) automatically satisfy the regular conditions (Definition 2.2) in \citep{chen2020mathematical}. Therefore, it follows from Proposition 2.5 in \citep{chen2020mathematical} that the weak solution of the SDE (5) exists up to time $T$ and is unique in law. In particular,   $\vecc{h}$ and $\overline{\vecc{h}}$  are nonexplosive up to time $T$. Moreover, if $n=r$ and $\vecc{\sigma}>0$, the strong solution of the SDE exists up to time $T$ and is pathwise unique. 

We collect some intermediate results that we will need  later. Recall that $\Delta \mathcal{L}_t := \mathcal{L}_t - \mathcal{L}^0$, for $t \in [0,T]$, where 
the infinitesimal generators $\mathcal{L}_t$ and $\mathcal{L}^0$ are defined in \eqref{7} and \eqref{8} in the main paper respectively. Also, we are using Einstein's summation convention. %Recall that $K$ is the compact subset of $\RR^n$ that we fix earlier. 

\begin{lem}  \label{chenl1}
For any $f \in C_b(\RR^n)$ and $0 \leq s \leq t \leq T$, the function $v(\vecc{h},s) = P_{s,t}^0 f(\vecc{h}) \in C_b(  \RR^n \times [0,t]) \cap C^{1,2}(\RR^n \times [0,t])$ is the unique bounded classical solution to the (parabolic) backward Kolmogorov equation (BKE):
\begin{align}
    \frac{\partial v}{\partial s} &= -\mathcal{L}^0 v, \ \ 0 \leq s < t, \label{bke1} \\
    v(\vecc{h},t) &= f(\vecc{h}). \label{bke2} 
\end{align}
If instead $f \in C_b^{\infty}(\RR^n)$, then the above statement holds with the BKE defined on $0 \leq s \leq t$ (i.e., with the endpoint $t$ included), in which case $v \in C^{1,2}(\RR^n \times [0,t])$. 
%Moreover, for any $\gamma \in [0,3]$, $0 \leq s \leq t \leq T$, there exists a constant $C > 0$ such that
%\begin{equation}
 %   \|P^0_{s,t} f\|_{C_b^\gamma(\RR^n)} \leq C\|f\|_{C_b^\gamma(\RR^n)}, 
%\end{equation}
%for all $f \in C_b^\gamma(\RR^n)$.
\end{lem}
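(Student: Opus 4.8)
The plan is to combine the probabilistic (Feynman--Kac / martingale) characterization of $v$ with classical parabolic regularity theory, leaning crucially on the uniform ellipticity and coefficient smoothness supplied by Assumption \ref{imp_ass}. By definition $v(\vecc{h},s) = P^0_{s,t}f(\vecc{h}) = \mathbb{E}[f(\overline{\vecc{h}}_t)\mid \overline{\vecc{h}}_s = \vecc{h}]$, and since $\overline{\vecc{\phi}}(\vecc{h}) = -\vecc{\Gamma}\vecc{h} + \vecc{a}(\vecc{W}\vecc{h}+\vecc{b})$ is analytic with bounded derivatives of all orders (Assumption \ref{imp_ass}(b),(c)) while $\vecc{\Sigma}=\vecc{\sigma}\vecc{\sigma}^T > 0$ is uniformly elliptic (Assumption \ref{imp_ass}(d)), the operator $\partial_s + \mathcal{L}^0$ is uniformly parabolic with smooth bounded coefficients. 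I would first invoke the classical theory of such equations---existence and two-sided Gaussian bounds for the fundamental solution (transition density), together with interior Schauder estimates---to obtain the smoothing property: for bounded continuous $f$ and any $s<t$ we have $v(\cdot,s)\in C^\infty(\RR^n)$, and $v$ is jointly $C^{1,2}$ on $\RR^n\times[0,t)$ with the relevant derivatives bounded, owing to the bounded coefficients. This yields $v \in C_b(\RR^n\times[0,t])\cap C^{1,2}(\RR^n\times[0,t))$.

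Next, for the PDE itself I would apply It\^o's formula to $r\mapsto v(\overline{\vecc{h}}_r,r)$ on $[s,t)$. By the tower property $M_r := v(\overline{\vecc{h}}_r,r) = \mathbb{E}[f(\overline{\vecc{h}}_t)\mid \mathcal{F}_r]$ is a martingale, and boundedness of $\vecc{\nabla}v$ makes the stochastic integral a genuine martingale; hence the finite-variation part must vanish, forcing $\partial_s v + \mathcal{L}^0 v = 0$ pointwise, which is exactly \eqref{bke1}. The terminal condition \eqref{bke2} then follows from the Feller / strong continuity of $P^0_{s,t}$ as $s\uparrow t$, giving $v(\vecc{h},s)\to f(\vecc{h})$.

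For uniqueness within the class of bounded classical solutions I would run the same argument in reverse as a verification step: if $w$ is any bounded $C^{1,2}$ solution of \eqref{bke1}--\eqref{bke2}, then It\^o applied to $w(\overline{\vecc{h}}_r,r)$ together with $\partial_s w + \mathcal{L}^0 w = 0$ shows this process is a local martingale, which is a true martingale because $w$ is bounded; taking expectations and letting $r\uparrow t$ with $w(\cdot,t)=f$ gives $w(\vecc{h},s) = \mathbb{E}[f(\overline{\vecc{h}}_t)\mid\overline{\vecc{h}}_s=\vecc{h}] = v(\vecc{h},s)$. For the refinement when $f\in C_b^\infty(\RR^n)$, the boundary layer at $s=t$ disappears: differentiating under the expectation (justified by the bounded derivatives of $f$ and the smoothness of the flow) shows that $v$ and its spatial derivatives extend continuously up to $s=t$, so $v\in C^{1,2}(\RR^n\times[0,t])$ with the closed endpoint included.

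The main obstacle is the regularity claim $v\in C^{1,2}$---specifically the smoothing of merely continuous terminal data into a classical interior solution, and the controlled behaviour as $s\uparrow t$. This is exactly where the uniform ellipticity (Assumption \ref{imp_ass}(d)) and the boundedness of all coefficient derivatives (Assumption \ref{imp_ass}(b),(c)) are indispensable. Once $C^{1,2}$ regularity is secured, both the PDE \eqref{bke1}--\eqref{bke2} and uniqueness follow cleanly from It\^o's formula and the martingale property; the sole delicate point is the endpoint distinction between the $f\in C_b$ case (open, $s<t$) and the $f\in C_b^\infty$ case (closed, $s\le t$), which is precisely the boundary-regularity subtlety that must be handled with care.
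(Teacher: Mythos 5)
Your overall architecture --- probabilistic representation of $v$, parabolic smoothing for $C^{1,2}$ regularity, It\^o's formula plus a martingale argument for the PDE, and a verification argument for uniqueness --- is in the same spirit as the paper's proof, which delegates exactly these steps to Lemma 3.1 and Remark 3.2 of Chen and Jia (2020) together with It\^o's formula. However, there is a genuine gap in the step you yourself flag as the crux. You assert that $\partial_s + \mathcal{L}^0$ is ``uniformly parabolic with smooth bounded coefficients.'' That is false: the unperturbed drift is $\overline{\vecc{\phi}}(\vecc{h}) = -\vecc{\Gamma}\vecc{h} + \vecc{a}(\vecc{W}\vecc{h}+\vecc{b})$, and while Assumption \ref{imp_ass}(b),(c) makes $\vecc{a}$ and all its derivatives bounded, the damping term $-\vecc{\Gamma}\vecc{h}$ grows linearly, so the drift coefficient of $\mathcal{L}^0$ is unbounded on $\RR^n$. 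Consequently the classical results you invoke --- two-sided Gaussian bounds for the fundamental solution and interior Schauder estimates for bounded-coefficient parabolic operators --- do not apply verbatim; Gaussian-type heat-kernel bounds and the standard Schauder machinery can fail for operators with unbounded drift (Ornstein--Uhlenbeck-type operators being the canonical example). This is precisely the obstruction that forces the paper, following Chen and Jia, to rely on Lorenzi's (2011) theory of nonautonomous Kolmogorov equations with \emph{unbounded} coefficients for the existence, uniqueness and regularity of the bounded classical solution; indeed ``unbounded coefficients'' is in the very title of the cited Chen--Jia paper.

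The rest of your argument is essentially sound and matches what the paper's sources do: given nonexplosion (which the paper obtains from Assumption \ref{imp_ass} via Proposition 2.5 of Chen--Jia) and given the $C^{1,2}$ regularity, the It\^o/martingale identification of $P^0_{s,t}f$ with a solution, the terminal condition via strong continuity as $s \uparrow t$, and the localization-plus-dominated-convergence verification that any bounded classical solution equals $v$ all go through even with linearly growing drift. So the needed repair is localized: replace the appeal to classical bounded-coefficient parabolic theory with a regularity theory valid for unbounded drift (e.g.\ Lorenzi, as the paper cites), after which your proof becomes a correct, self-contained version of the paper's citation-based argument.
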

\begin{proof}
The  statements are straightforward applications of Lemma 3.1 and Remark 3.2 in \citep{chen2020mathematical} to our setting, since Assumption \ref{imp_ass} ensures that the regular conditions there hold.  See the proof in \citep{chen2020mathematical} for details. The idea is that upon imposing some regular conditions, one can borrow the results from \citep{lorenzi2011optimal} to prove the existence and uniqueness of a bounded classical solution to the BKE \eqref{bke1}-\eqref{bke2}. That $v(\vecc{h},s) = P^0_{s,t} f(\vecc{h}) = \mathbb{E}[ f(\vecc{h}_t) | \vecc{h}_s = \vecc{h}]$ satisfies the BKE and the terminal condition \eqref{bke2} follows from an application of It\^o's formula and standard arguments of stochastic analysis \citep{karatzas1998brownian}. 
\end{proof}

%\begin{lem} 
%Let $t \in [0,T]$, $\gamma \in C([0,T],\RR)$ and $f \in C_b^2(\RR^n)$. Define the function $g: [0,t] \times \RR^n \to \RR$ by $g(s,\vecc{h}) = \mathcal{L}_s P_{s,t} f(\vecc{h})$. Then there exists constants $C, C' >0$ such that \\
%(a) For any $f \in C_b^2(\RR^n)$, $g \in C_b([0,t]\times \RR^n)$ and 
%\begin{equation}
%    \|g\|_{C_b([0,t]\times \RR^n)} \leq C \|f\|_{C_b^2(\RR^n)}.
%\end{equation}
%(b) For any $f \in C_b^{2 + \theta}(\RR^n)$, $g \in C_b^{0, \theta}([0,t] \times \RR^n)$ and 
%\begin{equation}
%    \|g\|_{C_b^{0,\theta}([0,t]\times \RR^n)} \leq C' \|f\|_{C_b^{2+\theta}(\RR^n)}.
%\end{equation}
%\end{lem}
%\begin{proof}
%This is simply Theorem 3.4 in \citep{chen2020mathematical} (the proof uses Lemma \ref{chenl1} and standard inequalities). \end{proof}

\begin{lem} \label{lem_aux}
Let $\gamma \in C([0,T],\RR)$  and denote $P^\gamma_{s,t} := P_{s,t}$.  Then: \\
(a) For $f \in C_b^2(\RR^n)$ and $0 \leq s \leq t \leq T$, 
\begin{align}
    P^\gamma_{s,t} f(\vecc{h}) - P_{s,t}^0 f(\vecc{h}) &= \int_s^t du  \gamma(u)  U^i_u P_{s,u}^0    \frac{\partial}{\partial h^i}  P^\gamma_{u,t} f(\vecc{h}). \label{aa1}
\end{align}
(b)  For all $f \in C_b^{\infty}(\RR^n)$, $\phi \in C([0,T],\RR)$, and $0 \leq s \leq t \leq T$, 
\begin{align}
    \lim_{\epsilon \to 0}  \frac{1}{\epsilon} (P^{\epsilon \phi}_{0,t} f(\vecc{h}) - P_{0,t}^0 f(\vecc{h}) = \int_0^t \phi(s) P_{0,s}^0 \Delta \mathcal{L}_s P_{s,t}^0 f(\vecc{h}) ds,
\end{align}
where $\Delta \mathcal{L}_s \cdot = (\mathcal{L}_s - \mathcal{L}_s^0) \cdot =  U^i  \frac{\partial}{\partial h^i} \cdot$. 
\end{lem}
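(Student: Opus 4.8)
The plan is to recognize part (a) as a variation-of-constants (Duhamel) identity between the perturbed and unperturbed evolution families, and then to obtain part (b) by differentiating this identity in the perturbation amplitude at zero.

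For part (a), I would fix $0 \le s \le t \le T$ and $f \in C_b^2(\RR^n)$ and introduce the interpolating family $g(u) := P^0_{s,u} P^\gamma_{u,t} f(\vecc{h})$ for $u \in [s,t]$. The boundary values are immediate, $g(s) = P^\gamma_{s,t}f$ and $g(t) = P^0_{s,t}f$, so the left-hand side of \eqref{aa1} equals $g(s) - g(t) = -\int_s^t g'(u)\,du$. To compute $g'(u)$ I would use two facts. First, since the unperturbed process is time-homogeneous, $P^0_{s,u}$ is a Markov semigroup with generator $\mathcal{L}^0$, so $\partial_u P^0_{s,u} w = P^0_{s,u}\mathcal{L}^0 w$ on sufficiently regular $w$. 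Second, by the backward Kolmogorov equation for the perturbed family (the perturbed analogue of Lemma \ref{chenl1}, valid because $\gamma$ is bounded and continuous by Lemma \ref{B2}, so the perturbed process satisfies the same regular conditions as the unperturbed one), $\partial_u P^\gamma_{u,t}f = -\mathcal{L}^\gamma_u P^\gamma_{u,t}f$, where $\mathcal{L}^\gamma_u = \mathcal{L}^0 + \gamma(u)U^i_u\,\partial_{h^i}$. Applying the product rule to $g(u) = P^0_{s,u}(P^\gamma_{u,t}f)$ then gives
\[
g'(u) = P^0_{s,u}\mathcal{L}^0 P^\gamma_{u,t}f - P^0_{s,u}\mathcal{L}^\gamma_u P^\gamma_{u,t}f = -P^0_{s,u}\big(\gamma(u)U^i_u\,\partial_{h^i}P^\gamma_{u,t}f\big),
\]
and integrating, while pulling the scalar functions $\gamma(u)$ and $U^i_u$ (which depend on $u$ only) through $P^0_{s,u}$, yields exactly \eqref{aa1}.

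For part (b), I would specialize (a) to $s=0$ and $\gamma = \epsilon\phi$, so that $\Delta\mathcal{L}_u = U^i_u\,\partial_{h^i}$ and
\[
\frac{1}{\epsilon}\big(P^{\epsilon\phi}_{0,t}f - P^0_{0,t}f\big) = \int_0^t \phi(u)\, P^0_{0,u}\,\Delta\mathcal{L}_u\, P^{\epsilon\phi}_{u,t}f\,du,
\]
and then send $\epsilon \to 0$. The content of the limit is the replacement of $P^{\epsilon\phi}_{u,t}f$ by $P^0_{u,t}f$ inside the integral. I would justify this by first noting, again from (a), the bound $\|P^{\epsilon\phi}_{u,t}f - P^0_{u,t}f\|_\infty \le \epsilon\,\|\phi\|_\infty \sup_{v\in[u,t]}\|\partial_{h^i}P^{\epsilon\phi}_{v,t}f\|_\infty\,(t-u) = O(\epsilon)$, and similarly for the spatial gradient $\partial_{h^i}P^{\epsilon\phi}_{u,t}f$ after differentiating \eqref{aa1} in $\vecc{h}$. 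This uses the uniform first- and second-order gradient estimates for the transition operators afforded by Assumption \ref{imp_ass} (positive definite $\vecc{\Sigma}$, positive stable $\vecc{\Gamma}$, and boundedness of $\vecc{a},\vecc{f}$ and their derivatives), which hold uniformly over small $\epsilon$ and over $u \in [0,t]$. With these uniform bounds in hand, dominated convergence lets me pass the limit through the $u$-integral and through $P^0_{0,u}\Delta\mathcal{L}_u$, producing the claimed expression.

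I expect the main obstacle to be the analytic bookkeeping behind the two limiting arguments: justifying the product rule for $g'(u)$ — which requires strong enough continuity of $u \mapsto P^\gamma_{u,t}f$ together with control of $\mathcal{L}^0 P^\gamma_{u,t}f$ and $\mathcal{L}^\gamma_u P^\gamma_{u,t}f$ — and, in part (b), the uniform-in-$u$ convergence of the gradients $\partial_{h^i}P^{\epsilon\phi}_{u,t}f$. Both hinge on the regularity and uniform boundedness furnished by Assumption \ref{imp_ass} and the perturbed version of Lemma \ref{chenl1}; once these estimates are recorded, the algebraic steps above are routine.
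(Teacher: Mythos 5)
Your proposal is correct in substance, but it proves part (a) by a genuinely different route than the paper. The paper follows Chen--Jia: it uses the backward Kolmogorov equations to show that the difference $u(\vecc{h},s) := P^\gamma_{s,t}f(\vecc{h}) - P^0_{s,t}f(\vecc{h})$ solves a backward Cauchy problem with source term $-\gamma(s)\Delta\mathcal{L}_s P^\gamma_{s,t}f$, and then represents $u$ probabilistically by applying It\^o's formula to $u(\overline{\vecc{h}}_s,s)$ along the \emph{unperturbed} diffusion, with hitting-time localization on balls $B_R$ and dominated convergence to kill the stochastic integral and pass to the limit. Your interpolation $g(u) = P^0_{s,u}P^\gamma_{u,t}f$ and the product rule give the same identity by a purely operator-theoretic Duhamel argument, trading the It\^o/localization machinery for the burden of justifying joint differentiability of the composition and membership of $P^\gamma_{u,t}f$ in the domain of $\mathcal{L}^0$ --- exactly the regularity that the paper's Lemma \ref{chenl1} (via Lorenzi's classical-solution theory) supplies, so the exchange is fair. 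For part (b) the two arguments have the same skeleton (uniform smallness of $P^{\epsilon\phi}_{s,t}f - P^0_{s,t}f$, then dominated convergence), but the sources of the estimate differ: the paper invokes the a priori bound $\|u\|_\infty \leq C\|\alpha\|_\infty$ for the backward Cauchy problem from Lorenzi, while you bootstrap the $O(\epsilon)$ bound directly from identity (a), at the price of assuming uniform-in-$\epsilon$ first- and second-order gradient bounds on $P^{\epsilon\phi}_{u,t}f$. Note that the real crux of (b) --- upgrading sup-norm convergence of $P^{\epsilon\phi}_{s,t}f$ to convergence of its \emph{gradient}, which appears inside $\Delta\mathcal{L}_s$ --- is something you address explicitly by differentiating (a) in $\vecc{h}$, whereas the paper passes over it tersely; on that point your write-up is, if anything, more forthcoming about what parabolic regularity must be cited.
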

\begin{proof}
The following proof is based on and adapted from that of Lemma 3.5 and Lemma 3.7 in \citep{chen2020mathematical}. \\

\noindent (a) As noted earlier, due to Assumption \ref{imp_ass}, both $\vecc{h}$ and $\overline{\vecc{h}}$ satisfy the regular conditions (Definition 2.2) in \citep{chen2020mathematical}. Therefore, we can apply Lemma \ref{chenl1} and so their transition operators satisfy the BKE \eqref{bke1}. This implies that $u(\vecc{h},s) := P^\gamma_{s,t} f(\vecc{h}) - P^0_{s,t} f(\vecc{h})$ is the  bounded classical solution to:
\begin{align}
\frac{\partial u(\vecc{h},s)}{\partial s} &= -\mathcal{L}^0 u(\vecc{h},s) - \gamma(s) \Delta \mathcal{L}_s P^\gamma_{s,t} f(\vecc{h}), \ \ \ 0 \leq s < t, \label{b1} \\
u(\vecc{h},t) &= 0. \label{b2}
\end{align}

Applying It\^o's formula  gives
\begin{align}
    u(\overline{\vecc{h}}_s,s) &= \int_0^s \frac{\partial}{\partial r} u(\overline{\vecc{h}}_r, r) dr + \int_0^s \mathcal{L}^0 u(\overline{\vecc{h}}_r, r) dr + \int_0^s \nabla u(\overline{\vecc{h}}_r, r)^T \vecc{\sigma} d\vecc{W}_r \\
    &= - \int_0^s \gamma(r) \Delta \mathcal{L}_r P^\gamma_{r,t} f(\overline{\vecc{h}}_r)  dr + \int_0^s \nabla u(\overline{\vecc{h}}_r, r)^T \vecc{\sigma} d\vecc{W}_r. 
\end{align}
For $R>0$, let $B_R := \{ \vecc{h} \in \RR^n: |\vecc{h}| < R\}$. Define  $\tau_R := \inf \{ t \geq 0: \overline{\vecc{h}}_t \in  \partial B_R\}$, the hitting time of the sphere $\partial B_R$ by $\overline{\vecc{h}}$, and the explosion time $\tau = \lim_{R \to \infty} \tau_R$. Note that it follows from Assumption \ref{imp_ass} and an earlier note that $\tau > T$. 

Then, if $|\vecc{h}| < R$ and $s < r < t$, 
\begin{equation}
    u(\vecc{h}, s) = \mathbb{E}[ u(\overline{\vecc{h}}_{r \wedge \tau_R}, r \wedge \tau_R) | \overline{\vecc{h}}_s = \vecc{h}] + \mathbb{E}\left[ \int_s^r \gamma(u) \Delta \mathcal{L}_u P^\gamma_{u,t} f(\overline{\vecc{h}}_u)  1_{u \leq \tau_R} du \bigg| \overline{\vecc{h}}_s = \vecc{h} \right].  
\end{equation}
Since $\overline{\vecc{h}}$ is nonexplosive up to time $T$ and $u \in C_b(\RR^n \times [0,T])$, we have:
\begin{equation}
    \lim_{r \to t} \lim_{R \to \infty} \mathbb{E}[u(\overline{\vecc{h}}_{r \wedge \tau_R}, r \wedge \tau_R) | \overline{\vecc{h}}_s = \vecc{h} ] = \lim_{r \to t} \mathbb{E}[u(\overline{\vecc{h}}_r,r)| \overline{\vecc{h}}_s = \vecc{h}] = \mathbb{E}[u(\overline{\vecc{h}}_t,t) | \overline{\vecc{h}}_s = \vecc{h}] = 0.
\end{equation}
Note that $(\Delta \mathcal{L}_s P^\gamma_{s,t} f(\vecc{h}))_{s \in [0,t]} \in C_b(\RR^n \times [0,t])$ for any $f \in C_b^2(\RR^n)$ by Theorem 3.4 in \citep{chen2020mathematical} and recall that $\gamma \in C([0,t],\RR)$ for $t \in [0,T]$. Therefore, it follows from the above and an application of dominated convergence theorem that
\begin{equation}
    u(\vecc{h},s) = \int_s^t \gamma(u)  \mathbb{E}[\Delta \mathcal{L}_u P^\gamma_{u,t} f(\overline{\vecc{h}}_u)| \overline{\vecc{h}}_s = \vecc{h} ] du, \label{abc}
\end{equation}
from which \eqref{aa1} follows. \\

\noindent (b) By \eqref{abc} (with  $\gamma := \epsilon \phi$ and $s:=0$ there),  we have, for $\epsilon > 0$,  
\begin{align}
    \frac{1}{\epsilon}(P_{0,t}^{\epsilon \phi} f(\vecc{h}) - P^0_{0,t}f(\vecc{h})) &=  \int_0^t \phi(s) \mathbb{E}[ \Delta \mathcal{L}_s P^{\epsilon \phi}_{s,t} f(\overline{\vecc{h}}_s)| \overline{\vecc{h}}_0 = \vecc{h} ] ds.
\end{align}

We denote $g^\epsilon(\vecc{h},s) := \Delta \mathcal{L}_s P^{\epsilon \phi}_{s,t} f(\vecc{h})$. 
Then, it follows from Assumption \ref{imp_ass} that $\|g^\epsilon \|_\infty < \infty$, and
\begin{align}
    \sup_{s \in [0,t]} |\epsilon \phi(s)  g^\epsilon(\cdot,s) | &\leq  \epsilon \|\phi\|_\infty \|g^\epsilon\|_\infty \to 0, \label{b0}
\end{align}
as $\epsilon \to 0$. 

Now, let $t \in [0,T]$. For any $\alpha \in C_b^\infty(\RR^n \times [0,t])$, consider the following Cauchy problem:
\begin{align}
    \frac{\partial u(\vecc{h},s)}{\partial s} &= -\mathcal{L}^0 u(\vecc{h},s) - \alpha(\vecc{h},s), \ \ 0 \leq s \leq t, \label{b3} \\ 
    u(\vecc{h},t) &= 0.
\end{align}
By Theorem 2.7 in \citep{lorenzi2011optimal}, the above equation has a unique bounded classical solution. Moreover,  there exists a constant $C>0$ such that 
\begin{equation} \label{b4}
    \|u\|_\infty \leq  C  \|\alpha\|_\infty. 
\end{equation}
Therefore, \eqref{b1}-\eqref{b2} together with  \eqref{b0}-\eqref{b4} give: 
\begin{equation}
    \sup_{s \in [0,t]} |P_{s,t}^{\epsilon \phi} f - P_{s,t} f| \to 0,
\end{equation}
as $\epsilon \to 0$. 
Thus, we have $g^\epsilon(\vecc{h},s) \to \Delta \mathcal{L}_s P_{s,t}^0 f(\vecc{h})$ as $\epsilon \to 0$. 

Finally, using all these and the dominated convergence theorem, we have:
\begin{align}
    \lim_{\epsilon \to 0} \frac{1}{\epsilon}(P_{0,t}^{\epsilon \phi} f(\vecc{h}) - P^0_{0,t}f(\vecc{h})) &=  \int_0^t \phi(s)  \mathbb{E}\left[  \lim_{\epsilon \to 0} g^\epsilon(\overline{\vecc{h}}_s, s) \bigg| \overline{\vecc{h}}_0 = \vecc{h} \right] ds \\
    &= \int_0^t \phi(s)  \mathbb{E}\left[ \Delta \mathcal{L}_s P_{s,t}^0(\overline{\vecc{h}}_s) \bigg| \overline{\vecc{h}}_0 = \vecc{h} \right] ds.
\end{align}

\end{proof}

We now prove Proposition 3.1.\\

\begin{proof}(Proof of Proposition 3.1 (a)) 
Recall that it follows from our assumptions that  $f(\vecc{h}_t) \in C_b^{\infty}(\RR^n)$ for all $t \in [0,T]$. 

We proceed by induction. For the base case of $n=1$, we have, for $0 \leq t \leq T$, 
\begin{equation}
\mathbb{E} f(\vecc{h}_t) = \int P^0_{0,t}f(\vecc{h}) \rho_{init}(\vecc{h}) d\vecc{h} = \mathbb{E} P^0_{0,t} f(\vecc{h}_0). 
\end{equation}
Then, for $u_0 \in C([0,t],\RR)$  and any $\phi  \in C_c^\infty(0,t)$:
\begin{align}
    \int_0^t \frac{\delta F_s}{\delta u}\bigg|_{u=u_0} \phi(s) ds &= DF_t[u_0] \cdot \phi \\
    &= \lim_{\epsilon \to 0} \frac{1}{\epsilon} (F_t[u_0 + \epsilon \phi] - F_t[u_0]) \\
    &= \lim_{\epsilon \to 0} \frac{1}{\epsilon} (\mathbb{E} f(\vecc{h}_t^{u_0 + \epsilon \phi}) - \mathbb{E} f(\vecc{h}^{u_0}_t)) \\ 
    &= \lim_{\epsilon \to 0} \mathbb{E} \left[ \frac{1}{\epsilon} (P_{0,t}^{u_0 + \epsilon \phi} f(\vecc{h}_0) - P^{u_0}_{0,t} f(\vecc{h}_0)) \right] \\
    &= \int_0^t ds \phi(s) \mathbb{E} P^{u_0}_{0,s} \Delta L_s P^{u_0}_{s,t} f(\vecc{h}_0), 
\end{align}
where the last equality follows from Lemma \ref{lem_aux} (b).  Therefore, the result for the base case follows upon setting $u_0 = 0$.

Now assume that 
\begin{align} 
D^{n-1} F_t[u_0] \cdot (\phi)^{n-1} &= (n-1)! \int_{[0,t]^{n-1}} ds_1 \cdots ds_{n-1} \phi(s_1) \cdots \phi(s_{n-1}) \nonumber \\
&\hspace{0.5cm} \times \mathbb{E} P^{u_0}_{0,s_{n-1}} \Delta \mathcal{L}_{s_{n-1}} P^{u_0}_{s_{n-1},s_{n-2}} \Delta \mathcal{L}_{s_{n-2}} \cdots P^{u_0}_{s_1,t} f(\vecc{h}_0) 
\end{align}
holds for any $\phi \in C_c^\infty(0,t)$,  for $n>1$. Then, for any $\phi \in C_c^\infty(0,t)$:
\begin{align} 
&D^{n} F_t[0] \cdot (\phi)^n\\
&= \lim_{\epsilon \to 0}  \frac{(D^{n-1} F_t[\epsilon \phi] - D^{n-1}F_t[0])\cdot (\phi)^{n-1}}{\epsilon}  \\
&= (n-1)! \lim_{\epsilon \to 0} \frac{1}{\epsilon} \bigg( \int_{[0,t]^{n-1}} ds_1 \cdots ds_{n-1} \phi(s_1) \cdots \phi(s_{n-1}) \big(\mathbb{E} P^{\epsilon \phi}_{0,s_{n-1}} \Delta \mathcal{L}_{s_{n-1}} \nonumber \\
&\hspace{0.5cm} P^{\epsilon \phi}_{s_{n-1},s_{n-2}} \Delta \mathcal{L}_{s_{n-2}} \cdots P^{\epsilon \phi}_{s_1,t} f(\vecc{h}_0)  - \mathbb{E} P^{0}_{0,s_{n-1}} \Delta \mathcal{L}_{s_{n-1}} P^{0}_{s_{n-1},s_{n-2}} \Delta \mathcal{L}_{s_{n-2}} \cdots P^{0}_{s_1,t} f(\vecc{h}_0) \big) \bigg).
\end{align}

Note that by use of Lemma \ref{lem_teles}(b) and that the limit of products of two or more terms of the form $P_{s,s'}^{\epsilon \phi} - P^0_{s, s'}$ (with $s \leq s'$), when multiplied by $(1/\epsilon)$, vanishes as $\epsilon \to 0$, we have
\begin{align}
&\frac{1}{\epsilon} \big( \mathbb{E} P^{\epsilon \phi}_{0,s_{n-1}} \Delta \mathcal{L}_{s_{n-1}} P^{\epsilon \phi}_{s_{n-1},s_{n-2}} \Delta \mathcal{L}_{s_{n-2}} \cdots P^{\epsilon \phi}_{s_1,t} f(\vecc{h}_0) \nonumber \\ 
&\hspace{0.5cm} - \mathbb{E} P^{0}_{0,s_{n-1}} \Delta \mathcal{L}_{s_{n-1}} P^{0}_{s_{n-1},s_{n-2}} \Delta \mathcal{L}_{s_{n-2}} \cdots P^{0}_{s_1,t} f(\vecc{h}_0) \big) \\
&= \frac{1}{\epsilon} \mathbb{E} \sum_{k=1}^{n-1} \left( \prod_{l=1}^{k-1} \Delta \mathcal{L}_{s_l} P^0_{s_l, s_{l-1}} \right) (\Delta \mathcal{L}_{s_k} P_{s_k, s_{k-1}}^{\epsilon \phi} - \Delta \mathcal{L}_{s_k} P^0_{s_k, s_{k-1}}) \left( \prod_{p=k+1}^n \Delta \mathcal{L}_{s_p} P^0_{s_p, s_{p-1}} \right) f(\vecc{h}_0) \nonumber \\ 
&\hspace{0.5cm} + e(\epsilon),
\end{align}
where $e(\epsilon) = o(\epsilon)$ as $\epsilon \to 0$, and we have set $s_0 := t$, $s_n := 0$ and $\Delta \mathcal{L}_{s_n} := 1$. 

Moreover, 
\begin{align}
&\lim_{\epsilon \to 0} \frac{1}{\epsilon} \big( \mathbb{E} P^{\epsilon \phi}_{0,s_{n-1}} \Delta \mathcal{L}_{s_{n-1}} P^{\epsilon \phi}_{s_{n-1},s_{n-2}} \Delta \mathcal{L}_{s_{n-2}} \cdots P^{\epsilon \phi}_{s_1,t} f(\vecc{h}_0) \nonumber \\
&\hspace{0.5cm} - \mathbb{E} P^{0}_{0,s_{n-1}} \Delta \mathcal{L}_{s_{n-1}} P^{0}_{s_{n-1},s_{n-2}} \Delta \mathcal{L}_{s_{n-2}} \cdots P^{0}_{s_1,t} f(\vecc{h}_0) \big) \\
&=  \mathbb{E} \sum_{k=1}^{n} \left( \prod_{l=1}^{k-1} \Delta \mathcal{L}_{s_l} P^0_{s_l, s_{l-1}} \right) \Delta \mathcal{L}_{s_k} \frac{1}{\epsilon} \lim_{\epsilon \to 0} (P_{s_k, s_{k-1}}^{\epsilon \phi} -  P^0_{s_k, s_{k-1}}) \left( \prod_{p=k+1}^n \Delta \mathcal{L}_{s_p} P^0_{s_p, s_{p-1}} \right) f(\vecc{h}_0) \\
&=  \mathbb{E} \sum_{k=1}^{n} \left( \prod_{l=1}^{k-1} \Delta \mathcal{L}_{s_l} P^0_{s_l, s_{l-1}} \right) \Delta \mathcal{L}_{s_k} 
\int_0^t ds \phi(s) P_{s_k,s}^0 \Delta \mathcal{L}_s P_{s,s_{k-1}}^0 
\left( \prod_{p=k+1}^n \Delta \mathcal{L}_{s_p} P^0_{s_p, s_{p-1}} \right) f(\vecc{h}_0),
\end{align}
where we have applied Lemma \ref{lem_aux}(b) in the last line. 

Hence, using the above expression and symmetry of the mapping associated to derivative, we have
\begin{align} 
D^{n} F_t[0] \cdot (\phi)^n
&= n (n-1)! \int_{[0,t]^{n}}  ds_1 \cdots ds_{n-1} ds \phi(s)  \phi(s_1) \cdots \phi(s_{n-1}) \nonumber \\
&\hspace{0.5cm} \times \mathbb{E} P^0_{0,s} \Delta \mathcal{L}_{s}  P^0_{s,s_{n-1}} \Delta \mathcal{L}_{s_{n-1}} P^0_{s_{n-1},s_{n-2}} \Delta \mathcal{L}_{s_{n-2}} \cdots P^0_{s_1,t} f(\vecc{h}_0). \label{crucial}
\end{align}
Therefore, (a) holds for all $n \geq 2$. 
\end{proof}

\begin{rmk}
Note that here it is crucial to have (b)-(c) in Assumption \ref{imp_ass} to ensure that all derivatives of the form \eqref{crucial} are bounded and Lipschitz continuous. It may be possible to relax the assumptions on $f$ at an increased cost of technicality but we choose not to pursue this direction.  Had we been only interested in the linear response regime (i.e., $n=1$ case), then one can indeed relax the assumption on $f$ substantially (see \citep{chen2020mathematical}).  
\end{rmk}

\begin{proof}(Proof of Proposition 3.1 (b))
%integ by parts
We proceed by an induction argument. We use the notation $(f,g) := \int_{\RR^n} f(\vecc{h}) g(\vecc{h}) d\vecc{h}$ for $f,g \in C_b(\RR^n)$, in the following. Let $p_t$ denote the probability density of $\overline{\vecc{h}}_t$, $t \in [0,T]$, and recall that $p_0 = \rho_{init}$.  

For the base case of $n=1$, first note that, using the properties of expectation,
\begin{equation}
    \mathbb{E}[P^0_{0,s_1} \Delta \mathcal{L}_{s_1} P^0_{s_1,t} f(\vecc{h}_0)] = \mathbb{E}[ \mathbb{E}[ \Delta \mathcal{L}_{s_1} P^0_{s_1,t} f(\vecc{h}_{s_1})| \vecc{h}_0 = \vecc{h}_0 ]] = \mathbb{E}[ \Delta \mathcal{L}_{s_1} P^0_{s_1,t} f(\vecc{h}_{s_1})],   
\end{equation}
for any $s_1 < t$. Therefore, by part (a) and applying integration by parts: 
\begin{align}
    R_f^{(1)}(t,s_1) &=  \mathbb{E}[ \Delta \mathcal{L}_{s_1} P^0_{s_1,t} f(\vecc{h}_{s_1})] \\
    &= \int_{\RR^n}  \Delta \mathcal{L}_{s_1} P^0_{s_1,t} f(\vecc{h}) p_{s_1}(\vecc{h}) d\vecc{h} \\
    &= \left( \Delta \mathcal{L}_{s_1} P^0_{s_1,t} f,  p_{s_1}  \right)\\
    &= \left( P^0_{s_1,t} f, \Delta \mathcal{A}_{s_1}  p_{s_1}   \right) \\
    &=  \left( f, ((P^0_{s_1,t})^* \Delta \mathcal{A}_{s_1}  p_{s_1} )  \right)\\
    &= \left( f, [((P^0_{s_1,t})^* \Delta \mathcal{A}_{s_1}  p_{s_1} ) \rho_{init}^{-1} ]  \rho_{init}  \right) \\
    &= \int_{\RR^n}  f(\vecc{h}) v_{t,s_1}^{(1)}(\vecc{h}_{s_1}) p_0(\vecc{h}) d\vecc{h} 
\end{align}
where the second last line is well-defined since $\rho_{init}(\vecc{h}) = p_0(\vecc{h}) > 0$. 
Therefore, 
\begin{align}
    R_f^{(1)}(t,s_1) &= \mathbb{E} f(\vecc{h}_0)v_{t,s_1}^{(1)}(\vecc{h}_{s_1}). 
\end{align}

Now, assume that \eqref{33}-\eqref{con_obs} in the main paper holds for $n=k$. Note that
\begin{align}
    &\mathbb{E}[P^0_{0,s_n} \Delta \mathcal{L}_{s_n} P^0_{s_n,s_{n-1}} \Delta \mathcal{L}_{s_{n-1}} \cdots P^0_{s_1,t} f(\vecc{h}_0)] \\ 
    &= \mathbb{E}[ \mathbb{E}[ \Delta \mathcal{L}_{s_n} P^0_{s_n,s_{n-1}} \Delta \mathcal{L}_{s_{n-1}} \cdots P^0_{s_1,t} f(\vecc{h}_{s_n}) | \vecc{h}_0 = \vecc{h}_0 ]]\\
    &= \mathbb{E}[ \Delta \mathcal{L}_{s_n} P^0_{s_n,s_{n-1}} \Delta \mathcal{L}_{s_{n-1}} \cdots P^0_{s_1,t} f(\vecc{h}_{s_n})]
\end{align}
for any $n$.

Then, by part (a),
\begin{align}
    R_f^{(k+1)}(t,s_1,\dots,s_{k+1}) &= \mathbb{E}[ \Delta \mathcal{L}_{s_{k+1}} P^0_{s_{k+1},s_{k}} \Delta \mathcal{L}_{s_{k}} \cdots P^0_{s_1,t} f(\vecc{h}_{s_{k+1}})] \\
    &= \int_{\RR^n}  \Delta \mathcal{L}_{s_{k+1}} P^0_{s_{k+1},s_k} \Delta \mathcal{L}_{s_k} \cdots P^0_{s_1, t}  f(\vecc{h}) p_{s_{k+1}}(\vecc{h}) d\vecc{h} \\
    &= \left( \Delta \mathcal{L}_{s_{k+1}} P^0_{s_{k+1},s_k} \Delta \mathcal{L}_{s_k} \cdots P^0_{s_1, t}  f,  p_{s_{k+1}}  \right)\\
    &= \left(   f,  (P^0_{s_1, t})^* \Delta \mathcal{A}_{s_1}  (P^0_{s_2,s_1})^* \cdots \Delta \mathcal{A}_{s_k} (P^0_{s_{k+1},s_k})^* \Delta \mathcal{A}_{s_{k+1}} p_{s_{k+1}}  \right)\\
    &= \int_{\RR^n}  f(\vecc{h}) v_{t,s_1,\dots,s_{k+1}}^{(k+1)}(\vecc{h}_{s_1}, \dots, \vecc{h}_{s_{k+1}}) p_0(\vecc{h}) d\vecc{h}, \label{85} 
\end{align}
where $v_{t,s_1,\dots,s_{k+1}}^{(k+1)}(\vecc{h}_{s_1}, \dots, \vecc{h}_{s_{k+1}})$ is given by \eqref{con_obs} in the main paper.  Note that we have applied integration by parts multiple times to get the second last line above. Therefore, 
\begin{align}
    R_f^{(k+1)}(t,s_1,\dots,s_{k+1}) &= \mathbb{E} f(\vecc{h}_0)v_{t,s_1,\dots, s_{k+1}}^{(k+1)}(\vecc{h}_{s_1},\dots, \vecc{h}_{s_{k+1}}). 
\end{align}
The proof is done.

%for $0 \leq s \leq t$ and any measurable function $u$ on $\RR^n$ such that $\mathbb{E}|u(\overline{\vecc{h}}_s)| < \infty$, 
%\begin{equation}
%    \mathbb{E} f(\vecc{h}_t) u(\vecc{h}_s) = 
%\end{equation}

\end{proof}

\subsection{Proof of Corollary 3.1}
\begin{proof}(Proof of Corollary 3.1)
It follows from \eqref{85} that  for $n \geq 1$, $f \in C_c^\infty$,
\begin{align}
    &\int_{\RR^n}  f(\vecc{h}) (v_{t,s_1,\dots,s_{n}}^{(n)}(\vecc{h}_{s_1}, \dots, \vecc{h}_{s_{n}}) - \tilde{v}_{t,s_1,\dots,s_{n}}^{(n)}(\vecc{h}_{s_1}, \dots, \vecc{h}_{s_{n}})) \rho_{init}(\vecc{h}) d\vecc{h} = 0.
\end{align}
 Since $f$ is arbitrary and $\rho_{init} > 0$, the result follows.
\end{proof}

\subsection{Proof of Theorem 3.1}
\begin{proof}(Proof of Theorem 3.1)
Recall that $\gamma := (\gamma(s) := |\vecc{C} \vecc{u}_s|)_{s \in [0,t]}  \in C([0,t],\RR)$  for $t \in [0,T]$ by Lemma \ref{B2}. 

Associated with $\mathbb{E} f(\vecc{h}_t)$ is the mapping $F_t: C([0,t],\RR) \to \RR$, $\gamma \to \mathbb{E}f(\vecc{h}_t)$, where $\vecc{h}$ is the hidden state of the SRNN. Since by our assumptions the $D^n F_t[0] \cdot (\gamma, \dots, \gamma)$ are well-defined  for $n \in \ZZ_+$, the mapping $F_t$ admits an absolutely convergent Taylor series at the point $0$ for sufficiently small $\gamma$:
\begin{equation}
F_t[\gamma] =   \sum_{n=1}^\infty \frac{1}{n!} D^n F_t[0] \cdot (\gamma)^n,
\end{equation}
where $(\gamma)^n := (\gamma, \dots, \gamma)$ ($n$ times).  Moreover, the derivatives are bounded and Lipschitz continuous, therefore integrable on compact sets. They can be identified with the response kernels $R_f^{(n)}(t,\cdot)$  given in Proposition 3.1 in the main paper. The resulting series is a Volterra series in the sense of Definition \ref{def_vol}. The uniqueness follows from the symmetry of the  derivative mappings.
\end{proof}

\subsection{Proof of Theorem 3.2} \label{sect_b4}
We start with the proof and then give a few remarks.\\

\begin{proof}(Proof of Theorem 3.2)
By assumption,  an eigenfunction expansion of the operator $\mathcal{A}^0$ exists and is well-defined. We consider  the eigenvalue-eigenfunction pairs $(-\lambda_m, \phi_m)_{m \in \ZZ_+}$, i.e., $\mathcal{A}^0 \phi_m = \lambda_m \phi_m$ (for $m \in \ZZ_+$),  where the $\lambda_m \in \CC$ and the $\phi_m \in L^2(\rho_{init})$ are orthonormal eigenfunctions that span  $L^2(\rho_{init})$ (see also Remark \ref{rmk_symmetric}). 

In this case, we have  $e^{\mathcal{A}^0 t} \phi_m = e^{-\lambda_m t} \phi_m$, which implies that for any $f \in L^2(\rho_{init})$ we have $e^{\mathcal{A}^0 t} f(\vecc{h}) = \sum_n \alpha_n(\vecc{h}) e^{-\lambda_n t}$, where $\alpha_n(\vecc{h}) = \langle \phi_n, f \rangle \phi_n(\vecc{h})$. 

We first derive formula \eqref{ultimate_main}. 
Applying the above representation in \eqref{24}-\eqref{26} in the main paper, we arrive at the following formula for the response kernels (recall that we are using Einstein's summation notation for repeated indices):

%with the $\lambda_m$ and $\lambda_{m_n}$  equal zero (and $\lambda_l$, $\lambda_{l_n}$ as before) in the case of stationary invariant distribution. 
\begin{align} \label{separate}
&\mathcal{K}^{\vecc{k}^{(n)}}(s_0,s_1,\dots, s_{n}) =  e^{-\lambda_{m_n} s_n} e^{-\lambda_{l_1} (s_0-s_{1})} \cdots e^{-\lambda_{l_n} (s_{n-1}-s_{n})}  Z^{\vecc{k}^{(n)}}_{l_1,\dots, l_n,m_n},
\end{align}
for $n \in \ZZ_+$, where  
\begin{equation}
Z^{\vecc{k}^{(n)}}_{l_1,\dots, l_n,m_n}= (-1)^n \int d\vecc{h} f(\vecc{h}) \alpha_{l_1}(\vecc{h}) \frac{\partial}{\partial h^{k_1}}\left[ \cdots \alpha_{l_n}(\vecc{h}) \frac{\partial}{\partial h^{k_n}}\left[\alpha_{m_n}(\vecc{h}) \rho_{init}(\vecc{h}) \right]  \right],
\end{equation}
which can be written as average of a functional with respect to $\rho_{init}$. In the above, $\alpha_{m} = \langle \phi_{m}, \rho_{init}\rangle \phi_{m}$,
and 
\begin{align}
\alpha_{l_1} &= \left\langle \phi_{l_1}, \frac{\partial}{\partial h^{k_1}} (\alpha_{m_n}(\vecc{h}) \right\rangle, \\
\alpha_{l_n} &= \left\langle \phi_{l_1}, \frac{\partial}{\partial h^{k_1}} ( \phi_{l_{1}}(\vecc{h}) \cdots \frac{\partial}{\partial h^{k_{n-1}}} (\phi_{l_{n-1}}(\vecc{h}) \frac{\partial}{\partial h^{k_n}}(  \alpha_{m_n}(\vecc{h})))) \right\rangle,
\end{align}
for $n=2,3,\dots$.

Plugging in the above expressions into \eqref{volt} in the main paper, we  cast $\mathbb{E} f(\vecc{h}_t)$ into  a series of  generalized convolution integrals with exponential weights:
\begin{align}
\mathbb{E} f(\vecc{h}_t) &=  \sum_{n=1}^{\infty}   \epsilon^n  Z^{ \vecc{k}^{(n)}}_{l_1,\dots, l_n,m_n} e^{-\lambda_{l_1} t} \\ 
&\ \times \int_0^t ds_1  \tilde{U}_{s_1}^{k_1} \cdots \int_0^{s_{n-1}} ds_n \tilde{U}_{s_n}^{k_n} e^{-\lambda_{m_n} s_n} e^{-\lambda_{l_1}(s_0-s_1)} \cdots e^{-\lambda_{l_n(s_{n-1}-s_n)}}\\
&= \sum_{n=1}^{\infty}   \epsilon^n  Z^{ \vecc{k}^{(n)}}_{l_1,\dots, l_n,m_n} e^{-\lambda_{l_1} t} \int_0^t ds_1 \tilde{U}_{s_1}^{k_1} e^{-(\lambda_{l_2}-\lambda_{l_1})s_1} \cdots \int_0^{s_{n-1}}ds_n \tilde{U}_{s_n}^{k_n} e^{-(\lambda_{m_n}-\lambda_{l_n})s_n} \label{151}
\end{align}
(with the $\lambda_{m_n}$   equal zero  in the case of stationary invariant distribution).  Note that the expression above is obtained after performing  interchanges between integrals and summations, which are justified by Fubini's theorem.

%In the case when $\mathcal{F}$ is a readout map, the iterated integral terms constitute a nonlinear feature that effectively determine the response of the RNN to the input signal $\vecc{U}$. The transformed signals above depends on the RNN parameters. 

To isolate the unperturbed part of SRNN from $\tilde{U}$ completely, we expand the exponentials in Eq. \eqref{151} in power series to obtain:
\begin{align}
&\mathbb{E} f(\vecc{h}_t) \nonumber \\ 
&= \sum_{n=1}^{\infty}  \epsilon^n  Z^{\vecc{k}^{(n)}}_{l_1,\dots, l_n, m_n} (-1)^{p_0+\dots+p_{n}} \frac{(\lambda_{l_1})^{p_0}}{p_0!} \frac{(\lambda_{l_2}-\lambda_{l_1})^{p_1}}{p_1!}   \cdots \frac{(\lambda_{m_{n}}-\lambda_{l_{n}})^{p_n}}{p_n!} \nonumber \\
&\ \ \ \ \ \times   \left( t^{p_0} \int_0^{t} ds_1    (s_1)^{p_1}  \tilde{U}_{s_1}^{k_1}   \cdots \int_0^{s_{n-1}} ds_n (s_n)^{p_n} \tilde{U}_{s_n}^{k_n}\right) \\
&=:  \sum_{n=1}^{\infty}  \epsilon^n Q_{\vecc{p}^{(n)}}^{\vecc{k}^{(n)}} \left( t^{p_0}  \int_0^{t} ds_1    s_1^{p_1}  \tilde{U}_{s_1}^{k_1}    \cdots \int_0^{s_{n-1}} ds_{n}  s_{n}^{p_{n}}  \tilde{U}_{s_{n}}^{k_{n}} \right). \label{ultimate}
\end{align}
This is the formula \eqref{ultimate_main} in the main paper. The series representation in \eqref{memless} in the main paper and its convergence then follows from the above result and Assumption \ref{imp_ass}. In particular, the constant coefficients $a_{p_0, \dots, p_n, l_1, \dots, l_n}$  in \eqref{memless} are given by:
\begin{equation}\label{exp_exp}
    a_{p_0, \dots, p_n, l_1, \dots, l_n} = Q_{\vecc{p}^{(n)}}^{\vecc{k}^{(n)}} C^{k_1 l_1} \cdots C^{k_n l_n},
\end{equation}
where the $Q_{\vecc{p}^{(n)}}^{\vecc{k}^{(n)}}$ are defined in \eqref{ultimate} (recall that we are using Einstein's summation notation for repeated indices).
\end{proof}

\begin{rmk} \label{rmk_symmetric} 
If the operator $\mathcal{A}^0$ is symmetric, then such eigenfunction expansion exists and is unique, the $\lambda_m \in \RR$, and, moreover, the real parts of  $\lambda_m$ are positive for exponentially stable SRNNs \citep{pavliotis2014stochastic}. Working with the eigenfunction basis  allows us to ``linearize'' the SRNN dynamics, thereby identifying the dominant directions and time scales of the unperturbed part of the SRNNs.  If, in addition, the unperturbed SRNN is stationary  and ergodic, one can, using Birkhoff's ergodic theorem, estimate the spatial average in the response functions with time averages.
\end{rmk}

\begin{rmk} \label{rmk_visual}
Eq. \eqref{separate} disentangles the time-dependent component from the static component described by the $Z^{\vecc{k}^{(n)}}_{l_1,\dots, l_n,m_n}$.  The time-dependent component is solely determined by the eigenvalues $\lambda_i$'s, which give us the set of memory time scales on which the response kernels evolve.  The static component is dependent on the eigenfunctions $\phi_n$'s, as well as on the initial distribution  $\rho_{init}$ of the hidden state, and the activation function $f$.  The eigenvalues and eigenfunctions are determined by the choice of activation function (and their parameters) and the noise in the hidden states of SRNN. In practice, the multiple infinite series above can be approximated by a finite one by keeping only the dominant contributions, thereby giving us a feasible way to visualize and control the internal dynamics of SRNN by manipulating the response functions.
\end{rmk}

%say sometjing about algebraic approach using formal power series

\subsection{Proof of Proposition 3.3} \label{appendix_bell}

The computation involved in deriving Eq. (36) in the main paper essentially comes from the following  combinatorial result.

\begin{lem} \label{bell}
Consider the formal power series 
$a(x) = \sum_{i=1}^\infty a_i x^i$ and $b(x) = \sum_{i=1}^\infty b_i x^i$ in one symbol (indeterminate) with coefficients in a field. Their composition $a(b(x))$ is again a formal power series, given by:
\begin{equation}
    a(b(x)) = \sum_{n=1}^\infty c_n,
\end{equation}
with 
\begin{equation}
    c_n =  \sum_{\mathcal{C}_n} a_k b_{i_1} \cdots b_{i_k},
\end{equation}
where $\mathcal{C}_n = \{(i_1, \cdots, i_k) : 1 \leq k \leq n, \ i_1 + \cdots + i_k = n \}$. 
If $a_i := \alpha_i/i!$ and $b_i := \beta_i/i!$ in the above, then the $c_n$ can be expressed in terms of the exponential Bell polynomials $B_{n,k}$ \citep{bell1927partition}:
\begin{equation}
    c_n =  \frac{1}{n!} \sum_{k=1}^n \alpha_k B_{n,k}(\beta_1, \dots, \beta_{n-k+1}),
\end{equation}
where 
\begin{align}
    &B_{n,k}(\beta_1, \dots, \beta_{n-k+1}) \nonumber \\
    &= \sum_{c_1,c_2,\dots, c_{n-k+1} \in \ZZ^+} \frac{n!}{c_1! c_2! \cdots c_{n-k+1}!} \left(\frac{\beta_1}{1!}\right)^{c_1} \left(\frac{\beta_2}{2!}\right)^{c_2} \cdots \left(\frac{\beta_{n-k+1}}{(n-k+1)!}\right)^{c_{n-k+1}}   
\end{align}
are homogeneous polynomials of degree $k$
such that $c_1+2c_2+3c_3 +  \dots + (n-k+1) c_{n-k+1} = n$ and $c_1 + c_2 + c_3 + \cdots + c_{n-k+1} = k$. 
\end{lem}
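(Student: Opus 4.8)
The plan is to prove the two assertions in turn by direct manipulation of formal power series, the only nontrivial input being a standard counting identity for the partial Bell polynomials. First I would establish the composition formula. Since $b$ has zero constant term, the substitution $a(b(x)) = \sum_{k=1}^\infty a_k\, b(x)^k$ is a well-defined formal power series, because each fixed power of $x$ receives contributions from only finitely many $k$. Expanding the $k$-th power,
\begin{equation}
b(x)^k = \Big(\sum_{i\geq 1} b_i x^i\Big)^k = \sum_{i_1,\dots,i_k \geq 1} b_{i_1}\cdots b_{i_k}\, x^{i_1+\cdots+i_k},
\end{equation}
and collecting the coefficient of $x^n$ gives $c_n = \sum_{k=1}^n a_k \sum_{i_1+\cdots+i_k=n} b_{i_1}\cdots b_{i_k}$, each $i_j \geq 1$. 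The upper limit $k \leq n$ appears because $k$ summands each at least $1$ cannot total fewer than $k$, so the index set is exactly $\mathcal{C}_n$; this proves the first formula.

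For the Bell-polynomial form, I would substitute $a_k = \alpha_k/k!$ and $b_i = \beta_i/i!$ into the expression just obtained, yielding
\begin{equation}
c_n = \sum_{k=1}^n \frac{\alpha_k}{k!} \sum_{\substack{i_1+\cdots+i_k=n \\ i_j \geq 1}} \frac{\beta_{i_1}}{i_1!}\cdots \frac{\beta_{i_k}}{i_k!}.
\end{equation}
The crux is then to identify the inner sum with $B_{n,k}$. I would group the ordered tuples $(i_1,\dots,i_k)$ by their multiplicity vector: for $m \geq 1$ set $c_m := \#\{j : i_j = m\}$, so that $\sum_m c_m = k$ and $\sum_m m\, c_m = n$, which forces $m \leq n-k+1$. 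The number of ordered tuples realizing a prescribed multiplicity vector is the multinomial coefficient $k!/(c_1!\cdots c_{n-k+1}!)$, and along each such tuple the summand equals $\prod_{m}(\beta_m/m!)^{c_m}$. Hence the inner sum equals $(k!/n!)\,B_{n,k}(\beta_1,\dots,\beta_{n-k+1})$ by the definition of the partial Bell polynomial quoted in the statement, and substituting and cancelling $k!$ gives $c_n = \tfrac{1}{n!}\sum_{k=1}^n \alpha_k\, B_{n,k}(\beta_1,\dots,\beta_{n-k+1})$, as claimed.

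The main obstacle — indeed the only place requiring care — is the passage from the sum over ordered compositions $(i_1,\dots,i_k)$ to the multinomial sum defining $B_{n,k}$; everything else is bookkeeping on formal series. I expect the verification to amount to writing out $\prod_j (\beta_{i_j}/i_j!) = \prod_m (\beta_m/m!)^{c_m}$ and matching the factorial weights against $n!/(c_1!\cdots c_{n-k+1}!\,(1!)^{c_1}\cdots((n-k+1)!)^{c_{n-k+1}})$ in the stated definition. No convergence questions arise, since all identities hold degree-by-degree between formal power series whose coefficients are finitely supported in each degree.
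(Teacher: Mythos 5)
Your proof is correct, and it differs from the paper in an important structural way: the paper does not actually prove Lemma \ref{bell} at all, but simply cites it as Theorem A in Section 3.4 of Comtet's \emph{Advanced Combinatorics} (with Stanley's \emph{Enumerative Combinatorics} as an alternative reference). What you have written is the self-contained argument that those references contain: expand $a(b(x)) = \sum_k a_k b(x)^k$ degree by degree (legitimate formally because $b$ has no constant term, so each coefficient of $x^n$ receives only finitely many contributions, and the bound $k \le n$ follows since $k$ parts each $\ge 1$ sum to at least $k$), then pass from the sum over ordered compositions to the multiplicity-vector form. Your counting is right: a multiplicity vector $(c_1,\dots,c_{n-k+1})$ with $\sum_m c_m = k$ and $\sum_m m\,c_m = n$ is realized by exactly $k!/(c_1!\cdots c_{n-k+1}!)$ ordered tuples, each contributing $\prod_m(\beta_m/m!)^{c_m}$, so the inner sum equals $(k!/n!)\,B_{n,k}$ and the factor $k!$ cancels against $a_k = \alpha_k/k!$, yielding $c_n = \frac{1}{n!}\sum_{k=1}^n \alpha_k B_{n,k}$. (One cosmetic point: in the grouping step the multiplicities $c_m$ must be allowed to vanish for part sizes that do not occur, so the index set in the Bell-polynomial sum is really $c_m \ge 0$ subject to the two constraints; this is the standard convention and is a slight imprecision in the paper's statement rather than in your argument.) What your route buys is a proof readable without consulting the combinatorics literature; what the paper's citation buys is brevity and an implicit pointer to the surrounding theory (Fa\`a di Bruno's formula and its refinements), which the paper then draws on when discussing the Bell-polynomial structure of deep SRNN kernels.
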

\begin{proof}
See, for instance, Theorem A in Section 3.4 in \citep{comtet2012advanced}  (see also Theorem 5.1.4 in \citep{stanley1999enumerative}). 
\end{proof}

For a statement concerning the radius of convergence of composition of formal series, see Proposition 5.1 in Section 1.2.5 in \citep{cartan1995elementary}. See also Theorem C in Section 3.4 in \citep{comtet2012advanced} for relation between the above result with the $n$th order derivative of $a(b(x))$ when $a$, $b$  are smooth functions of a real variable. \\
 
\begin{proof}(Proof of Proposition 3.3)
Since Volterra series are  power series (in the input variable) of operators from $L^\infty$ to $L^\infty$ (see the remarks after Definition \ref{def_vol}), the idea is to apply Lemma \ref{bell}, from which (36) in the main paper follows.

Denote $\|\cdot \| := \| \cdot \|_\infty$ in the following.  Note that for any finite positive $r$:
\begin{equation}
    \|R_{fg}^{(r)}\| \leq \sum_{k=1}^r \sum_{\mathcal{C}_r} \|R_f^{(k)}\| \|R_g^{(i_1)}\| \cdots \|R_g^{(i_k)}\| < \infty \label{105}
\end{equation}
since the response functions in the finite series above are bounded by our assumptions. This justify any interchange of integrals and summations in the truncated Volterra series that involves during computation and therefore we can proceed and apply Lemma \ref{bell}.

For the case of (infinite) Volterra series, to ensure  absolute convergence of the series we suppose\footnote{If the resulting formal series is not convergent, then one needs to treat it as an asymptotic expansion.} that $a_F(a_G(\|u\|))<\infty$, where $a_F(x) = \sum_{n=1}^\infty \|R_f^{(n)}\|x^n$ and $a_G(x) = \sum_{n=1}^\infty \|R_g^{(n)}\|x^n$. Then, $G_t[u]$ is well-defined as a Volterra series and $\|G_t[u]\| \leq a_G(\|u\|)$. Also, $F_t[G_t[u]]$ is well-defined as a Volterra series by our assumption. In particular, it follows from \eqref{105}  that $a_{F \circ G}(\|u\|) \leq a_F(a_G(\|u\|))$, where $a_{F \circ  G}(x) = \sum_{n=1}^\infty \|R_{fg}^{(n)}\| x^n$. Thus, if $\rho_F$, $\rho_G$, $\rho_{F \circ G}$ denote the radius of convergence of $F_t$, $G_t$ and $(F \circ G)_t$ respectively, then $\rho_{F \cdot G} \geq \min(\rho_{G}, a_G^{-1}(\rho_F))$.   The last statement in the proposition follows from the above results.

%Now, 
%\begin{align}
%    &F^{(N)}_t[G_t^{(M)}[u]] \\ &= \sum_{k=1}^N \int \cdots \int R_f^{(k)}(s_1, \dots, s_k) G^{(M)}_{t-s_1}[u] \cdots G^{(M)}_{t-s_k}[u] ds_1 \cdots ds_k\\
%    &= \sum_{k=1}^N \int \cdots \int R_f^{(k)}(s_1, \dots, s_k) \prod_{i=1}^k \big[ \sum_{m=1}^M \int \cdots \int Q_f^{(m)}(t_1, \dots, t_m) u(t-s_i-t_1) \dots u(t-s_i-t_m) dt_1 \cdots dt_m \big] ds_i \\
%    &= \sum_{k=1}^N \int \cdots \int R_f^{(k)}(s_1,\dots,s_k) \sum_{i_1, \dots, i_k \geq 1} Q_f^{(i_1)}(t_1, \dots, t_{i_1}-s_1) \cdots Q_f^{(i_k)}(t_{i_1 + \dots + i_{k-1}}, \dots, t_{i_1 + \cdots + i_k}) \\
%    &\ \ \times u(t-s_1-t_1) \cdots u(t-s_1 - t_{i_1}) \cdots u(t-s_k - t_{i_1+\cdots + i_{k-1}+1}) \cdots u(t-s_k - t_{i_1 + \cdots + i_k}) dt_1 \cdots dt_{i_1 + \cdots + i_k} ds_1 \cdots ds_k \\
%    &= \sum_{n=1}^N \int \dots \int \big( \sum_{k=1}^n \sum_{\mathcal{C}_n} R_f^{(k)}(s_1, \dots, s_k) Q_f^{(i_1)}(t_1, \dots, t_{i_1}) \cdots Q_f^{(i_k)}(t_{n-i_k+1}, \dots, t_n) u(t-s_1-t_1) \cdots u(t-s_1 - t_{i_1}) \cdots u(t-s_k - t_{n-i_k+1}) \cdots u(t-s_k-t_n) \big) dt_1 \cdots dt_n ds_1 \cdots ds_k \\
%    &= \sum_{n=1}^N \int \cdots \int  \big( \sum_{k=1}^n \sum_{\mathcal{C}_n} \int \cdots \int R_f^{(k)}(s_1,\dots, s_k) Q_f^{(i_1)}(t_1-s_1, \dots, t_{i_1}-s_1) \cdots Q_f^{(i_k)}(t_{n-i_k+1}-s_k, \dots, t_n - s_k) ds_1 \dots ds_n \big) ds_1 \cdots ds_n u(t-s_1) \cdots u(t-s_n) dt_1 \cdots dt_n,
%\end{align}
%from which (36) in the main paper follows. 

\end{proof}

\subsection{Proof of Theorem 3.4}
\begin{proof}(Proof of Theorem 3.4 -- primal formulation) 
Let $p \in \ZZ_+$. First, note that by Lemma \ref{bv} any increasing function on $[0,T]$ has finite one-variation on $[0,T]$. Since  components of $\vecc{\psi}^{(p)}$, being real-valued monomials on $[0,T]$, are increasing functions on $[0,T]$ and therefore they are  of bounded variation on $[0,T]$. Since product of functions of bounded variation on $[0,T]$ is also of bounded variation  on $[0,T]$, all entries of the matrix-valued paths $\vecc{X}^{(p)} = \vecc{u} \otimes \vecc{\psi}^{(p)}$ are of bounded variation on $[0,T]$, and thus the signature of the $\vecc{X}^{(p)}$ is well-defined.  The result then essentially follows from Theorem 3.2 and Definition 3.2 in the main paper.
\end{proof}

\subsection{Proof of Proposition 3.2} \label{B9}

First, we recall the definition of symmetric Fock space, which has played an important role in stochastic processes \citep{guichardet2006symmetric}, quantum probability \citep{parthasarathy2012introduction}, quantum field theory \citep{goldfarb2013many}, and systems identification \citep{zyla1983nonlinear}. 

In the following, all Hilbert spaces are real.

\begin{defn}(Symmetric Fock space) Let $\mathcal{H}$ be a Hilbert space and $n \geq 2$ be any integer. For $h_1, \dots, h_n \in \mathcal{H}$, we define the symmetric tensor product:
\begin{equation}
h_1 \circ \cdots \circ h_n = \frac{1}{n!} \sum_{\sigma \in S_n} h_{\sigma(1)} \otimes \cdots \otimes h_{\sigma(n)},
\end{equation}
where $S_n$ denotes the group of all permutations of the set $\{1,2,\dots,n\}$. By $n$-fold symmetric tensor product of $\mathcal{H}$, denoted $\mathcal{H}^{\circ n}$, we mean the closed subspace of $\mathcal{H}^{\otimes n}$ generated by the $h_1, \dots, h_n$.  It is equipped with the inner product:
\begin{equation}
\langle u_1 \circ \cdots \circ u_n, v_1 \circ \cdots \circ v_n \rangle_{\mathcal{H}^{\circ n}} = Per(\langle u_i, v_j \rangle)_{ij},
\end{equation}
where Per denotes the permanent (i.e., the determinant without the minus sign) of the matrix. Note that $\|u_1 \circ \cdots \circ u_n\|_{\mathcal{H}^{\circ n}}^2 = n! \|u_1 \circ \cdots \circ u_n \|^2_{\mathcal{H}^{\otimes n}}$.

The symmetric Fock space over $\mathcal{H}$ is defined as $T_s((\mathcal{H})) = \bigoplus_{n=0}^\infty \mathcal{H}^{\circ n}$, with $\mathcal{H}^{\circ 0} := \RR$, $\mathcal{H}^{\circ 1} := \mathcal{H}$. It is equipped with the inner product 
\begin{equation}
\langle H, K \rangle_{T_s((\mathcal{H}))} = \sum_{n=0}^\infty n! \langle h_n, k_n \rangle_{\mathcal{H}^{\otimes n}},
\end{equation}
for elements $H := (h_m)_{m \in \NN}$, $K := (k_m)_{m \in \NN}$ in $T_s((\mathcal{H}))$, i.e., $h_m, k_m \in \mathcal{H}^{\circ m}$ for $m \in \NN$, and $\|H\|_{T_s((\mathcal{H}))}^2, \|K\|_{T_s((\mathcal{H}))}^2  < \infty$.
\end{defn}
It can be shown that the symmetric Fock space can be obtained from a free Fock space by applying appropriate projection. One advantage of working with the symmetric Fock space instead of the free one is that the symmetric space enjoys a functorial property that the free space does not have (see \citep{parthasarathy2012introduction}). Moreover, it satisfies the following  property that we will need later.

\begin{lem}(Exponential property) \label{exp_prop}
Let $h \in \mathcal{H}$ and define the element (c.f. the so-called coherent state in \citep{parthasarathy2012introduction}) $e(h) := \bigoplus_{n=0}^\infty \frac{1}{n!} h^{\otimes n}$ in $T_s((\mathcal{H})) \subset T_0((\mathcal{H}))$ (with $h^{\otimes 0} := 1$ and $0! := 1$). Then we have:
\begin{equation}
    \langle e(h_1), e(h_2) \rangle_{T_s((\mathcal{H}))} = \exp(\langle h_1, h_2 \rangle_{\mathcal{H}}).
\end{equation}
\end{lem}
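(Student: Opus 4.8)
The plan is to prove the identity by a direct computation straight from the definition of the inner product on the symmetric Fock space, using nothing beyond the multiplicativity of the tensor inner product. First I would check that $e(h)$ is a bona fide element of $T_s((\mathcal{H}))$: since each $h^{\otimes n}$ is invariant under permutation of its factors it lies in $\mathcal{H}^{\circ n}$, so the $n$-th slot $\tfrac{1}{n!}h^{\otimes n}$ is an admissible component, and the squared norm
\[
\|e(h)\|^2_{T_s((\mathcal{H}))} = \sum_{n=0}^\infty n!\,\Big\| \tfrac{1}{n!} h^{\otimes n}\Big\|^2_{\mathcal{H}^{\otimes n}} = \sum_{n=0}^\infty \frac{1}{n!}\|h\|_{\mathcal{H}}^{2n} = \exp(\|h\|^2_{\mathcal{H}})
\]
is finite, so $e(h)$ is well-defined as claimed.

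The core step is the evaluation of the pairing. Writing out the Fock-space inner product with the components $h_n := \tfrac{1}{n!}h_1^{\otimes n}$ and $k_n := \tfrac{1}{n!}h_2^{\otimes n}$ gives
\[
\langle e(h_1), e(h_2)\rangle_{T_s((\mathcal{H}))} = \sum_{n=0}^\infty n!\,\Big\langle \tfrac{1}{n!}h_1^{\otimes n}, \tfrac{1}{n!}h_2^{\otimes n}\Big\rangle_{\mathcal{H}^{\otimes n}} = \sum_{n=0}^\infty \frac{1}{n!}\,\big\langle h_1^{\otimes n}, h_2^{\otimes n}\big\rangle_{\mathcal{H}^{\otimes n}}.
\]
The key fact I would invoke is the multiplicativity of the inner product on $\mathcal{H}^{\otimes n}$, namely $\langle u_1 \otimes \cdots \otimes u_n, v_1 \otimes \cdots \otimes v_n\rangle_{\mathcal{H}^{\otimes n}} = \prod_{i=1}^n \langle u_i, v_i\rangle_{\mathcal{H}}$, which is exactly how that inner product is constructed; taking all $u_i = h_1$ and $v_i = h_2$ yields $\langle h_1^{\otimes n}, h_2^{\otimes n}\rangle_{\mathcal{H}^{\otimes n}} = \langle h_1, h_2\rangle_{\mathcal{H}}^n$. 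Substituting this back gives $\sum_{n\geq 0}\tfrac{1}{n!}\langle h_1, h_2\rangle_{\mathcal{H}}^n$, which is the exponential series for $\exp(\langle h_1, h_2\rangle_{\mathcal{H}})$, completing the proof.

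There is no serious obstacle here: the statement is essentially a bookkeeping identity once the definitions are unwound, and the factors of $n!$ in the Fock inner product cancel precisely against those in the coherent state. The only points needing a line of justification are the absolute convergence that legitimizes the termwise manipulations — guaranteed by Cauchy--Schwarz, $|\langle h_1, h_2\rangle_{\mathcal{H}}| \leq \|h_1\|\,\|h_2\|$, so the series converges absolutely and the interchange of summation is valid — and the multiplicativity property of the tensor inner product, which I would simply cite from the construction of $\mathcal{H}^{\otimes n}$ recalled earlier in this subsection.
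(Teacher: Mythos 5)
Your proof is correct and is precisely the computation the paper has in mind: the paper's own proof of Lemma B.5 consists of the single line ``This is a straightforward computation,'' and your argument (the $n!$ weights in the Fock inner product cancelling against the $1/n!$ factors in the coherent states, plus multiplicativity of the tensor inner product giving $\langle h_1^{\otimes n}, h_2^{\otimes n}\rangle = \langle h_1,h_2\rangle^n$) is exactly that computation, with the added care of verifying $\|e(h)\|^2 = \exp(\|h\|^2) < \infty$ and absolute convergence.
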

\begin{proof}
This is a straightforward computation.
\end{proof}

Next we recall the definition of a kernel. Denote by $\mathbb{F}$ a field (e.g., $\RR$ and $\CC$). 

\begin{defn} (Kernel) Let $\chi$ be a nonempty set. Then a function $K: \chi \times \chi \to \mathbb{F}$ is called a kernel on $\chi$ if there exists a $\mathbb{F}$-Hilbert space $\mathcal{H}$ and a map $\phi: \chi \to \mathcal{H}$ such that for all $x, x' \in \chi$,  we have $K(x,x') = \langle \phi(x'), \phi(x) \rangle_{\mathcal{H}}$. 
We call $\phi$ a feature map and $\mathcal{H}$ a feature space of $K$.
\end{defn}
By definition, kernels are positive definite. \\

\begin{proof}(Proof of Proposition 3.2) 
Let $L \in \ZZ_+$. Viewing $\mathcal{H} := \mathcal{P} \otimes \RR^m$ as a set, it follows from  Theorem 4.16 in \citep{steinwart2008support} that  $K(\vecc{v},\vecc{w}) = \langle S(\vecc{v}), S(\vecc{w})  \rangle_{T^{\otimes L}_s((\mathcal{H}))}$ is a kernel over $\mathcal{H}$ with the feature space $T^{\otimes L}_s((\mathcal{H}))$ since it is an inner product on the $L$-fold symmetric Fock space $T^{\otimes L}_s((\mathcal{H}))$, which is a $\RR$-Hilbert space. The associated feature map is $\phi(\vecc{v}) = S(\vecc{v})$ for $\vecc{v} \in \mathcal{H}$. The last statement in the proposition then follows from Theorem 4.21 in \citep{steinwart2008support}. 
\end{proof}

\subsection{Proof of Theorem 3.5}

\begin{proof}(Proof of Theorem 3.5 -- dual formulation)\\ 
(a) First, note that by Lemma \ref{bv} any increasing function on $[0,T]$ has finite one-variation on $[0,T]$. Since  components of $\vecc{v}$, being real-valued polynomials on $[0,T]$, are linear combinations of monomials and linear combinations of functions of bounded variation on $[0,T]$ is also of bounded variation on $[0,T]$, the components of $\vecc{v}$ are also of bounded variation on $[0,T]$. Since product of functions of bounded variation on $[0,T]$ is also of bounded variation  on $[0,T]$, the $\vecc{X}_n = \vecc{v} \otimes \vecc{u}_n$ are of bounded variation on $[0,T]$, and thus the signature of the $\vecc{X}_n$ is well-defined.

By Proposition 3.2 in the main text, $\langle S(\vecc{X}_n), S(\vecc{X})  \rangle_{T_s((\mathcal{H}))} $ is a kernel on $\mathcal{H} := \mathcal{P} \otimes \RR^m$ with the RKHS $\mathcal{R}_1$. Therefore, the result in (a) follows from a straightforward application of Theorem 1 in \citep{scholkopf2001generalized}.  \\

\noindent (b) For $L \in \ZZ_+$ we compute:
\begin{align}
   \langle S(\tilde{\vecc{X}}_n), S(\tilde{\vecc{X}})  \rangle_{T^{\otimes L}_s((\mathcal{H}))} &=
\left\langle \bigotimes_{l=1}^L \exp(\Delta \vecc{X}_n^{(l)}), \bigotimes_{l=1}^L \exp(\Delta \vecc{X}^{(l)})  \right\rangle_{T_s^{\otimes L}((\mathcal{H}))}    \\
   &= \prod_{l=1}^L \left\langle e(\Delta \vecc{X}_n^{(l)}), e(\Delta \vecc{X}^{(l)}) \right\rangle_{T_s((\mathcal{H}))} \\
   &= \prod_{l=1}^L \exp(\langle \Delta \vecc{X}_n^{(l)}, \Delta \vecc{X}^{(l)} \rangle_{\mathcal{H}}),
\end{align}
where we have used Lemma \ref{piece} in the first line and Lemma \ref{exp_prop} in the last line above. The proof is done. 
\end{proof}

\section{An Approximation Result for SRNNs}
\label{approx}

In this section we justify why Assumption \ref{imp_ass}, in particular the analyticity of the coefficients defining the SRNNs, is in some sense not too restrictive. We need the following extension of Carleman's theorem  \citep{gaier1987lectures} to multivariate functions taking values in multi-dimensional space. Note that Carleman's theorem itself can be viewed as an extension of Stone-Weierstrass theorem  to non-compact intervals.

% see Carleman: https://math.stackexchange.com/questions/576512/weierstra%C3%9F-approximation-on-the-real-line

\begin{thm} \label{thm_c1} If $\epsilon(\vecc{x}) > 0$ and $\vecc{f}: \RR^n \to \RR^m$ are arbitrary continuous functions, then there is an entire function $\vecc{g}: \CC^n \to \CC^m$  such that for all real $\vecc{x} \in \RR^n$ (or, real part of $\vecc{z} \in \CC^n$), 
$$|\vecc{f}(\vecc{x}) - \vecc{g}(\vecc{x})| < \epsilon(\vecc{x}).$$
\end{thm}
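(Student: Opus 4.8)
The plan is to reduce the statement to the classical one–dimensional Carleman theorem (the base case recorded in \citep{gaier1987lectures}) by two successive reductions, and then to confront the one genuinely complex–analytic difficulty, namely controlling the approximants away from the totally real subspace $\RR^n \subset \CC^n$. First I would dispose of the vector–valued target. Since $|\vecc{f}(\vecc{x}) - \vecc{g}(\vecc{x})| \le \sqrt{m}\,\max_{1\le j \le m}|f_j(\vecc{x}) - g_j(\vecc{x})|$, it suffices to approximate each scalar component $f_j : \RR^n \to \RR$ by an entire $g_j : \CC^n \to \CC$ satisfying $|f_j(\vecc{x}) - g_j(\vecc{x})| < \epsilon(\vecc{x})/\sqrt{m}$ for all real $\vecc{x}$; assembling $\vecc{g} = (g_1,\dots,g_m)$ then yields the claim, and $\epsilon/\sqrt{m}$ is again a strictly positive continuous function, so nothing is lost. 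This reduces the theorem to the scalar case $m=1$.

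For the scalar multivariate case I would run a Carleman-type exhaustion argument. Exhaust $\RR^n$ by the nested compact cubes $Q_k := [-k,k]^n$, set $\epsilon_k := \min_{Q_k}\epsilon > 0$ (positive by continuity and compactness), and construct the entire function as a series $g = \sum_{k\ge 1} q_k$ of polynomials in $n$ complex variables. Inductively, having produced a polynomial partial sum $G_{k-1}$ with $|f - G_{k-1}| < \tfrac12 \epsilon$ on $Q_{k-1}$, I would choose $q_k$ to approximate the continuous increment $f - G_{k-1}$ on the larger cube $Q_k$ so well that $|f - G_k| < \tfrac12\epsilon$ on $Q_k$, while the accumulated quantity $\sum_k \sup_{Q_{k-1}}|q_k|$ stays summable, the latter being automatic since $f - G_{k-1}$ is already small on $Q_{k-1}$. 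On the real cubes the existence of such polynomial approximants is Stone--Weierstrass; the delicate point is simultaneously controlling the $q_k$ on complex polydisks.

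The main obstacle, and the reason this is Carleman's theorem and not merely Weierstrass approximation, is guaranteeing that $\sum_k q_k$ converges uniformly on compact subsets of $\CC^n$, hence to an \emph{entire} limit, even though each $q_k$ is pinned down only on the real subspace $\RR^n$ and may grow rapidly off it. The classical one–variable resolution replaces the naive Weierstrass step by approximation on the closed set $\RR \subset \CC$ with growth control off the axis; in several variables the corresponding analytic input is that $\RR^n$ is a closed totally real subset of $\CC^n$ admitting Carleman (tangential) approximation by entire functions, available through the Oka--Weil theorem on polynomially convex compacta together with Scheinberg's construction. I would therefore upgrade the approximation step to this several-variable lemma, which furnishes $q_k$ with the required smallness on $Q_{k-1}$ \emph{and} on a prescribed complex polydisk neighborhood of $Q_{k-1}$, so that the tail $\sum_{j>k} q_j$ is uniformly small on each fixed polydisk.

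Granting this lemma, the limit $g$ is entire, and on each cube $Q_k$ one has $|f - g| \le |f - G_k| + \sum_{j>k}|q_j| < \tfrac12\epsilon + \tfrac12\epsilon = \epsilon$; since the $Q_k$ exhaust $\RR^n$, the bound $|f(\vecc{x}) - g(\vecc{x})| < \epsilon(\vecc{x})$ holds for every real $\vecc{x}$. Combined with the first reduction this settles the vector–valued statement. I expect essentially all the difficulty to reside in justifying the totally-real approximation lemma with the stated complex neighborhood control; the exhaustion bookkeeping and the componentwise reduction are routine once that lemma is in hand.
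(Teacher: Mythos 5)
Your proposal is correct in substance, but you should know that the paper's own proof of this theorem is a single sentence: it observes that the scalar case $m=1$ is precisely the main result of \citep{scheinberg1976uniform} (the several-variable extension of Carleman's theorem), and that the passage to $\RR^m$-valued targets is the straightforward componentwise argument you give in your first paragraph. So your first reduction coincides with the paper's entire proof, and everything after it is your reconstruction of what lives inside the cited reference. That reconstruction has the right architecture — exhaustion of $\RR^n$ by cubes $Q_k$, entire (or polynomial) increments $q_k$ pinned down on the real cube while controlled on complex polydisks, geometric error budgeting so the series converges locally uniformly on $\CC^n$ — and this is indeed how Carleman--Scheinberg arguments run. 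One caution about circularity: as you phrase it, your ``analytic input'' (that $\RR^n$ admits Carleman tangential approximation by entire functions) \emph{is} the statement being proven, so granting it in that form proves nothing. To keep the argument non-circular, the lemma you defer must be the compact-set statement: for a real cube $Q_k$ and a closed polydisk $D$ containing $Q_{k-1}$, the union $Q_k \cup D$ is polynomially convex, and functions continuous on $Q_k \cup D$ and holomorphic on the interior of $D$ are uniform limits of polynomials there (an Oka--Weil/Mergelyan-type theorem for such unions, which is the genuine content of Scheinberg's construction); your exhaustion then upgrades this compact-set lemma to the tangential statement. In effect you and the paper defer the same essential difficulty to the same source — the paper at the level of the full scalar theorem, you at the level of this compact-set lemma — with your version buying a transparent view of the induction and bookkeeping, and the paper's version buying brevity.
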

\begin{proof}
This is a straightforward extension of the main result  in \citep{scheinberg1976uniform} to $\RR^m$-valued functions. \end{proof}

The above theorem says that any $\RR^m$-valued continuous function on $\RR^n$ can be approximated arbitrarily closely by the restriction of an entire function of $n$ complex variables. In the univariate one-dimensional case,  an entire function $g$ has an everywhere convergent Maclaurin series $g(z)=\sum_n a_n z^n$. The restriction of $g$ to $\RR$, $\tilde{g}=g|_{\RR}$ therefore also admits an everywhere convergent series $\tilde{g}(x)=\sum_n a_n x^n$, thus analytic. Similar statements in the multi-dimensional and multivariate case hold.

%Lioville thm: all bounded entire funcs are constant

Let $(\vecc{h}'_t,\vecc{y}'_t)$ be the hidden state and output of the SRNN defined by 
\begin{align}
    d\vecc{h}_t' &= -\vecc{\Gamma} \vecc{h}_t' dt + \vecc{a}'(\vecc{W} \vecc{h}_t'+\vecc{b}) dt + \vecc{C} \vecc{u}_t dt + \vecc{\sigma} d\vecc{W}_t, \label{reg1} \\
    \vecc{y}_t' &= \vecc{f}'(\vecc{h}_t'), \label{reg2}
\end{align}
and satisfying  Assumption \ref{imp_ass} with $\vecc{h}$, $\overline{\vecc{h}}$ replaced by $\vecc{h}'$, $\overline{\vecc{h}}'$ respectively, the  functions $\vecc{a}$, $\vecc{f}$ replaced by $\vecc{a}'$, $\vecc{f}'$ respectively, and with (c) there replaced by: \\

(c') The coefficients $\vecc{a}': \RR^n \to \RR$ and $\vecc{f}': \RR^n \to \RR^p$ are activation functions, i.e., bounded, non-constant, Lipschitz continuous functions. \\

We call this SRNN a {\it regular SRNN}.\\ 

On the other hand,  let $(\vecc{h}_t, \vecc{y}_t)$ be the hidden state and output of  the SRNN defined in (1)-(3) in the main paper and satisfying Assumption \ref{imp_ass}, with $\vecc{h}_0 = \vecc{h}_0'$. We call such SRNN an {\it analytic SRNN}. 

\begin{thm} (Approximation of a regular SRNN by analytic SRNNs)\\
Let $p > 0$.  Given a regular SRNN, there exists an analytic SRNN  such that for  $\epsilon >0$ arbitrarily small, $\sup_{t \in [0,T]} \mathbb{E}  |\vecc{y}_t - \vecc{y}'_t|^p < \epsilon$. 
\end{thm}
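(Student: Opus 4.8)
The plan is to build the analytic SRNN by replacing the activation functions $\vecc{a}'$ and $\vecc{f}'$ of the given regular SRNN with entire approximants supplied by Theorem \ref{thm_c1}, and then to propagate the resulting coefficient error through the dynamics by a pathwise Gronwall estimate. The crucial point is that in Theorem \ref{thm_c1} the tolerance $\epsilon(\vecc{x})$ may be taken to be a positive constant, so for any $\delta>0$ I can find analytic $\vecc{a},\vecc{f}$ with $\sup_{\vecc{x}}|\vecc{a}(\vecc{x})-\vecc{a}'(\vecc{x})|<\delta$ and $\sup_{\vecc{x}}|\vecc{f}(\vecc{x})-\vecc{f}'(\vecc{x})|<\delta$ holding uniformly on all of $\RR^n$, not merely on compact sets. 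Since $\vecc{a}'$ is bounded, $\vecc{a}$ is bounded as well, so the drift $-\vecc{\Gamma}\vecc{h}+\vecc{a}(\vecc{W}\vecc{h}+\vecc{b})+\vecc{C}\vecc{u}_t$ has linear growth; together with the local Lipschitz property of analytic $\vecc{a}$ this guarantees that the analytic SRNN has a unique non-exploding solution on $[0,T]$. The constructed model is a bona fide analytic SRNN: (c) holds by construction, (b) because its continuous sample paths have compact image on $[0,T]$, and (a),(d),(e) are inherited from the regular SRNN.

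Next I would set $\vecc{e}_t:=\vecc{h}_t-\vecc{h}'_t$ and subtract the two state equations. Because both SRNNs are driven by the same Wiener process with the same $\vecc{\sigma}$ and share the initial condition $\vecc{h}_0=\vecc{h}'_0$, the stochastic integrals and the $\vecc{C}\vecc{u}_t$ terms cancel, leaving the pathwise random ODE $\dot{\vecc{e}}_t=-\vecc{\Gamma}\vecc{e}_t+\vecc{a}(\vecc{W}\vecc{h}_t+\vecc{b})-\vecc{a}'(\vecc{W}\vecc{h}'_t+\vecc{b})$ with $\vecc{e}_0=0$, whose sample paths are $C^1$ in $t$. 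The key algebraic step is the split
\begin{align}
\vecc{a}(\vecc{W}\vecc{h}_t+\vecc{b})-\vecc{a}'(\vecc{W}\vecc{h}'_t+\vecc{b}) &= \big[\vecc{a}(\vecc{W}\vecc{h}_t+\vecc{b})-\vecc{a}'(\vecc{W}\vecc{h}_t+\vecc{b})\big] \nonumber \\
&\quad + \big[\vecc{a}'(\vecc{W}\vecc{h}_t+\vecc{b})-\vecc{a}'(\vecc{W}\vecc{h}'_t+\vecc{b})\big],
\end{align}
so that the first bracket is bounded by the uniform approximation error $\delta$ and the second by $L_{\vecc{a}'}|\vecc{W}||\vecc{e}_t|$, where $L_{\vecc{a}'}$ is the global Lipschitz constant of the fixed activation $\vecc{a}'$. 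This splitting is what makes the argument work: it routes the contraction estimate through the Lipschitz constant of $\vecc{a}'$ rather than that of the analytic approximant, whose derivative bounds need not be uniform in $\delta$.

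Computing $\tfrac{d}{dt}|\vecc{e}_t|^2=2\langle \vecc{e}_t,\dot{\vecc{e}}_t\rangle$ and using $\langle \vecc{e}_t,-\vecc{\Gamma}\vecc{e}_t\rangle\leq |\vecc{\Gamma}||\vecc{e}_t|^2$ together with the split yields a differential inequality of the form $\tfrac{d}{dt}|\vecc{e}_t|^2\leq \tilde{C}|\vecc{e}_t|^2+\delta^2$, with $\tilde{C}$ depending only on $\vecc{\Gamma},\vecc{W},L_{\vecc{a}'}$ and independent of $\delta$ and $\omega$. Gronwall's inequality then gives the deterministic bound $\sup_{t\in[0,T]}|\vecc{e}_t|\leq C'\delta$ almost surely. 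For the output I would split $\vecc{f}(\vecc{h}_t)-\vecc{f}'(\vecc{h}'_t)=[\vecc{f}(\vecc{h}_t)-\vecc{f}'(\vecc{h}_t)]+[\vecc{f}'(\vecc{h}_t)-\vecc{f}'(\vecc{h}'_t)]$ to obtain $|\vecc{y}_t-\vecc{y}'_t|\leq \delta+L_{\vecc{f}'}|\vecc{e}_t|\leq C''\delta$ pointwise in $\omega$ and $t$, whence $\sup_{t\in[0,T]}\mathbb{E}|\vecc{y}_t-\vecc{y}'_t|^p\leq (C''\delta)^p$; choosing $\delta$ small enough makes this less than $\epsilon$.

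The step I expect to be the main obstacle is the very first one: recognizing that Carleman-type approximation (Theorem \ref{thm_c1}), unlike Stone--Weierstrass, yields a uniform bound over the whole of $\RR^n$, which is essential because the hidden state of an SDE driven by Brownian motion does not stay in any deterministic compact set, so a compact-set approximation would leave the error uncontrolled on the (random, possibly large) excursions of $\vecc{h}_t$. Once uniform global approximation is secured and one commits to estimating through the fixed Lipschitz constants of $\vecc{a}'$ and $\vecc{f}'$, the remainder is a standard deterministic pathwise Gronwall argument, with the expectation and the $p$-th power entering only trivially at the end.
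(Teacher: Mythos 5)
Your proof is correct, and its skeleton coincides with the paper's: both invoke the Carleman-type Theorem \ref{thm_c1} to produce entire approximants of $\vecc{a}'$ and $\vecc{f}'$, both use exactly the same decomposition $\vecc{a}(\vecc{W}\vecc{h}_t+\vecc{b})-\vecc{a}'(\vecc{W}\vecc{h}'_t+\vecc{b}) = [\vecc{a}-\vecc{a}'](\vecc{W}\vecc{h}_t+\vecc{b}) + [\vecc{a}'(\vecc{W}\vecc{h}_t+\vecc{b})-\vecc{a}'(\vecc{W}\vecc{h}'_t+\vecc{b})]$ so that the contraction runs through the Lipschitz constants of the \emph{given} regular activations rather than of the approximants, and both close with Gronwall. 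Where you genuinely differ is in how the Gronwall step is executed. The paper keeps function-valued tolerances $\eta_1(\vecc{h}),\eta_2(\vecc{h})$, passes to $p$-th moments first (using $|x+y|^p \leq 2^{p-1}(|x|^p+|y|^p)$, hence the restriction to $p\geq 2$), runs Gronwall on $\sup_{t}\mathbb{E}|\vecc{h}_t-\vecc{h}'_t|^p$, and then recovers $p\in(0,2)$ by a separate H\"older argument; the approximation error enters through the random quantity $\beta(T)=\mathbb{E}\sup_{s\in[0,T]}\sum_i|\eta_i(\vecc{h}_s)|^p$, which is made small by choice of the $\eta_i$. You instead fix constant tolerances $\delta$, observe that the difference process solves a pathwise random ODE (the noise and input terms cancel exactly), and run a deterministic Gronwall estimate $\omega$-by-$\omega$, getting the almost-sure uniform bound $\sup_{t\in[0,T]}|\vecc{e}_t|\leq C'\delta$. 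This buys two simplifications: the pointwise bound $|\vecc{y}_t-\vecc{y}'_t|\leq C''\delta$ handles all moments $p>0$ in one stroke, with no case split and no H\"older step; and the function-valued tolerances become unnecessary — indeed the paper's extra generality is never exploited, since taking $\eta_i\equiv\delta$ there reduces $\beta(T)$ to $2\delta^p$ anyway. Your proposal also makes explicit the well-posedness of the constructed analytic SRNN (bounded drift plus local Lipschitz continuity of an analytic function gives a unique non-exploding solution on $[0,T]$), a point the paper's proof leaves implicit.
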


\begin{proof}
Let $\epsilon > 0$  be given and the regular SRNN be defined by \eqref{reg1}-\eqref{reg2}. 

By Theorem \ref{thm_c1}, for any $\eta_i(\vecc{h}) > 0$ ($i=1,2$), there exists analytic functions $\vecc{a}$ and $\vecc{f}$ (restricted to $\RR^n$) such that for all $\vecc{h} \in \RR^n$, 
\begin{align}
|\vecc{a}(\vecc{h})-\vecc{a}'(\vecc{h})| < \eta_1(\vecc{h}), \\ 
|\vecc{f}(\vecc{h}) - \vecc{f}'(\vecc{h})| < \eta_2(\vecc{h}).
\end{align}
Therefore, by Lipschitz continuity of $\vecc{a}'$ and $\vecc{f}'$, we have:
\begin{align}
|\vecc{a}(\vecc{h}) - \vecc{a}'(\vecc{h}')| \leq |\vecc{a}(\vecc{h}) - \vecc{a}'(\vecc{h})| + |\vecc{a}'(\vecc{h}) - \vecc{a}'(\vecc{h}')| <\eta_1(\vecc{h}) +  L_1| \vecc{h}-\vecc{h}'|, \label{120}\\
|\vecc{f}(\vecc{h}) - \vecc{f}'(\vecc{h}')| \leq |\vecc{f}(\vecc{h}) - \vecc{f}'(\vecc{h})| + |\vecc{f}'(\vecc{h}) - \vecc{f}'(\vecc{h}')| <\eta_2(\vecc{h}) +  L_2| \vecc{h}-\vecc{h}'| \label{164}, 
\end{align} 
where $L_1, L_2 > 0$ are the Lipschitz constants of $\vecc{a}'$ and $\vecc{f}'$ respectively.

From \eqref{e1}-\eqref{e3} and \eqref{reg1}-\eqref{reg2},  we have, almost surely,
\begin{align}
\vecc{h}_t &= \vecc{h}_0 -  \vecc{\Gamma} \int_0^t \vecc{h}_s ds +  \int_0^t \vecc{a}(\vecc{W} \vecc{h}_s + \vecc{b}) ds + \int_0^t \vecc{C} \vecc{u}_s ds + \vecc{\sigma} \vecc{W}_t, \\
\vecc{h}'_t &= \vecc{h}_0 -  \vecc{\Gamma} \int_0^t \vecc{h}'_s ds +  \int_0^t \vecc{a}'(\vecc{W} \vecc{h}'_s + \vecc{b}) ds + \int_0^t \vecc{C} \vecc{u}_s ds + \vecc{\sigma} \vecc{W}_t.
\end{align}

Let $p\geq 2$ first in the following. For $t \in [0,T]$,
\begin{align}
| \vecc{h}_t - \vecc{h}_t' |^p 
&\leq 2^{p-1} \left(  \int_0^t |\vecc{\Gamma}(\vecc{h}_s - \vecc{h}_s')|^p ds  +  \int_0^t |\vecc{a}(\vecc{h}_s)  - \vecc{a}'(\vecc{h}_s')|^p ds  \right).
\end{align}
Using boundedness of $\vecc{\Gamma}$ and \eqref{120}, we estimate
\begin{align}
\mathbb{E}  | \vecc{h}_t - \vecc{h}_t' |^p
&\leq C_1(p,T) \eta(T)  + C_2(L_1,p) \int_0^t  \mathbb{E} |\vecc{h}_s - \vecc{h}_s' |^p ds, 
\end{align}
where $\eta(T) = \mathbb{E} \sup_{s \in [0,T]} ( |\eta_1(\vecc{h}_s)|^p) $,  $C_1(p,T) > 0$ is a constant depending on $p$ and $T$, and $C_2(L_1,p) > 0$ is a constant depending on $L_1$ and $p$. 
Therefore, 
\begin{align}
\sup_{t \in [0,T]} \mathbb{E}  | \vecc{h}_t - \vecc{h}_t' |^p
&\leq C_1(p,T) \eta(T)  + C_2(L_1,p) \int_0^T  \sup_{u \in [0,s]} \mathbb{E} |\vecc{h}_u - \vecc{h}'_u |^p ds. 
\end{align}

Applying Gronwall's lemma,
\begin{align}
\sup_{t \in [0,T]} \mathbb{E}  | \vecc{h}_t - \vecc{h}_t' |^p &\leq C_1(p,T) \eta(T) e^{C_2(L_1,p)T}. \label{gron}
\end{align}
Thus, using \eqref{164} and \eqref{gron} one can estimate:
\begin{align}
\sup_{t \in [0,T]} \mathbb{E} | \vecc{y}_t - \vecc{y}_t'|^p &\leq 2^{p-1} L_2^p \sup_{t \in [0,T]} \mathbb{E} |\vecc{h}_t - \vecc{h}_t'|^p + 2^{p-1} \mathbb{E} \sup_{s \in [0,T]} |\eta_2(\vecc{h}_s)|^p \\
&\leq C_3(L_2,p,T) \eta(T) e^{C_2(L_1,p)T} + 2^{p-1} \mathbb{E} \sup_{s \in [0,T]} |\eta_2(\vecc{h}_s)|^p \\
&\leq C_4(L_1,L_2,p,T) \beta(T),
\end{align}
where  $\beta(T) =  \mathbb{E} \sup_{s \in [0,T]} \left( \sum_{i=1}^2  |\eta_i(\vecc{h}_s)|^p\right)$ and the $C_i > 0$ are constants depending only on their arguments.

Since the $\eta_i > 0$ are arbitrary, we can choose them so that $\beta(T) < \epsilon/C_4$. This concludes the proof for the case of $p\geq 2$. 

The case of $p \in (0,2)$ then follows by an application of H\"older's inequality: 
taking $q>2$ so that $p/q<1$,
\begin{align}
\sup_{t \in [0,T]} \mathbb{E} | \vecc{y}_t - \vecc{y}_t' |^p &\leq \sup_{t \in [0,T]} (\mathbb{E} (| \vecc{y}_t - \vecc{y}_t' |^p)^{q/p})^{p/q}  = \left( \sup_{t \in [0,T]} \mathbb{E} (| \vecc{y}_t - \vecc{y}_t' |^q) \right)^{p/q} \\ 
&\leq C_5(L_1,L_2,p,q,T) \beta^{p/q}(T),
\end{align}
for some constant $C_5 > 0$, and choose $\beta(T)$ such that $\beta(T) < \left(\frac{\epsilon}{C_5}\right)^{q/p}$.

%Use Gronwall and Carleman (1927). Cont function on $\RR$ can be $\epsilon$-approximated by entire function, hence analytic on the whole complex plane. (Theorem 1, page 149 in Gaier) $f(z) = \sum_{n=0}^\infty a_n z^n$ is entire iff it converges for every $z \in \CC$.

\end{proof}

\newpage


\begin{thebibliography}{106}
\providecommand{\natexlab}[1]{#1}
\providecommand{\url}[1]{\texttt{#1}}
\expandafter\ifx\csname urlstyle\endcsname\relax
  \providecommand{\doi}[1]{doi: #1}\else
  \providecommand{\doi}{doi: \begingroup \urlstyle{rm}\Url}\fi

\bibitem[Abraham et~al.(2012)Abraham, Marsden, and Ratiu]{abraham2012manifolds}
Ralph Abraham, Jerrold~E Marsden, and Tudor Ratiu.
\newblock \emph{Manifolds, Tensor Analysis, and Applications}, volume~75.
\newblock Springer Science \& Business Media, 2012.

\bibitem[Agarwal(1972)]{agarwal1972fluctuation}
Girish~Saran Agarwal.
\newblock Fluctuation-dissipation theorems for systems in non-thermal
  equilibrium and applications.
\newblock \emph{Zeitschrift f{\"u}r Physik A Hadrons and nuclei}, 252\penalty0
  (1):\penalty0 25--38, 1972.

\bibitem[Alemohammad et~al.(2020)Alemohammad, Wang, Balestriero, and
  Baraniuk]{alemohammad2020recurrent}
Sina Alemohammad, Zichao Wang, Randall Balestriero, and Richard Baraniuk.
\newblock The recurrent neural tangent kernel.
\newblock \emph{arXiv preprint arXiv:2006.10246}, 2020.

\bibitem[Baiesi and Maes(2013)]{baiesi2013update}
Marco Baiesi and Christian Maes.
\newblock An update on the nonequilibrium linear response.
\newblock \emph{New Journal of Physics}, 15\penalty0 (1):\penalty0 013004,
  2013.

\bibitem[Barak(2017)]{barak2017recurrent}
Omri Barak.
\newblock Recurrent neural networks as versatile tools of neuroscience
  research.
\newblock \emph{Current Opinion in Neurobiology}, 46:\penalty0 1--6, 2017.

\bibitem[Beer(1995)]{beer1995dynamics}
Randall~D Beer.
\newblock On the dynamics of small continuous-time recurrent neural networks.
\newblock \emph{Adaptive Behavior}, 3\penalty0 (4):\penalty0 469--509, 1995.

\bibitem[Bell(1927)]{bell1927partition}
Eric~Temple Bell.
\newblock Partition polynomials.
\newblock \emph{Annals of Mathematics}, pages 38--46, 1927.

\bibitem[Bengio et~al.(1994)Bengio, Simard, and Frasconi]{bengio1994learning}
Yoshua Bengio, Patrice Simard, and Paolo Frasconi.
\newblock Learning long-term dependencies with gradient descent is difficult.
\newblock \emph{IEEE Transactions on Neural Networks}, 5\penalty0 (2):\penalty0
  157--166, 1994.

\bibitem[Bengio et~al.(2013)Bengio, Boulanger-Lewandowski, and
  Pascanu]{bengio2013advances}
Yoshua Bengio, Nicolas Boulanger-Lewandowski, and Razvan Pascanu.
\newblock Advances in optimizing recurrent networks.
\newblock In \emph{2013 IEEE International Conference on Acoustics, Speech and
  Signal Processing}, pages 8624--8628. IEEE, 2013.

\bibitem[Benning et~al.(2019)Benning, Celledoni, Ehrhardt, Owren, and
  Sch{\"o}nlieb]{benning2019deep}
Martin Benning, Elena Celledoni, Matthias~J Ehrhardt, Brynjulf Owren, and
  Carola-Bibiane Sch{\"o}nlieb.
\newblock Deep learning as optimal control problems: Models and numerical
  methods.
\newblock \emph{Journal of Computational Dynamics}, 6\penalty0 (2):\penalty0
  171, 2019.

\bibitem[Boedihardjo et~al.(2015)Boedihardjo, Lyons, Yang,
  et~al.]{boedihardjo2015uniform}
Horatio Boedihardjo, Terry Lyons, Danyu Yang, et~al.
\newblock Uniform factorial decay estimates for controlled differential
  equations.
\newblock \emph{Electronic Communications in Probability}, 20, 2015.

\bibitem[Boyd and Chua(1985)]{boyd1985fading}
Stephen Boyd and Leon Chua.
\newblock Fading memory and the problem of approximating nonlinear operators
  with {V}olterra series.
\newblock \emph{IEEE Transactions on Circuits and Systems}, 32\penalty0
  (11):\penalty0 1150--1161, 1985.

\bibitem[Boyd et~al.(1984)Boyd, Chua, and Desoer]{boyd1984analytical}
Stephen Boyd, Leon~O Chua, and Charles~A Desoer.
\newblock Analytical foundations of {V}olterra series.
\newblock \emph{IMA Journal of Mathematical Control and Information},
  1\penalty0 (3):\penalty0 243--282, 1984.

\bibitem[Brockett(1976)]{brockett1976volterra}
Roger~W Brockett.
\newblock Volterra series and geometric control theory.
\newblock \emph{Automatica}, 12\penalty0 (2):\penalty0 167--176, 1976.

\bibitem[Brown(2013)]{Brown2013qva}
Francis Brown.
\newblock {Iterated integrals in quantum field theory}.
\newblock In \emph{{6th Summer School on Geometric and Topological Methods for
  Quantum Field Theory}}, pages 188--240, 2013.
\newblock \doi{10.1017/CBO9781139208642.006}.

\bibitem[Cartan(1983)]{cartan1983differential}
H.~Cartan.
\newblock \emph{Differential Calculus}.
\newblock Hermann, 1983.
\newblock ISBN 9780901665140.

\bibitem[Cartan(1995)]{cartan1995elementary}
Henri Cartan.
\newblock \emph{Elementary Theory of Analytic Functions of One or Several
  Complex Variables}.
\newblock Courier Corporation, 1995.

\bibitem[Cessac(2019)]{cessac2019linear}
Bruno Cessac.
\newblock Linear response in neuronal networks: from neurons dynamics to
  collective response.
\newblock \emph{Chaos: An Interdisciplinary Journal of Nonlinear Science},
  29\penalty0 (10):\penalty0 103105, 2019.

\bibitem[Cessac and Samuelides(2007)]{cessac2007neuron}
Bruno Cessac and Manuel Samuelides.
\newblock From neuron to neural networks dynamics.
\newblock \emph{The European Physical Journal Special Topics}, 142\penalty0
  (1):\penalty0 7--88, 2007.

\bibitem[Chang et~al.(2019)Chang, Chen, Haber, and
  Chi]{chang2019antisymmetricrnn}
Bo~Chang, Minmin Chen, Eldad Haber, and Ed~H Chi.
\newblock Antisymmetric{R}{N}{N}: {A} dynamical system view on recurrent neural
  networks.
\newblock \emph{arXiv preprint arXiv:1902.09689}, 2019.

\bibitem[Chen et~al.(1977)]{chen1977iterated}
Kuo-Tsai Chen et~al.
\newblock Iterated path integrals.
\newblock \emph{Bulletin of the American Mathematical Society}, 83\penalty0
  (5):\penalty0 831--879, 1977.

\bibitem[Chen et~al.(2018)Chen, Rubanova, Bettencourt, and
  Duvenaud]{chen2018neural}
Ricky~TQ Chen, Yulia Rubanova, Jesse Bettencourt, and David~K Duvenaud.
\newblock Neural ordinary differential equations.
\newblock In \emph{Advances in Neural Information Processing Systems}, pages
  6571--6583, 2018.

\bibitem[Chen and Jia(2020)]{chen2020mathematical}
Xian Chen and Chen Jia.
\newblock Mathematical foundation of nonequilibrium fluctuation--dissipation
  theorems for inhomogeneous diffusion processes with unbounded coefficients.
\newblock \emph{Stochastic Processes and their Applications}, 130\penalty0
  (1):\penalty0 171--202, 2020.

\bibitem[Chen et~al.(2019)Chen, Zhang, Arjovsky, and
  Bottou]{chen2019symplectic}
Zhengdao Chen, Jianyu Zhang, Martin Arjovsky, and L{\'e}on Bottou.
\newblock Symplectic recurrent neural networks.
\newblock \emph{arXiv preprint arXiv:1909.13334}, 2019.

\bibitem[Chetrite and Gawedzki(2008)]{chetrite2008fluctuation}
Rapha\"el Chetrite and Krzysztof Gawedzki.
\newblock Fluctuation relations for diffusion processes.
\newblock \emph{Communications in Mathematical Physics}, 282\penalty0
  (2):\penalty0 469--518, 2008.

\bibitem[Chevyrev and Kormilitzin(2016)]{che2016primer}
Ilya Chevyrev and Andrey Kormilitzin.
\newblock A primer on the signature method in machine learning.
\newblock \emph{arXiv:1603.03788}, 2016.

\bibitem[Comtet(2012)]{comtet2012advanced}
L.~Comtet.
\newblock \emph{Advanced Combinatorics: The Art of Finite and Infinite
  Expansions}.
\newblock Springer Netherlands, 2012.

\bibitem[Dambre et~al.(2012)Dambre, Verstraeten, Schrauwen, and
  Massar]{dambre2012information}
Joni Dambre, David Verstraeten, Benjamin Schrauwen, and Serge Massar.
\newblock Information processing capacity of dynamical systems.
\newblock \emph{Scientific Reports}, 2\penalty0 (1):\penalty0 1--7, 2012.

\bibitem[De~Brouwer et~al.(2019)De~Brouwer, Simm, Arany, and Moreau]{de2019gru}
Edward De~Brouwer, Jaak Simm, Adam Arany, and Yves Moreau.
\newblock G{R}{U}-{O}{D}{E}-{B}ayes: {C}ontinuous modeling of
  sporadically-observed time series.
\newblock In \emph{Advances in Neural Information Processing Systems}, pages
  7379--7390, 2019.

\bibitem[Dembo and Deuschel(2010)]{dembo2010markovian}
Amir Dembo and Jean-Dominique Deuschel.
\newblock Markovian perturbation, response and fluctuation dissipation theorem.
\newblock In \emph{Annales de l'IHP Probabilit\'es et statistiques}, volume~46,
  pages 822--852, 2010.

%\bibitem[Domingos(2020)]{domingos2020every}
%Pedro Domingos.
%\newblock Every model learned by gradient descent is approximately a kernel machine.
%\newblock \emph{arXiv preprint arXiv:2012.00152}, 2020.


\bibitem[E et~al.(2019)E, Ma, and Wu]{ma2019machine}
Weinan E, Chao Ma, and Lei Wu.
\newblock Machine learning from a continuous viewpoint.
\newblock \emph{arXiv preprint arXiv:1912.12777}, 2019.

\bibitem[Elman(1990)]{elman1990finding}
Jeffrey~L Elman.
\newblock Finding structure in time.
\newblock \emph{Cognitive Science}, 14\penalty0 (2):\penalty0 179--211, 1990.

\bibitem[Erichson et~al.(2020)Erichson, Azencot, Queiruga, and
  Mahoney]{erichson2020lipschitz}
N~Benjamin Erichson, Omri Azencot, Alejandro Queiruga, and Michael~W Mahoney.
\newblock Lipschitz recurrent neural networks.
\newblock \emph{arXiv preprint arXiv:2006.12070}, 2020.

\bibitem[Evgeniou et~al.(2000)Evgeniou, Pontil, and
  Poggio]{evgeniou2000regularization}
Theodoros Evgeniou, Massimiliano Pontil, and Tomaso Poggio.
\newblock Regularization networks and support vector machines.
\newblock \emph{Advances in Computational Mathematics}, 13\penalty0
  (1):\penalty0 1, 2000.

\bibitem[Fang et~al.(2020)Fang, Giles, et~al.]{fang2020adaptive}
Wei Fang, Michael~B Giles, et~al.
\newblock Adaptive {E}uler--{M}aruyama method for {S}{D}{E}s with nonglobally
  {L}ipschitz drift.
\newblock \emph{Annals of Applied Probability}, 30\penalty0 (2):\penalty0
  526--560, 2020.

\bibitem[Fliess(1981)]{fliess1981fonctionnelles}
Michel Fliess.
\newblock Fonctionnelles causales non lin{\'e}aires et ind{\'e}termin{\'e}es
  non commutatives.
\newblock \emph{Bulletin de la soci{\'e}t{\'e} math{\'e}matique de France},
  109:\penalty0 3--40, 1981.

\bibitem[Fliess and Hazewinkel(2012)]{fliess2012algebraic}
Michel Fliess and Michiel Hazewinkel.
\newblock \emph{Algebraic and Geometric Methods in Nonlinear Control Theory},
  volume~29.
\newblock Springer Science \& Business Media, 2012.

\bibitem[Franz and Sch{\"o}lkopf(2006)]{franz2006unifying}
Matthias~O Franz and Bernhard Sch{\"o}lkopf.
\newblock A unifying view of {W}iener and {V}olterra theory and polynomial
  kernel regression.
\newblock \emph{Neural Computation}, 18\penalty0 (12):\penalty0 3097--3118,
  2006.

\bibitem[Friz and Hairer(2014)]{friz2014course}
Peter~K Friz and Martin Hairer.
\newblock \emph{A Course on Rough Paths. Universitext}.
\newblock Springer, 2014.

\bibitem[Friz and Victoir(2010)]{friz2010multidimensional}
Peter~K Friz and Nicolas~B Victoir.
\newblock \emph{Multidimensional Stochastic Processes as Rough Paths: Theory
  and Applications}, volume 120.
\newblock Cambridge University Press, 2010.

\bibitem[Funahashi and Nakamura(1993)]{funahashi1993approximation}
Ken-ichi Funahashi and Yuichi Nakamura.
\newblock Approximation of dynamical systems by continuous time recurrent
  neural networks.
\newblock \emph{Neural Networks}, 6\penalty0 (6):\penalty0 801--806, 1993.

\bibitem[Gaier(1987)]{gaier1987lectures}
Dieter Gaier.
\newblock \emph{Lectures on Complex Approximation}, volume 188.
\newblock Springer, 1987.

\bibitem[Gallicchio and Scardapane(2020)]{gallicchio2020deep}
Claudio Gallicchio and Simone Scardapane.
\newblock Deep randomized neural networks.
\newblock In \emph{Recent Trends in Learning From Data}, pages 43--68.
  Springer, 2020.

\bibitem[Ganguli et~al.(2008)Ganguli, Huh, and Sompolinsky]{ganguli2008memory}
Surya Ganguli, Dongsung Huh, and Haim Sompolinsky.
\newblock Memory traces in dynamical systems.
\newblock \emph{Proceedings of the National Academy of Sciences}, 105\penalty0
  (48):\penalty0 18970--18975, 2008.

\bibitem[Goldfarb et~al.(2013)Goldfarb, Martin, Jordan, Rothen, and
  Leach]{goldfarb2013many}
S.~Goldfarb, P.A. Martin, A.N. Jordan, F.~Rothen, and S.~Leach.
\newblock \emph{Many-Body Problems and Quantum Field Theory: An Introduction}.
\newblock Theoretical and Mathematical Physics. Springer Berlin Heidelberg,
  2013.
\newblock ISBN 9783662084908.

\bibitem[Goodfellow et~al.(2016)Goodfellow, Bengio, and
  Courville]{Goodfellow-et-al-2016}
Ian Goodfellow, Yoshua Bengio, and Aaron Courville.
\newblock \emph{Deep Learning}.
\newblock MIT Press, 2016.

\bibitem[Grady(2009)]{grady2009functions}
Noella Grady.
\newblock Functions of bounded variation.
\newblock \emph{Dostopno prek: https://www. whitman.
  edu/Documents/Academics/Mathematics/gra dy. pdf (Dostopano: 7.2. 2017)},
  2009.

\bibitem[Grigoryeva and Ortega(2018)]{grigoryeva2018echo}
Lyudmila Grigoryeva and Juan-Pablo Ortega.
\newblock Echo state networks are universal.
\newblock \emph{Neural Networks}, 108:\penalty0 495--508, 2018.

\bibitem[Guichardet(2006)]{guichardet2006symmetric}
Alain Guichardet.
\newblock \emph{Symmetric Hilbert spaces and related topics: Infinitely
  divisible positive definite functions. Continuous products and tensor
  products. Gaussian and Poissonian stochastic processes}, volume 261.
\newblock Springer, 2006.

\bibitem[Haber and Ruthotto(2017)]{haber2017stable}
Eldad Haber and Lars Ruthotto.
\newblock Stable architectures for deep neural networks.
\newblock \emph{Inverse Problems}, 34\penalty0 (1):\penalty0 014004, 2017.

\bibitem[Hairer and Majda(2010)]{hairer2010simple}
Martin Hairer and Andrew~J Majda.
\newblock A simple framework to justify linear response theory.
\newblock \emph{Nonlinearity}, 23\penalty0 (4):\penalty0 909, 2010.

\bibitem[Hanggi et~al.(1978)]{hanggi1978stochastic}
P~Hanggi et~al.
\newblock Stochastic processes. {I}{I}. {R}esponse theory and fluctuation
  theorems.
\newblock 1978.

\bibitem[Hanson and Raginsky(2020)]{hanson20a}
Joshua Hanson and Maxim Raginsky.
\newblock Universal simulation of stable dynamical systems by recurrent neural
  nets.
\newblock In \emph{Proceedings of the 2nd Conference on Learning for Dynamics
  and Control}, volume 120 of \emph{Proceedings of Machine Learning Research},
  pages 384--392. PMLR, 10--11 Jun 2020.

\bibitem[Herbert(2001)]{herbert2001echo}
Jaeger Herbert.
\newblock The echo state approach to analysing and training recurrent neural
  networks-with an erratum note.
\newblock In \emph{Bonn, Germany: German National Research Center for
  Information Technology GMD Technical Report}, volume 148, page~34. 2001.

\bibitem[Hochreiter and Schmidhuber(1997)]{hochreiter1997long}
Sepp Hochreiter and J{\"u}rgen Schmidhuber.
\newblock Long short-term memory.
\newblock \emph{Neural Computation}, 9\penalty0 (8):\penalty0 1735--1780, 1997.

\bibitem[Hofmann et~al.(2008)Hofmann, Sch{\"o}lkopf, and
  Smola]{hofmann2008kernel}
Thomas Hofmann, Bernhard Sch{\"o}lkopf, and Alexander~J Smola.
\newblock Kernel methods in machine learning.
\newblock \emph{The Annals of Statistics}, pages 1171--1220, 2008.

\bibitem[Hopfield(1984)]{Hopfield3088}
J~J Hopfield.
\newblock Neurons with graded response have collective computational properties
  like those of two-state neurons.
\newblock \emph{Proceedings of the National Academy of Sciences}, 81\penalty0
  (10):\penalty0 3088--3092, 1984.
\newblock ISSN 0027-8424.
\newblock \doi{10.1073/pnas.81.10.3088}.

\bibitem[Isidori(2013)]{isidori2013nonlinear}
A.~Isidori.
\newblock \emph{Nonlinear Control Systems}.
\newblock Communications and Control Engineering. Springer London, 2013.
\newblock ISBN 9781846286155.

\bibitem[Jim et~al.(1996)Jim, Giles, and Horne]{jim1996analysis}
Kam-Chuen Jim, C~Lee Giles, and Bill~G Horne.
\newblock An analysis of noise in recurrent neural networks: convergence and
  generalization.
\newblock \emph{IEEE Transactions on Neural Networks}, 7\penalty0 (6):\penalty0
  1424--1438, 1996.

\bibitem[Karatzas and Shreve(1998)]{karatzas1998brownian}
Ioannis Karatzas and Steven~E Shreve.
\newblock \emph{Brownian {M}otion}.
\newblock Springer, 1998.

\bibitem[Kidger et~al.(2020)Kidger, Morrill, Foster, and
  Lyons]{kidger2020neural}
Patrick Kidger, James Morrill, James Foster, and Terry Lyons.
\newblock Neural controlled differential equations for irregular time series.
\newblock \emph{arXiv preprint arXiv:2005.08926}, 2020.

\bibitem[Kir{\'a}ly and Oberhauser(2019)]{kiraly2019kernels}
Franz~J Kir{\'a}ly and Harald Oberhauser.
\newblock Kernels for sequentially ordered data.
\newblock \emph{Journal of Machine Learning Research}, 20, 2019.

\bibitem[Kloeden and Platen(2013)]{kloeden2013numerical}
Peter~E Kloeden and Eckhard Platen.
\newblock \emph{Numerical Solution of Stochastic Differential Equations},
  volume~23.
\newblock Springer Science \& Business Media, 2013.

\bibitem[Kowalski(1997)]{kowalski1997nonlinear}
Krzysztof Kowalski.
\newblock Nonlinear dynamical systems and classical orthogonal polynomials.
\newblock \emph{Journal of Mathematical Physics}, 38\penalty0 (5):\penalty0
  2483--2505, 1997.

\bibitem[Kreimer(1999)]{kreimer1999chen}
D~Kreimer.
\newblock Chen’s iterated integral represents the operator product expansion.
\newblock \emph{Advances in Theoretical and Mathematical Physics}, 3\penalty0
  (3):\penalty0 627--670, 1999.

\bibitem[Kubo(1966)]{Kubo66}
R~Kubo.
\newblock The fluctuation-dissipation theorem.
\newblock \emph{Reports on Progress in Physics}, 29\penalty0 (1):\penalty0 255,
  1966.

\bibitem[Kubo(1957)]{kubo1957statistical}
Ryogo Kubo.
\newblock Statistical-mechanical theory of irreversible processes. {I}.
  {G}eneral theory and simple applications to magnetic and conduction problems.
\newblock \emph{Journal of the Physical Society of Japan}, 12\penalty0
  (6):\penalty0 570--586, 1957.

\bibitem[Levin et~al.(2013)Levin, Lyons, and Ni]{levin2013learning}
Daniel Levin, Terry Lyons, and Hao Ni.
\newblock Learning from the past, predicting the statistics for the future,
  learning an evolving system.
\newblock \emph{arXiv preprint arXiv:1309.0260}, 2013.

\bibitem[Liao et~al.(2019)Liao, Lyons, Yang, and Ni]{liao2019learning}
Shujian Liao, Terry Lyons, Weixin Yang, and Hao Ni.
\newblock Learning stochastic differential equations using {R}{N}{N} with log
  signature features.
\newblock \emph{arXiv preprint arXiv:1908.08286}, 2019.

\bibitem[{Liu} et~al.(2020){Liu}, {Xiao}, {Si}, {Cao}, {Kumar}, and
  {Hsieh}]{Liu2020}
X.~{Liu}, T.~{Xiao}, S.~{Si}, Q.~{Cao}, S.~{Kumar}, and C.~J. {Hsieh}.
\newblock How does noise help robustness? {E}xplanation and exploration under
  the {N}eural {S}{D}{E} framework.
\newblock In \emph{2020 IEEE/CVF Conference on Computer Vision and Pattern
  Recognition (CVPR)}, pages 279--287, 2020.
\newblock \doi{10.1109/CVPR42600.2020.00036}.

\bibitem[Lorenzi(2011)]{lorenzi2011optimal}
Luca Lorenzi.
\newblock Optimal {H}{\"o}lder regularity for nonautonomous {K}olmogorov
  equations.
\newblock \emph{Discrete \& Continuous Dynamical Systems-S}, 4\penalty0
  (1):\penalty0 169--191, 2011.

\bibitem[Luko{\v{s}}evi{\v{c}}ius and
  Jaeger(2009)]{lukovsevivcius2009reservoir}
Mantas Luko{\v{s}}evi{\v{c}}ius and Herbert Jaeger.
\newblock Reservoir computing approaches to recurrent neural network training.
\newblock \emph{Computer Science Review}, 3\penalty0 (3):\penalty0 127--149,
  2009.

\bibitem[Lyons and Qian(2002)]{lyons2002system}
T.~Lyons and Z.~Qian.
\newblock \emph{System Control and Rough Paths}.
\newblock Oxford Mathematical Monographs. Clarendon Press, 2002.
\newblock ISBN 9780198506485.

\bibitem[Lyons(2014)]{lyons2014rough}
Terry Lyons.
\newblock Rough paths, signatures and the modelling of functions on streams.
\newblock \emph{arXiv preprint arXiv:1405.4537}, 2014.

\bibitem[Lyons et~al.(2007)Lyons, Caruana, and L{\'e}vy]{lyons2007differential}
Terry~J Lyons, Michael Caruana, and Thierry L{\'e}vy.
\newblock \emph{Differential Equations Driven by Rough Paths}.
\newblock Springer, 2007.

\bibitem[Maass et~al.(2002)Maass, Natschl{\"a}ger, and Markram]{maass2002real}
Wolfgang Maass, Thomas Natschl{\"a}ger, and Henry Markram.
\newblock Real-time computing without stable states: A new framework for neural
  computation based on perturbations.
\newblock \emph{Neural Computation}, 14\penalty0 (11):\penalty0 2531--2560,
  2002.

\bibitem[Mauroy et~al.(2020)Mauroy, Mezi{\'c}, and Susuki]{mauroy2020koopman}
A.~Mauroy, I.~Mezi{\'c}, and Y.~Susuki.
\newblock \emph{The Koopman Operator in Systems and Control: Concepts,
  Methodologies, and Applications}.
\newblock Lecture Notes in Control and Information Sciences Series. Springer
  International Publishing AG, 2020.
\newblock ISBN 9783030357139.

\bibitem[McClelland et~al.(1986)McClelland, Rumelhart, Group,
  et~al.]{mcclelland1986parallel}
James~L McClelland, David~E Rumelhart, PDP~Research Group, et~al.
\newblock Parallel distributed processing.
\newblock \emph{Explorations in the Microstructure of Cognition}, 2:\penalty0
  216--271, 1986.

\bibitem[Morrill et~al.(2020)Morrill, Kidger, Salvi, Foster, and
  Lyons]{morrill2020neural}
James Morrill, Patrick Kidger, Cristopher Salvi, James Foster, and Terry Lyons.
\newblock Neural {C}{D}{E}s for long time series via the log-{O}{D}{E} method.
\newblock \emph{arXiv preprint arXiv:2009.08295}, 2020.

\bibitem[Neal(1996)]{neal1996priors}
Radford~M Neal.
\newblock Priors for infinite networks.
\newblock In \emph{Bayesian Learning for Neural Networks}, pages 29--53.
  Springer, 1996.

\bibitem[Niu et~al.(2019)Niu, Horesh, and Chuang]{niu2019recurrent}
Murphy~Yuezhen Niu, Lior Horesh, and Isaac Chuang.
\newblock Recurrent neural networks in the eye of differential equations.
\newblock \emph{arXiv preprint arXiv:1904.12933}, 2019.

\bibitem[Parthasarathy(2012)]{parthasarathy2012introduction}
Kalyanapuram~R Parthasarathy.
\newblock \emph{An Introduction to Quantum Stochastic Calculus}, volume~85.
\newblock Birkh{\"a}user, 2012.

\bibitem[Pascanu et~al.(2013{\natexlab{a}})Pascanu, Gulcehre, Cho, and
  Bengio]{pascanu2013construct}
Razvan Pascanu, Caglar Gulcehre, Kyunghyun Cho, and Yoshua Bengio.
\newblock How to construct deep recurrent neural networks.
\newblock \emph{arXiv preprint arXiv:1312.6026}, 2013{\natexlab{a}}.

\bibitem[Pascanu et~al.(2013{\natexlab{b}})Pascanu, Mikolov, and
  Bengio]{pascanu2013difficulty}
Razvan Pascanu, Tomas Mikolov, and Yoshua Bengio.
\newblock On the difficulty of training recurrent neural networks.
\newblock In \emph{International Conference on Machine Learning}, pages
  1310--1318, 2013{\natexlab{b}}.

\bibitem[Pavliotis(2014)]{pavliotis2014stochastic}
Grigorios~A Pavliotis.
\newblock \emph{Stochastic Processes and Applications: Diffusion Processes, the
  Fokker-Planck and Langevin Equations}, volume~60.
\newblock Springer, 2014.

\bibitem[Peterson(1967)]{peterson1967formal}
Robert~L Peterson.
\newblock Formal theory of nonlinear response.
\newblock \emph{Reviews of Modern Physics}, 39\penalty0 (1):\penalty0 69, 1967.

\bibitem[Pineda(1987)]{pineda1987generalization}
Fernando~J Pineda.
\newblock Generalization of back-propagation to recurrent neural networks.
\newblock \emph{Physical Review Letters}, 59\penalty0 (19):\penalty0 2229,
  1987.

\bibitem[Rubanova et~al.(2019)Rubanova, Chen, and Duvenaud]{rubanova2019latent}
Yulia Rubanova, Ricky~TQ Chen, and David~K Duvenaud.
\newblock Latent ordinary differential equations for irregularly-sampled time
  series.
\newblock In \emph{Advances in Neural Information Processing Systems}, pages
  5320--5330, 2019.

\bibitem[Rusch and Mishra(2020)]{rusch2020coupled}
T~Konstantin Rusch and Siddhartha Mishra.
\newblock Coupled oscillatory recurrent neural network (co{R}{N}{N}): An
  accurate and (gradient) stable architecture for learning long time
  dependencies.
\newblock \emph{arXiv preprint arXiv:2010.00951}, 2020.

\bibitem[Sch{\"a}fer and Zimmermann(2006)]{schafer2006recurrent}
Anton~Maximilian Sch{\"a}fer and Hans~Georg Zimmermann.
\newblock Recurrent neural networks are universal approximators.
\newblock In \emph{International Conference on Artificial Neural Networks},
  pages 632--640. Springer, 2006.

\bibitem[Scheinberg(1976)]{scheinberg1976uniform}
Stephen Scheinberg.
\newblock Uniform approximation by entire functions.
\newblock \emph{Journal d’Analyse Math{\'e}matique}, 29\penalty0
  (1):\penalty0 16--18, 1976.

\bibitem[Sch{\"o}lkopf et~al.(2001)Sch{\"o}lkopf, Herbrich, and
  Smola]{scholkopf2001generalized}
Bernhard Sch{\"o}lkopf, Ralf Herbrich, and Alex~J Smola.
\newblock A generalized representer theorem.
\newblock In \emph{International Conference on Computational Learning Theory},
  pages 416--426. Springer, 2001.

\bibitem[Sch{\"o}lkopf et~al.(2002)Sch{\"o}lkopf, Smola, Bach,
  et~al.]{scholkopf2002learning}
Bernhard Sch{\"o}lkopf, Alexander~J Smola, Francis Bach, et~al.
\newblock \emph{Learning with Kernels: Support Vector Machines, Regularization,
  Optimization, and Beyond}.
\newblock MIT press, 2002.

\bibitem[Sherstinsky(2020)]{sherstinsky2020fundamentals}
Alex Sherstinsky.
\newblock Fundamentals of recurrent neural network ({R}{N}{N}) and long
  short-term memory ({L}{S}{T}{M}) network.
\newblock \emph{Physica D: Nonlinear Phenomena}, 404:\penalty0 132306, 2020.

\bibitem[Stanley and Fomin(1999)]{stanley1999enumerative}
R.P. Stanley and S.~Fomin.
\newblock \emph{Enumerative Combinatorics: Volume 2}.
\newblock Cambridge Studies in Advanced Mathematics. Cambridge University
  Press, 1999.
\newblock ISBN 9780521560696.

\bibitem[Steinwart and Christmann(2008)]{steinwart2008support}
Ingo Steinwart and Andreas Christmann.
\newblock \emph{Support Vector Machines}.
\newblock Springer Science \& Business Media, 2008.

\bibitem[Sun et~al.(2018)Sun, Tao, and Du]{sun2018stochastic}
Qi~Sun, Yunzhe Tao, and Qiang Du.
\newblock Stochastic training of residual networks: a differential equation
  viewpoint.
\newblock \emph{arXiv preprint arXiv:1812.00174}, 2018.

\bibitem[Sutskever(2013)]{sutskever2013training}
Ilya Sutskever.
\newblock \emph{Training recurrent neural networks}.
\newblock University of Toronto, Ontario, Canada, 2013.

\bibitem[Tanaka et~al.(2019)Tanaka, Yamane, H{\'e}roux, Nakane, Kanazawa,
  Takeda, Numata, Nakano, and Hirose]{tanaka2019recent}
Gouhei Tanaka, Toshiyuki Yamane, Jean~Benoit H{\'e}roux, Ryosho Nakane, Naoki
  Kanazawa, Seiji Takeda, Hidetoshi Numata, Daiju Nakano, and Akira Hirose.
\newblock Recent advances in physical reservoir computing: A review.
\newblock \emph{Neural Networks}, 2019.

\bibitem[Tino(2020)]{tino2020dynamical}
Peter Tino.
\newblock Dynamical systems as temporal feature spaces.
\newblock \emph{Journal of Machine Learning Research}, 21\penalty0
  (44):\penalty0 1--42, 2020.

\bibitem[Toth and Oberhauser(2020)]{toth2020bayesian}
Csaba Toth and Harald Oberhauser.
\newblock Bayesian learning from sequential data using {G}aussian processes
  with signature covariances.
\newblock In \emph{International Conference on Machine Learning}, pages
  9548--9560. PMLR, 2020.

\bibitem[Touboul(2008)]{touboul2008nonlinear}
Jonathan Touboul.
\newblock \emph{Nonlinear and stochastic methods in neurosciences}.
\newblock PhD thesis, 2008.

\bibitem[Volterra(1959)]{volterra1959theory}
Vito Volterra.
\newblock \emph{Theory of Functionals and of Integral and Integro-Differential
  Equations}.
\newblock Dover, 1959.

\bibitem[Weinan(2017)]{weinan2017proposal}
E~Weinan.
\newblock A proposal on machine learning via dynamical systems.
\newblock \emph{Communications in Mathematics and Statistics}, 5\penalty0
  (1):\penalty0 1--11, 2017.

\bibitem[Zhang et~al.(2014)Zhang, Wang, and Liu]{zhang2014comprehensive}
Huaguang Zhang, Zhanshan Wang, and Derong Liu.
\newblock A comprehensive review of stability analysis of continuous-time
  recurrent neural networks.
\newblock \emph{IEEE Transactions on Neural Networks and Learning Systems},
  25\penalty0 (7):\penalty0 1229--1262, 2014.

\bibitem[Zyla and deFigueiredo(1983)]{zyla1983nonlinear}
LV~Zyla and Rui~JP deFigueiredo.
\newblock Nonlinear system identification based on a {F}ock space framework.
\newblock \emph{SIAM Journal on Control and Optimization}, 21\penalty0
  (6):\penalty0 931--939, 1983.

\end{thebibliography}
\end{document}